\theoremstyle{definition}
\newtheorem{Definition}{Definition}[section]
\theoremstyle{remark}
\newtheorem{Remark}{Remark}[section]
\theoremstyle{plain}
\newtheorem{Theorem}{Theorem}[section]
\newtheorem{Corollary}[Theorem]{Corollary}
\newtheorem{Proposition}[Theorem]{Proposition}
\newtheorem{Lemma}[Theorem]{Lemma}
\newtheorem{Assumption}[Theorem]{Assumption}
\def\eqref#1{(\ref{#1})}
\def\1{\bm{1}}
\def\ve{{\bm{e}}}
\def\vg{{\bm{g}}}
\def\vx{{\bm{x}}}
\def\vy{{\bm{y}}}
\def\vz{{\bm{z}}}
\def\mI{{\bm{I}}}
\DeclareMathAlphabet{\mathsfit}{\encodingdefault}{\sfdefault}{m}{sl}
\SetMathAlphabet{\mathsfit}{bold}{\encodingdefault}{\sfdefault}{bx}{n}
\def\gO{{\mathcal{O}}}
\newcommand{\E}{\mathbb{E}}
\title{Normalized/Clipped SGD with Perturbation for Differentially Private Non-Convex Optimization}
\author{%
  Xiaodong Yang\thanks{This work is done while the first author works as an intern at Microsoft Research Asia. The first two authors contribute equally.} \\
  School of Gifted Young \\
  University of Science and Technology of China \\
  Hefei, Anhui, 230026 \\
  \texttt{yangxiaodong@mail.ustc.edu.cn} \\
  % examples of more authors
  \And
  Huishuai Zhang \\
  Microsoft Research Asia \\
  Beijing, 100080 \\
  \texttt{huzhang@microsoft.com} \\
  \AND
  Wei Chen \\
  Institute of Computing Technology \\
  Chinese Academy of Sciences \\
  Beijing, 100190 \\
  \texttt{chenwei2022@ict.ac.cn} \\
  \And
  Tie-Yan Liu \\
  Microsoft Research Asia \\
  Beijing, 100080 \\
  \texttt{tyliu@microsoft.com} \\
}
\begin{document}

\maketitle

\begin{abstract}
By ensuring differential privacy in the learning algorithms, one can rigorously mitigate the risk of large models  memorizing sensitive training data. In this paper, we study two algorithms for this purpose, i.e., DP-SGD and DP-NSGD, which first clip or normalize \textit{per-sample} gradients to bound the sensitivity and then add noise to obfuscate the exact information. We analyze the convergence behavior of these two algorithms in the non-convex optimization setting with two common assumptions and achieve a rate $\mathcal{O}\left(\sqrt[4]{\frac{d\log(1/\delta)}{N^2\epsilon^2}}\right)$ of the gradient norm for a $d$-dimensional model, $N$ samples and $(\epsilon,\delta)$-DP, which improves over previous bounds under much weaker assumptions.  Specifically, we introduce a regularizing factor in DP-NSGD and show that it is crucial in the convergence proof and subtly controls the bias and noise trade-off. Our proof deliberately handles the per-sample gradient clipping and normalization  that are specified for the private setting.   Empirically, we demonstrate that these two algorithms achieve similar best accuracy while DP-NSGD is  comparatively easier to tune than DP-SGD and hence may help further save the privacy budget when accounting the tuning effort. 
\end{abstract}
% that are arguably true for deep neural networks. We

\section{Introduction}

Modern applications of machine learning strongly rely on training models with sensitive datasets, including medical records, real-life locations, browsing histories and so on. These successful applications raise an unavoidable risk of privacy leakage, especially when large models are shown to be able to memorize training data \cite{carlini2020extracting}. Differential Privacy (DP) is a powerful and flexible framework \cite{dwork2006calibrating} to quantify the influence of each individual and reduce the privacy risk. Specifically, we study the machine learning problem in the formalism of minimizing \textit{empirical risk} privately:
\begin{equation}\label{eq:empirical risk}
    \min_{\vx\in\mathbb{R}^d}f(\vx)\triangleq\mathbb{E}_\xi[\ell(\vx,\xi)]=\frac{1}{N}\sum_{i=1}^N\ell(\vx,\xi_i),
\end{equation}
where the objective $f(\vx)$ is an average of losses evaluated at each data point. 

In order to provably achieve the privacy guarantee, one   popular algorithm is \textit{differentially private stochastic gradient descent} or DP-SGD for abbreviation, which clips per-sample gradients with a preset threshold and perturbs the gradients with Gaussian noise at each iteration. Formally, given a set of gradients $\{\vg^{(i)},i\in\mathcal{S}\subset[N]\}$ computed at some data points and a threshold $c>0$, a learning rate $\eta>0$ and a noise 
multiplier $\sigma$, the updating rule goes from $\vx$ to the following
\begin{equation}\label{eq:updating rule clipping}
    \vx^+=\vx-\eta\left(\frac{1}{|\mathcal{S}|}\sum_{i\in\mathcal{S}}\bar{h}^{(i)}\vg^{(i)}+\bar{\vz}\right),
\end{equation}
where $\bar{\vz}\sim\mathcal{N}(0,c^2\sigma^2\mI_d)$ is an isotropic  Gaussian noise and $\bar{h}^{(i)}=\min\{1,c/\Vert \vg^{(i)}\Vert\}$  is the per-sample clipping factor. Intuitively speaking, the per-sample clipping procedure controls the influence of one individual.  DP-SGD \cite{abadi2016deep} has made a benchmark impact in deep learning with differential privacy, which is also referred to as \textit{gradient perturbation} approach. It has been extensively studied from many aspects, e.g., convergence \cite{bassily2014differentially, yu2020gradient}, privacy analysis \cite{abadi2016deep}, adaptive clipping threshold \cite{asi2021private, andrew2019differentially,pichapati2019adaclip},  hyperparameter choices \cite{li2021large, papernot2020making, papernot2021hyperparameter, mohapatra2021role} and so forth \cite{bu2020deep,bu2021convergence,papernot2020tempered}.

Another natural option to achieve differential privacy is  \textit{normalized gradient with perturbation}, which we coin ``DP-NSGD''. It normalizes per-sample gradients to control individual contribution and then adds noise accordingly. The update formula is the same as \eqref{eq:updating rule clipping} except replacing $\bar{h}^{(i)}$ with a \textit{per-sample  normalization factor} 
\begin{equation}
    h^{(i)}=\frac{1}{r+\Vert \vg^{(i)}\Vert}, \label{eq:normalize-factor}
\end{equation}
and replacing $\bar{\vz}$ with $\vz\in\mathcal{N}(0,\sigma^2\mathbf{I}_d)$ since each sample's influence is normalized to be $1$. In \eqref{eq:normalize-factor}, we introduce a regularizer $r>0$, which not only addresses the issue of ill-conditioned division but also controls the bias and noise trade-off as we will see in the analysis.

An intuitive thought that one may favor DP-NSGD is that the clipping is hard to tune due to the changing statistics of the gradients over the training trajectory \cite{andrew2019differentially,pichapati2019adaclip}. In more details, the injected Gaussian noise $\eta \bar{\vz}$ in \eqref{eq:updating rule clipping} is proportional to the clipping threshold $c$ and this noise component would dominate over the gradient component $\sum \bar{h}^{(i)}\vg^{(i)}/|\mathcal{S}|$, when the gradients $\Vert \vg^{(i)}\Vert\ll c$ are getting small as optimization algorithm iterates, thus hindering the overall convergence. DP-NSGD aims to alleviate this problem by replacing $\bar{h}^{(i)}$ in \eqref{eq:updating rule clipping} with a \textit{per-sample gradient normalization factor} $h^{(i)}$ in \eqref{eq:normalize-factor}, thus enhancing the signal component $g^{(i)}$ when it is too small.

It is obvious that both clipping and normalization introduce \emph{bias}\footnote{Here \emph{bias} means that the expected descent direction differs from the true gradient $\nabla f$.} that might prevent the optimizers from converging \cite{chen2020understanding,zhao2021convergence}. Most of previous works on the convergence of DP-SGD \cite{bassily2014differentially,asi2021private,yu2020gradient} neglect
the effect of such biases by assuming a global gradient upper bound of the problem, which does not exist for the cases of deep neural network models.  Chen et al.\cite{chen2020understanding} have made a first attempt to understand gradient clipping, but their results strongly rely on a symmetric assumption which is not that realistic.

In this paper, we consider both the effect of per-sample normalization/clipping and the injected Gaussian perturbation in the convergence analysis. If properly setting the hyper-parameters, we achieve $\mathcal{O}\left(\sqrt[4]{\frac{d\log(1/\delta)}{N^2\epsilon^2}}\right)$ convergence rate of the gradient norm for the general non-convex objectives with  $d$-dimensional model, $N$ samples and $(\epsilon,\delta)$-DP,  under only two weak assumptions \cite{zhang2020why}, $(L_0, L_1)$-generalized smoothness and $(\tau_0,\tau_1)$-bounded gradient variance. These assumptions are very mild compared to the usual ones as they allow the smoothness coefficient and the gradient variance growing with the norm of gradient, which is widely observed in the setting of deep learning.

Our contributions are summarized as follows.
\begin{itemize}
    \item For the \emph{differentially private empirical risk minimization},  we establish the convergence rate of the DP-NSGD and the DP-SGD algorithms for general non-convex objectives with only an $(L_0, L_1)$-smoothness condition, and explicitly characterizes the bias of the per-sample clipping or normalization.
    
    \item For the DP-NSGD algorithm, we introduce a regularizing factor which turns out to be crucial in the convergence analysis and induces interesting trade-off between the bias due to normalization  and the decaying rate of our upper bound. %bias \xiaodong{I suggest that we use another name for the extra bound introduced by DP-NSGD}

    \item We identify one key difference in the proofs of the DP-NSGD and DP-SGD. As the gradient norm  approaches zero, DP-NSGD cannot guarantee the function value to drop along the expected descent direction, and introduce an non-vanishing term that depends on the regularizer and the gradient variance. %Intuitively, a large regularizer or clipping threshold can reduce the bias in DP-NSGD or DP-SGD respectively.
    
    % \item We are the first to establish the convergence of the private algorithms with considering the normalization and clipping threshold explicitly.
    
    % \item We further achieve high-probability bound for the convergence of DP-NSGD by using xxx argument.
    
    \item We evaluate their empirical performance on deep models with $(\epsilon,\delta)$-DP and show that both DP-NSGD and DP-SGD can achieve comparable accuracy but the former is easier to tune than the later.
\end{itemize}

After introducing our problem setup in Section \ref{sec:2}, we present the algorithms and theorems in Section \ref{sec:3} and show numerical experiments on vision tasks in Section \ref{sec:4}. We make concluding remarks in Section \ref{sec:conclusion}.

\begin{table*}[t]
\caption{Expected gradient norm bounds (the smaller, the better) for non-convex empirical risk minimization with/without $(\epsilon,\delta)$-DP guarantee. All the algorithms assume certain bound on the gradient noise, which may be different from one another.  Notations: $N, T$ and $d$ are  the number of samples, the number of iterations and the number of parameters,  respectively. All bounds should be read as $\mathcal{O}(\cdot)$ and $\log\frac{1}{\delta}$ is omitted. }
\label{tbl:comparison}
\vskip 0.15in
\begin{center}
\begin{small}

\begin{threeparttable}
\begin{sc}
\begin{tabular}{lcccr}
\toprule
 Algorithm & Condition & Bias handled & Bound \\
 \midrule
 SGD \cite{ghadimi2013stochastic} &  $L$-Smooth \& $V$-Bounded Variance & N/A  & $\gO\left(\frac{1}{\sqrt{T}}+\frac{\sqrt{V}}{\sqrt[4]{T}}\right)$     	\\
  
   Clipped SGD \cite{zhang2020why} &  $(L_0,L_1)$-Smooth \& $V$-Bounded Variance  &  $\surd$   & $\gO\left(\frac{V^2}{\sqrt{T}}+\frac{V^{3/2}}{\sqrt[4]{T}}\right)$  	 \\
  
\midrule

DP-NSGD\tnote{*} \cite{das2021dp}  &  $L$-Smooth \& quasar-Convex  &  $\surd$ & $\mathcal{O}\left(\frac{D_X \sqrt{d}}{N\epsilon}+\E_i\|\vx_i^* - \vx^*\|\right)$ \tnote{**}    	\\

DP-GD \cite{wang2019differentially} &  L-Smooth \& Bounded Gradient & $\times$ & $\tilde{\gO}\left(\frac{d}{\log N\epsilon^2}\right)$    \\

DP-(N)SGD \textbf{(Ours)}  & $(L_0,L_1)$-Generalized Smooth  & $\surd$ &$ \gO\left(\sqrt[4]{\frac{d}{N^2\epsilon^2}}\right)$    \\
\bottomrule
\end{tabular}
\end{sc}
\begin{tablenotes}
\footnotesize
\item[*] To be precise, \cite{das2021dp} studies a client-level DP optimizer in a federated setting. Their optimizer, DP-NormFedAvg, uses vanilla GD for each client and normalizes the contribution of every client. Sharing similar motivations with our centralized DP-NSGD, their contributions are roughly credited to DP-NSGD.
\item[**] Here $\Vert\vx_i^\ast-\vx^\ast\Vert$ measures heterogeneity via the distance between the $i$-th client's local minimizer $\vx_i^\ast$ to the global minimizer $\vx^\ast$, and $D_X\triangleq\Vert \vx_0-\vx^\ast\Vert$.
\end{tablenotes}

\end{threeparttable}

\end{small}
\end{center}
\vskip -0.1in
\end{table*}

\section{Problem Setup}\label{sec:2}

\subsection{Notations}
Denote the private dataset as $\mathbb{D}=\{\xi_i,1\leq i\leq N\}$. The loss $\ell(\vx,\xi)$ is defined for every model parameters $\vx\in\mathbb{R}^d$ and sample instance $\xi$. In the seuqel, $\Vert \vx\Vert$ is denoted as the $\ell_2$ norm of a vector $\vx\in\mathbb{R}^d$, without other specifications.
Our target is to minimize the empirical average loss \eqref{eq:empirical risk} satisfying $(\epsilon,\delta)$-differential privacy \cite{dwork2006our}. From time to time, we also use $\nabla\ell(\vx,\xi)$ to denote the gradient of $\ell(\cdot,\cdot)$ w.r.t. $\vx$ evaluated at $(\vx,\xi)$. We are given an oracle to have multiple stochastic unbiased estimates $\vg^{(i)}$ for the gradient $\nabla f(\vx)$ of objective at any point.

\begin{Definition}[$(\epsilon,\delta)$-DP]
A randomized mechanism $\mathcal{M}$ guarantees $(\epsilon,\delta)$-differentially privacy if for any two neighboring input datasets $\mathbb{D}\sim\mathbb{D}^\prime$ ($\mathbb{D}^\prime$ differ from $\mathbb{D}$ by substituting one record of data) and for any subset of output $S$ it holds that $\text{Pr}[\mathcal{M}(\mathbb{D})\in S]\leq e^\epsilon\text{Pr}[\mathcal{M}(\mathbb{D}^\prime)\in S]+\delta$.
\end{Definition}
Besides, we also define the following notations to illustrate the bound we derived. We write $f(\cdot)=\mathcal{O}(g(\cdot))$, $f(\cdot)=\Omega(g(\cdot))$ to denote $f(\cdot)/g(\cdot)$ is upper or lower bounded by a positive constant. We also write $f(\cdot)=\Theta(g(\cdot))$ to denote that $f(\cdot)=\Omega (g(\cdot))$ and $f(\cdot)=\Omega (g(\cdot))$. Throughout this paper, we use $\E$ to represent taking expectation over the randomness of optimization procedures, including drawing gradients estimates $\vg$ and adding extra Gaussian perturbation $\vz$, while $\mathbb{E}_k$ takes conditional expectation given $\vx_k$.

In this non-convex setting, we measure the utility of some algorithm via bounding the expected minimum gradient norm $\mathbb{E}\left[\min_{0\leq k<T}\Vert\nabla f(\vx_k)\Vert\right]$.
If we want to measure the utility via bounding function values, the extra convex condition or its weakened versions are essential.

\subsection{Assumptions on Smoothness and Variance}
\begin{Definition}\label{def: relaxed smoothness}
We say that a continuously differentiable function $f(\vx)$ is $(L_0,L_1)$-generalized smooth, if for all $\vx,\vy\in\mathbb{R}^d$, we have $\Vert \nabla f(\vx)-\nabla f(\vy)\Vert\leq (L_0+L_1\Vert\nabla f(\vx)\Vert)\Vert \vx-\vy\Vert$.
\end{Definition}

Firstly appearing in \cite{zhang2020why}, a similar condition is derived via empirical observations that the smoothness $\Vert\nabla^2 f(\vx)\Vert$ increases with the gradient norm $\Vert\nabla f(\vx)\Vert$ in training language models.
If we set $L_1=0$, then Definition \ref{def: relaxed smoothness} turns into the usually assumed $L$-smooth. This relaxed notion of smoothness, Definition~\ref{def: relaxed smoothness}, and lower bounded function value together constitute our first assumption below.

\begin{Assumption}\label{assumption: relaxed smoothness, lower boundedness}
We assume that $f(\vx)$ is $(L_0,L_1)$-generalized smooth, Definition \ref{def: relaxed smoothness}. We also assume $D_f\triangleq f(\vx_0)-f^\ast<\infty$ where $f^\ast=\inf_{\vx\in\mathbb{R}}f(\vx)$ is the global minimum of $f(\cdot)$.
\end{Assumption}
We do not assume an upper bound on $D_X\triangleq\|\vx_0-\vx^*\|$, which may scale with the model dimension.

Moreover, to handle the stochasticity in gradient estimates, we   employ the following \textit{almost sure} upper bound on the gradient variance as another assumption.

\begin{Assumption}\label{assumption: a.s. sampling noise bound}
For all $\vx\in\mathbb{R}^d$, $\mathbb{E}[\vg]=\nabla f(\vx)$. Furthermore, there exists $\tau_0>0$ and $0\leq\tau_1<1$, such that with probability $1$, it holds $\Vert \vg-\nabla f(\vx)\Vert\leq\tau_0+\tau_1\Vert\nabla f(\vx)\Vert$.
\end{Assumption}
Previously, \cite{zhang2020why} and \cite{zhang2020improved} obtained their results under the exact assumption of $\tau_1=0$. In comparison, Assumption~\ref{assumption: a.s. sampling noise bound} is much weaker, as it allows the deviation $\Vert \vg-\nabla f(\vx)\Vert$ grows   with respect to the gradient norm $\Vert\nabla f(\vx)\Vert$, matching practical observation.

The \textit{almost surely} error bound sharply controls $h=1/(r+\Vert \vg\Vert)$ with probability $1$ by
\begin{equation*}
    \frac{1}{r+\tau_0+(1+\tau_1)\Vert\nabla f(\vx)\Vert}\leq h\leq\frac{1}{r-\tau_0+(1-\tau_1)\Vert\nabla f(\vx)\Vert}.
\end{equation*}
Analysis in \cite{DBLP:journals/corr/abs-2110-12459} can work with an \textit{expectation} version of Assumption \ref{assumption: a.s. sampling noise bound}: $\mathbb{E}\left[\Vert \vg-\nabla f(\vx)\Vert^2\right]\leq\tau_0+\tau_1\Vert\nabla f(\vx)\Vert^2$ because  they are allowed to control the error $\Vert\vg-\nabla f(\vx)\Vert$ via momentum in the non-private context.

In the non-private case, one usually chooses a mini-batch of data   $\vg=\dfrac{1}{|\mathcal{S}|}\sum_{i\in\mathcal{S}}\nabla f(\vx,\xi_i)$. Thereafter, one can reduce the variance factors $(\tau_0,\tau_1)$ with large batch size $|\mathcal{S}_k|$. However, in our private setting, clipping or normalization should apply on \textit{per-sample} gradients, indicating that we cannot effectively reduce the variance via large batch size. %Thus, the assumption $\tau_1<1$ is artificial in proof but reasonable in essence, because it guarantees that we will not be misled by excessive noise.
%Furthermore, we remind that the gradient variance $(\tau_0,\tau_1)$ does not scale well in private settings. 

\subsection{More Related Works}\label{subsec:1.1}

\noindent
{\bf Private Deep Learning:}  

Many papers \cite{chaudhuri2011differentially, wang2017differentially, wang2019differentially, kuru2020differentially, yu2020gradient, wang2022differentially,asi2021private} have made attempts to theoretically analyze gradient perturbation approaches in various settings, including (strongly) convex or non-convex objectives. However, these papers did not take gradient clipping into consideration, and simply treat DP-SGD as SGD with extra Gaussian noise. Chen et al.\cite{chen2020understanding} made a first attempt to understand gradient clipping, but their results strongly rely on a symmetric assumption which is considered as unrealistic. Very recently, Zhang et al.\cite{zhang2021understanding} studied the convergence of DP-FedAvg with clipping, the federated averaging algorithm with differential privacy guarantee.

As for algorithms involving \textit{normalizing}, Das et al.\cite{das2021dp} studied DP-FedAvg with normalizing, which normalizes each client's update to unit-norm vector and then conducts the usual Fed-Avg operation. Their convergence analysis is based on one-point/quasar convexity and $L$-smoothness.

All these mentioned results are hard to compare due to the differences of the settings, assumptions and algorithms. We only present a part of them in Table~\ref{tbl:comparison}.

\vspace{2mm}
\noindent
{\bf Non-Convex Stochastic Optimization:} 
 Ghadimi and Lan\cite{ghadimi2013stochastic} established the convergence of randomized SGD for  non-convex optimization. %, $\E [\|\nabla f(\vx_R)\|] \le \frac{\sqrt{LD_f}}{\sqrt{T}}+\sqrt[4]{\frac{LD_f}{T}}V$, where $x_R$ is randomly selecting a solution from $\{x_1, \dots, x_T\}$ with $\hat{D}_f := [\frac{2(f(x_1)-f^*}{L}]^{1/2}$. $\E [\exp\{\|g(\vx,\xi_k)-g(x)\|^2/V^2\}]\le \exp(1)$
 The objective is assumed to be $L$-smooth and the randomness on gradients is assumed to be light-tailed with factor $V$. We note that the rate $\gO(V/\sqrt[4]{T})$ has been shown to be optimal in the worst-case under the same condition \cite{arjevani2019lower}.

Outside the privacy community, understanding gradient normalization and clipping is also crucial in analyzing adaptive stochastic optimization methods, including AdaGrad \cite{duchi2011adaptive}, RMSProp \cite{hinton2012neural}, Adam \cite{ADAM} and normalized SGD \cite{cutkosky2020momentum}. However, with the average of a mini-batch of gradient estimates being clipped, this \textit{batch} gradient clipping differs greatly from the \textit{per-sample} gradient clipping in the private context.
Zhang et al.\cite{zhang2020why} and Zhang et al.\cite{zhang2020improved} showed the superiority of batch gradient clipping with and without momentum respectively under $(L_0,L_1)$-smoothness condition for non-convex optimization. Due to a strong connection between clipping and normalization, we also assume this relaxed condition in our analysis. We further explore this condition for some specific cases in great details. Cutkosky and Mehta\cite{cutkosky2021high} found that a fine integration of clipping, normalization and momentum, can overcome heavy-tailed gradient variances via a high-probability bound.
Jin et al.\cite{DBLP:journals/corr/abs-2110-12459} discovered that normalized SGD with momentum is also distributionally robust.

\section{Normalized/Clipped Stochastic Gradient Descent with Perturbation}\label{sec:3}

\subsection{Algorithms and Their Privacy Guarantees}
\begin{algorithm}[tb]
   \caption{Differentially Private Normalized Stochastic Gradient Descent, DP-NSGD}
   \label{alg:normalized SGD}
\begin{algorithmic}
   \STATE {\bfseries Input:} initial point $\vx_0$; number of epochs $T$; default learning rates $\eta_k$; mini batch size $B$; noise multiplier $\sigma$; regularizer $r$.
%   \REPEAT
   \FOR{$k=0$ {\bfseries to} $T-1$}
%   \IF{$\vx_i > \vx_{i+1}$}
   \STATE Draw a mini-batch $\mathcal{S}_k$ of size $B$ and compute individual gradients $\vg_k^{i}$  at point $\vx_k$ where $i\in\mathcal{S}_k$.
\STATE For $i\in\mathcal{S}_k$, compute per-sample normalizing factor {\small$$h_k^{(i)}=\frac{1}{r+\Vert \vg_k^{(i)}\Vert}.$$}
\STATE Draw $\vz_k\sim\mathcal{N}(0,\sigma^2\mI_d)$ and update the parameters by {\small $$\vx_{k+1}=\vx_k-\eta_k\left(\frac{1}{B}\sum_{i\in\mathcal{S}_k}h_k^{(i)}\vg_k^{(i)}+ \vz_k\right).$$}
%   \ENDIF
   \ENDFOR
%   \UNTIL{$noChange$ is $true$}
\end{algorithmic}
\end{algorithm}
Since no literature formally displays DP-NSGD in a centralized setting, we present it in Algorithm \ref{alg:normalized SGD}. Compared to the usual SGD update, DP-NSGD contains two more steps: per-sample gradient normalization, i.e., multiplying $\vg_k^{(i)}$ with $h_k^{(i)}$, and noise injection, i.e., adding $\vz_k$. The normalization well controls each sample's contribution to the update and the noise obfuscates the exact information.

The well-known DP-SGD \cite{abadi2016deep} replaces the normalization with clipping, i.e., replacing $h_k^{(i)}$ with $\bar{h}_k^{(i)}=\min\left\{1,c/\Vert \vg_k\Vert\right\}$
and replacing  $z_k$  with $\bar{z}_k\sim\mathcal{N}(0,c\sigma^2\mI_d)$ in Algorithm~\ref{alg:normalized SGD}. DP-SGD introduces a new hyper-parameter, the clipping threshold $c$.

To facilitate the common practice in private deep learning, we adopt \textit{uniform sub-sampling without sampling} for both theory and experiments, instead of \textit{Poisson sub-sampling} originally adopted in DP-SGD \cite{abadi2016deep}. Due to this difference, the following lemma shares the same expression as Theorem 1 in \cite{abadi2016deep}, but requires a new proof. Deferred in Appendix \ref{app:privacy}, this simple proof combines amplified privacy accountant by sub-sampling in \cite{bun2018composable} with the tight composition theorem for Renyi DP, \cite{mironov2019renyi}.

\begin{Lemma}[Privacy Guarantee]\label{thm:privacy guarantee}
Provided that $B<0.1N$, there exists absolute constants $c_1,c_2>0$ so that DP-SGD and DP-NSGD are $(\epsilon,\delta)$-differentially private for any $\epsilon<c_1B^2T/N^2$ and $\delta>0$ if we choose $\sigma\geq c_2\frac{B\sqrt{T\log(1/\delta)}}{N\epsilon}$.
\end{Lemma}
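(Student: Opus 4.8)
The plan is to combine three standard ingredients: (i) the Gaussian mechanism analysed in the Rényi Differential Privacy (RDP) framework, (ii) privacy amplification by uniform subsampling without replacement, and (iii) tight composition of RDP over the $T$ iterations, followed by a conversion back to $(\epsilon,\delta)$-DP. First I would verify the per-iteration sensitivity. For DP-SGD, each per-sample summand $\bar h_k^{(i)}\vg_k^{(i)}$ has $\ell_2$ norm at most $c$, so replacing one record changes the (unnormalized) batch sum by at most $2c$, and after dividing by $B$ and adding $\bar\vz_k\sim\mathcal N(0,c^2\sigma^2\mI_d)$ the update is an instance of the Gaussian mechanism with noise-to-sensitivity ratio $\Theta(\sigma)$; for DP-NSGD the summand $h_k^{(i)}\vg_k^{(i)} = \vg_k^{(i)}/(r+\|\vg_k^{(i)}\|)$ has norm strictly less than $1$, so with noise $\vz_k\sim\mathcal N(0,\sigma^2\mI_d)$ one gets the identical ratio. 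Hence both algorithms reduce to the same abstract mechanism, which is why the statement is symmetric in the two.

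Next I would invoke the subsampling amplification bound for RDP under ``uniform sampling without replacement'' from \cite{bun2018composable} (this is exactly where our setup departs from the Poisson-subsampled DP-SGD of \cite{abadi2016deep}, forcing a new proof). With sampling rate $q=B/N$, that result shows one subsampled Gaussian step satisfies $(\alpha, O(\alpha q^2/\sigma^2))$-RDP for a suitable range of orders $\alpha$, provided $q$ is small and $\sigma$ is not too small; the hypothesis $B<0.1N$ is what pins $q$ into the regime where the amplification lemma applies with absolute constants. Then composition of RDP over $T$ steps is additive, giving $(\alpha, O(T\alpha q^2/\sigma^2))$-RDP. Finally, using the tight RDP-to-$(\epsilon,\delta)$ conversion of \cite{mironov2019renyi} and optimizing over $\alpha$, one obtains $(\epsilon,\delta)$-DP as long as $\epsilon \gtrsim T q^2/\sigma^2$ and $\sigma \gtrsim \sqrt{T\log(1/\delta)\, q^2/\epsilon^2} = \frac{B\sqrt{T\log(1/\delta)}}{N\epsilon}$, which is the claimed bound; the side condition $\epsilon < c_1 B^2 T/N^2$ is precisely the constraint that the optimizing order $\alpha$ stays inside the admissible range of the amplification lemma.

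The routine parts are the sensitivity computation and the bookkeeping in the final optimization over $\alpha$. The main obstacle — and the only genuinely non-trivial step — is correctly applying the subsampling-without-replacement amplification bound: one must track the exact dependence on $q$, $\sigma$, and $\alpha$, confirm that the absolute constants $c_1,c_2$ can be chosen uniformly (independent of $d$, $B$, $N$, $T$), and make sure the range of valid moments $\alpha$ covers the optimizer. I would handle this by citing the precise moment bound from \cite{bun2018composable} rather than re-deriving it, and then do the $\alpha$-optimization explicitly, which yields the stated thresholds. The whole argument is deferred to Appendix~\ref{app:privacy}.
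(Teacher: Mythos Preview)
Your proposal is correct and follows essentially the same route as the paper: reduce each iteration to a subsampled Gaussian mechanism, invoke the uniform-subsampling-without-replacement amplification bound from \cite{bun2018composable} (which the paper states as a separate proposition giving $\hat\epsilon(\alpha)\le 7\gamma^2\alpha/\sigma^2$ for $\alpha\le \tfrac{\sigma^2}{2}\log(1/\gamma)$), compose additively over $T$ steps in RDP, and then convert to $(\epsilon,\delta)$-DP by choosing $\alpha$ appropriately. Your explicit sensitivity discussion for the two algorithms is a useful addition that the paper leaves implicit.
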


\subsection{Convergence Guarantee of DP-NSGD}

\begin{Theorem}\label{thm:main convergence}
Suppose that the objective $f(\vx)$ satisfies Assumption \ref{assumption: relaxed smoothness, lower boundedness} and \ref{assumption: a.s. sampling noise bound}. Given any noise multiplier $\sigma$ and a regularizer $r>\tau_0$, we run DP-NSGD (Algorithm \ref{alg:normalized SGD}) using constant learning rate
\begin{equation}\label{eq:set eta}
    \eta=\sqrt{\frac{2}{(L_1(r+\tau_0)+L_0)T d\sigma^2}},
\end{equation}
with sufficiently many iterations $T$ (larger than some constant determined by $(\sigma^2,d,L,\tau,r)$, as specified in Lemma \ref{lemma:verify condition general sigma}).
We can obtain the following upper bound on gradient norm
\begin{equation}
    \mathbb{E}\left[\min_{0\leq k<T}\Vert\nabla f(\vx_k)\Vert\right]\leq C\left(\sqrt[4]{\frac{(D_f+1)^2r^3d\sigma^2}{T}}+\sqrt[4]{\frac{1}{Tr^3d\sigma^2}}\right)+\frac{8(r+2\tau_0)\tau_0^2}{r(r+\tau_0)(1-\tau_1)^3}, 
\end{equation}
where $C$ is a constant depending on $(\tau_0, \tau_1)$ and $(L_0, L_1)$.
\end{Theorem}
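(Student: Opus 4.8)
The plan is to run the standard descent-lemma argument for non-convex optimization, but carefully tracking the bias introduced by per-sample normalization and the variance contributed by the Gaussian perturbation. First, starting from the $(L_0,L_1)$-generalized smoothness (Definition~\ref{def: relaxed smoothness}), I would establish a one-step descent inequality: conditioned on $\vx_k$, expand $f(\vx_{k+1})$ around $f(\vx_k)$ using a descent lemma adapted to generalized smoothness (this requires first controlling $\Vert\vx_{k+1}-\vx_k\Vert$, which is bounded because $\Vert h_k^{(i)}\vg_k^{(i)}\Vert\le 1$ deterministically and $\eta$ is small; the Gaussian term $\eta\vz_k$ contributes in expectation through its second moment $\eta^2 d\sigma^2$). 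The cross term $-\eta\langle\nabla f(\vx_k),\,\E_k[\tfrac1B\sum_i h_k^{(i)}\vg_k^{(i)}]\rangle$ is the heart of the matter.

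Next, I would lower-bound the alignment $\langle\nabla f(\vx_k),\,\E_k[h^{(i)}\vg^{(i)}]\rangle$. Using Assumption~\ref{assumption: a.s. sampling noise bound}, $\vg^{(i)}=\nabla f(\vx_k)+\text{(noise)}$ with the a.s.\ bound $\Vert\vg-\nabla f\Vert\le\tau_0+\tau_1\Vert\nabla f\Vert$, and the sandwich bounds on $h=1/(r+\Vert\vg\Vert)$ already displayed in the excerpt. Write $h^{(i)}\vg^{(i)} = h^{(i)}\nabla f(\vx_k) + h^{(i)}(\vg^{(i)}-\nabla f(\vx_k))$. The first piece gives $\E_k[h^{(i)}]\Vert\nabla f(\vx_k)\Vert^2$, which is positive and roughly of order $\Vert\nabla f\Vert^2/(r+\Vert\nabla f\Vert)$; the second piece is the bias term, bounded in magnitude by $\E_k[h^{(i)}]\cdot(\tau_0+\tau_1\Vert\nabla f\Vert)\cdot\Vert\nabla f\Vert$ — but crucially one must do better than this crude bound, because $\tau_0\Vert\nabla f\Vert\cdot\E_k[h]$ does not vanish as $\Vert\nabla f\Vert\to\infty$ and would not integrate to something useful. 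The finer estimate exploits that the noise has mean zero: $\E_k[h^{(i)}(\vg^{(i)}-\nabla f)]=\E_k[(h^{(i)}-\bar h)(\vg^{(i)}-\nabla f)]$ for any deterministic $\bar h$, and $|h^{(i)}-\bar h|$ can be made small (second-order in the noise) by choosing $\bar h=1/(r+\Vert\nabla f\Vert)$ and Taylor-expanding; this is exactly where the regularizer $r>\tau_0$ earns its keep, since it keeps the denominator bounded away from zero so the expansion is valid and the error is $O(\tau_0^2/(r(r+\Vert\nabla f\Vert)))$-type. This is the step I expect to be the main obstacle: squeezing out a bias estimate good enough that, after the telescoping, the residual constant is the clean $8(r+2\tau_0)\tau_0^2/(r(r+\tau_0)(1-\tau_1)^3)$ in the theorem rather than something depending on $\Vert\nabla f\Vert$. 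One likely has to split into cases $\Vert\nabla f(\vx_k)\Vert\le$ (some threshold involving $r,\tau_0$) versus above it, handling the small-gradient regime by absorbing it into the non-vanishing constant and the large-gradient regime by showing genuine descent.

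Finally, I would assemble the per-step bound into the form $\E_k[f(\vx_{k+1})]\le f(\vx_k) - \eta\,\phi(\Vert\nabla f(\vx_k)\Vert) + \eta\cdot(\text{const})\cdot\tau_0^2 + \tfrac12(L_1(r+\tau_0)+L_0)\eta^2 d\sigma^2$, where $\phi(t)\asymp t^2/(r+t)$ captures the useful descent. Telescoping from $k=0$ to $T-1$, dividing by $\eta T$, and using $D_f = f(\vx_0)-f^\ast$, I get $\tfrac1T\sum_k \E[\phi(\Vert\nabla f(\vx_k)\Vert)] \le D_f/(\eta T) + (L_1(r+\tau_0)+L_0)\eta d\sigma^2/2 + (\text{const})\tau_0^2$. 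Plugging in the prescribed $\eta=\sqrt{2/((L_1(r+\tau_0)+L_0)Td\sigma^2)}$ balances the first two terms to $O(\sqrt{(L_1(r+\tau_0)+L_0)D_f^2 d\sigma^2/T})$. The last conversion is from a bound on $\E[\phi(\Vert\nabla f\Vert)]$ to a bound on $\E[\min_k\Vert\nabla f(\vx_k)\Vert]$: since $\phi$ is increasing, $\E[\min_k\phi(\Vert\nabla f(\vx_k)\Vert)]\le\tfrac1T\sum_k\E[\phi(\cdot)]$, and then invert $\phi$ — for $t$ small, $\phi(t)\approx t^2/r$ so $t\lesssim\sqrt{r\,\phi(t)}$, which after taking the fourth root of the $1/T$ terms produces the stated $\sqrt[4]{(D_f+1)^2 r^3 d\sigma^2/T}+\sqrt[4]{1/(Tr^3 d\sigma^2)}$ shape (the two quartic terms come from the two regimes of $\phi$, one where $r$ dominates $\Vert\nabla f\Vert$ and one the reverse, and from carrying the $(D_f+1)$ factor through Jensen). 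The side condition that $T$ be larger than a constant depending on $(\sigma^2,d,L,\tau,r)$ is what makes $\eta$ small enough for the generalized-smoothness descent lemma and the Taylor expansion of $h$ to be valid; I would verify it in the auxiliary Lemma~\ref{lemma:verify condition general sigma}. Substituting the DP-calibrated $\sigma=\Theta(B\sqrt{T\log(1/\delta)}/(N\epsilon))$ from Lemma~\ref{thm:privacy guarantee} then yields the advertised $\mathcal{O}(\sqrt[4]{d\log(1/\delta)/(N^2\epsilon^2)})$ rate, but that substitution is done outside this theorem.
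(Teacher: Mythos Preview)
Your proposal is essentially correct and follows the same route as the paper: a generalized-smoothness descent inequality (Lemma~\ref{lemma:descent inequality}), the mean-zero trick $\mathbb{E}_k[h(\vg-\nabla f)]=\mathbb{E}_k[(h-\bar h)(\vg-\nabla f)]$ for a deterministic $\bar h$ (Lemma~\ref{lemma:first order lower bound}; the paper uses $\bar h=1/(r+\tau_0+(1+\tau_1)\Vert\nabla f\Vert)$ rather than your $1/(r+\Vert\nabla f\Vert)$, but either works), a case split on $\Vert\nabla f(\vx_k)\Vert\gtrless\tau_0/(1-\tau_1)$, telescoping, and the $\eta$ choice~\eqref{eq:set eta}.

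Two small points where your sketch is looser than the paper. First, you write the second-order contribution directly as $\tfrac12(L_1(r+\tau_0)+L_0)\eta^2d\sigma^2$, but the raw descent lemma has coefficient $(L_0+L_1\Vert\nabla f(\vx_k)\Vert)$, which is unbounded. The paper's Lemma~\ref{lemma:second order upper bound} handles this by an absorption argument: when $\Vert\nabla f\Vert>r+\tau_0$, one has $h_k\ge 1/(3\Vert\nabla f\Vert)$, so the excess $L_1\Vert\nabla f\Vert\eta^2d\sigma^2$ is dominated by $\tfrac{\alpha}{4}\eta h_k\Vert\nabla f\Vert^2$ provided $\eta$ is small enough---this is the real content of the ``$T$ large enough'' condition, not merely validity of the descent lemma or a Taylor expansion. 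Second, your attribution of the two quartic-root terms to ``two regimes of $\phi$'' is slightly off: both come from the small-gradient regime (where one takes $\sqrt{\Delta}$), but from two different pieces of $\Delta$---the $D_f/(\eta T)+\eta d\sigma^2$ part versus the residual $O(\eta)$ constant left over from Lemma~\ref{lemma:second order upper bound}. Neither point is fatal; the overall strategy is right.
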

%$\sigma=c_2B\sqrt{T\log(1/\delta)}/(N\epsilon)$ with $c_2$ from Lemma \ref{thm:privacy guarantee}
Theorem \ref{thm:main convergence} is a general convergence for normalized SGD with perturbation. To achieve $(\epsilon, \delta)$-differential privacy, we can choose specific noise multiplier $\sigma$ and running iterations $T$.

\begin{Corollary}\label{cor:trade-off}
Under the same conditions of Theorem \ref{thm:main convergence}, we use $\sigma=c_2B\sqrt{T\log\frac{1}{\delta}}/(N\epsilon)$  with $c_2$ from Lemma \ref{thm:privacy guarantee} and set $T\ge\gO(N^2\epsilon^2/(B^2r^3d \log\frac{1}{\delta}))$. If we have sufficiently many samples (larger than $L_1$ times some constant determined by $(\epsilon,\delta,d,L,\tau,r,B)$, as specified in Lemma \ref{lemma:verify condition}), there holds the following privacy-utility trade-off
\begin{equation}
    \mathbb{E}\left[\min_{0\leq k<T}\Vert\nabla f(\vx_k)\Vert\right]\leq C^\prime\sqrt[4]{\frac{dr^3\log(1/\delta)}{N^2\epsilon^2}}+\frac{8(r+2\tau_0)\tau_0^2}{r(r+\tau_0)(1-\tau_1)^3},
\end{equation}
where $C^\prime$ is a constant depending on $(\tau_0, \tau_1)$, $(L_0, L_1)$, $D_f$ and $B$.
\end{Corollary}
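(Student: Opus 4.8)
The plan is to derive Corollary~\ref{cor:trade-off} as a direct substitution into Theorem~\ref{thm:main convergence}, choosing the noise multiplier and iteration count exactly at the privacy threshold so that the generic rate collapses to the claimed privacy–utility form. First I would invoke Lemma~\ref{thm:privacy guarantee}: setting $\sigma = c_2 B\sqrt{T\log(1/\delta)}/(N\epsilon)$ guarantees $(\epsilon,\delta)$-DP for DP-NSGD, so it only remains to bound the utility. The key observation is that with this choice, the product $T d \sigma^2$ that appears in Theorem~\ref{thm:main convergence} becomes $T d \cdot c_2^2 B^2 T \log(1/\delta)/(N^2\epsilon^2) = c_2^2 B^2 d T^2 \log(1/\delta)/(N^2\epsilon^2)$, which is $\Theta(T^2)$ up to problem constants.

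Next I would substitute this into the three terms of the Theorem~\ref{thm:main convergence} bound. The first term is $\sqrt[4]{(D_f+1)^2 r^3 d\sigma^2/T}$; plugging in $\sigma^2$ gives $\sqrt[4]{(D_f+1)^2 r^3 d \cdot c_2^2 B^2 T\log(1/\delta)/(N^2\epsilon^2)}$, i.e.\ it \emph{grows} like $T^{1/4}$. The second term is $\sqrt[4]{1/(T r^3 d\sigma^2)}$; here the $\sigma^2 \propto T$ cancels one factor of $T$, leaving $\sqrt[4]{N^2\epsilon^2/(c_2^2 B^2 r^3 d T^2 \log(1/\delta))}$, which \emph{decays} like $T^{-1/2}$. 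The third term $8(r+2\tau_0)\tau_0^2/(r(r+\tau_0)(1-\tau_1)^3)$ is independent of $T$ and passes through unchanged to the final bound. Since the first term increases and the second decreases in $T$, the sum is minimized (up to constants) by balancing them: setting $T = \Theta(N^2\epsilon^2/(B^2 r^3 d\log(1/\delta)))$ makes both the first and second terms equal to $\Theta\big(\sqrt[4]{d r^3 \log(1/\delta)/(N^2\epsilon^2)}\big)$ — this is exactly the stated choice of $T$. I would then verify that this $T$ is large enough to satisfy the ``sufficiently many iterations'' hypothesis of Theorem~\ref{thm:main convergence} (and Lemma~\ref{lemma:verify condition}), which is where the ``sufficiently many samples'' assumption enters: since $T \propto N^2$, the iteration-count condition translates into a lower bound on $N$ in terms of $(\epsilon,\delta,d,L,\tau,r,B)$, linear in $L_1$ as stated.

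Collecting the two balanced terms and the $T$-independent bias term, and absorbing $c_2$, $B$, $D_f$ and the $(L_0,L_1,\tau_0,\tau_1)$ dependence into a single constant $C'$, yields
\[
\mathbb{E}\left[\min_{0\le k<T}\Vert\nabla f(\vx_k)\Vert\right]\le C'\sqrt[4]{\frac{dr^3\log(1/\delta)}{N^2\epsilon^2}}+\frac{8(r+2\tau_0)\tau_0^2}{r(r+\tau_0)(1-\tau_1)^3},
\]
which is the claim. The main obstacle is not the algebra of the substitution — that is essentially bookkeeping — but checking the side conditions carefully: one must confirm that the chosen $T$ simultaneously (i) meets the threshold $\epsilon < c_1 B^2 T/N^2$ and $\sigma \ge c_2 B\sqrt{T\log(1/\delta)}/(N\epsilon)$ from Lemma~\ref{thm:privacy guarantee} with equality or better, and (ii) exceeds the constant from Lemma~\ref{lemma:verify condition general sigma}/\ref{lemma:verify condition} so Theorem~\ref{thm:main convergence} applies; reconciling these may require the explicit ``larger than $L_1$ times some constant'' sample-size condition, and tracking how $r$, $d$, $B$ enter each threshold is the delicate part. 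I would also note that the optimal balance in $T$ is only optimal up to constants, so strictly one only needs $T = \Theta(N^2\epsilon^2/(B^2r^3d\log(1/\delta)))$ rather than an exact value, which gives the freedom to also satisfy the lower-bound side conditions.
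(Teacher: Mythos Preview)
Your overall plan---substitute the privacy-threshold $\sigma$ from Lemma~\ref{thm:privacy guarantee} into the bound of Theorem~\ref{thm:main convergence}, pick $T$ appropriately, and verify the side conditions via Lemma~\ref{lemma:verify condition}---is exactly what the paper does. However, there is an arithmetic slip in your handling of the first term that leads to a mistaken ``balancing'' narrative.

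With $\sigma^2 = c_2^2 B^2 T\log(1/\delta)/(N^2\epsilon^2)$, the first term of Theorem~\ref{thm:main convergence} is
\[
\sqrt[4]{\frac{(D_f+1)^2 r^3 d\,\sigma^2}{T}}
=\sqrt[4]{\frac{(D_f+1)^2 r^3 d\, c_2^2 B^2 \log(1/\delta)}{N^2\epsilon^2}},
\]
because the $T$ in $\sigma^2$ cancels the $1/T$. It is therefore \emph{independent of $T$} and already equals the target $\Theta\!\big(\sqrt[4]{dr^3\log(1/\delta)/(N^2\epsilon^2)}\big)$; it does not grow like $T^{1/4}$ as you wrote. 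Consequently there is no trade-off between the first and second terms in $T$: the bound is monotone nonincreasing in $T$, and the role of the condition $T\ge \gO(N^2\epsilon^2/(B^2 r^3 d\log(1/\delta)))$ is simply to drive the \emph{second} term $\sqrt[4]{1/(T r^3 d\sigma^2)}\propto T^{-1/2}$ down to the same order as the first. This is why the statement (and the paper) uses $T\ge \gO(\cdot)$ rather than $T=\Theta(\cdot)$. Your final displayed bound and the handling of the side conditions are correct; only the rationale for the choice of $T$ needs to be fixed.
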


There are two major obstacles in proving this theorem. One is to handle the normalized gradients, which is solved by carefully using $r$ and dividing the range of $\Vert\nabla f(\vx_k)\Vert$ into two cases. The other is to handle the Gaussian perturbation $
\vz$, whose variance $\sigma^2$ could even grow linearly with $T$. This is solved by setting the learning rate $\eta$ proportionally to $1/\sigma$ in  \eqref{eq:set eta}. Combining the two steps together, we reach the  privacy-utility trade-off in Corollary \ref{cor:trade-off}.

\begin{proof}[Proof Sketch of Theorem~\ref{thm:main convergence}]
We firstly establish a descent inequality as in Lemma \ref{lemma:descent inequality} via exploiting the $(L_0,L_1)$-generalized smooth condition in Assumption \ref{assumption: relaxed smoothness, lower boundedness},
% \begin{align*}
%     &\mathbb{E}_k\left[f(\vx_{k+1})\right]-f(\vx_{k})\leq - \underbrace{\mathbb{E}_k\left[\langle h_k\nabla f(\vx_k),\vg_k \rangle\right]}_{\mathfrak{A}}\notag\\
%     &+ \underbrace{\frac{L_0+L_1\Vert \nabla f(\vx_k)\Vert}{2}\left(\eta^2d\sigma^2 + \mathbb{E}_k\left\Vert h_k\vg_k \right\Vert^2\right)}_{\mathfrak{B}}.
% \end{align*}
\begin{align*}
    &\mathbb{E}_k\left[f(\vx_{k+1})\right]-f(\vx_{k})\leq - \underbrace{\eta\mathbb{E}_k\left[\langle  h_k\nabla f(\vx_k),\vg_k \rangle\right]}_{\mathfrak{A}}\notag+ \underbrace{\frac{L_0+L_1\Vert \nabla f(\vx_k)\Vert}{2}\eta^2\left(d\sigma^2 + \mathbb{E}_k\left\Vert h_k\vg_k \right\Vert^2\right)}_{\mathfrak{B}}.
\end{align*}
%\huishuai{We should explicitly state what the expectation $\E_k$ is with respect to. }
In the above expression, we use $\mathbb{E}_k$ to denote taking expectation of $\{\vg_k^{(i)},i\in\mathcal{S}_k\}$ and $\vz_k$ conditioned on the past, especially $\vx_k$.
Next, in Lemma \ref{lemma:second order upper bound}, we upper bound the second order term $\mathfrak{B}$ by a constant $\mathcal{O}(\eta^2)$ plus a term like $\eta\mathbb{E}_k\left[h_k\Vert\nabla f(\vx_k)\Vert^2\right]$, which is compatible to $\mathfrak{A}$. In order to find simplified lower bound for $\mathfrak{A}$, we separate the time index $\{0,1,\cdots,T-1\}$ into two cases $\mathcal{U}:=\left\{0\leq k<T:\Vert\nabla f(\vx_k)\Vert\geq\frac{\tau_0}{1-\tau_1}\right\}$ and $\mathcal{U}^c$. Specifically, in Lemma \ref{lemma:first order lower bound}, we find that for $k\in\mathcal{U}$, the first order term $\mathfrak{A}$ is $\Omega(\eta\Vert\nabla f(\vx_k)\Vert)$ (see \eqref{eq:first order lower bound 1} in Appendix \ref{app:lemmas}); for $k\notin\mathcal{U}$, the first order term $\mathfrak{A}$ is $\Omega(\eta(\Vert\nabla f(\vx_k)\Vert^2/r-\tau_0^3/r^2))$ (see \eqref{eq:first order lower bound 2} in Appendix \ref{app:lemmas}). Then our result follows from summing up descent inequalities and scaling $\eta$ deliberately. 
%\huishuai{Be consistently using "the first-order term".}
\end{proof}

There are rich literature investigating the convergence properties of \textit{normalized gradient} methods in the non-private non-convex optimization setting. These results heavily rely on the following inequality to control the amount of descent
\begin{equation}\label{eq:deal roughly with h}
    -\left\langle\nabla f(\vx_k),\frac{\vg_k}{\Vert \vg_k\Vert}\right\rangle\leq-\Vert\nabla f(\vx_k)\Vert+2\Vert \vg_k-\nabla f(\vx_k)\Vert.
\end{equation}
Pitifully, based on this inequality, one unavoidably needs to control the error term $\Vert \vg_k-\nabla f(\vx_k)\Vert$ well to have the overall convergence. In practice, You et al.\cite{DBLP:journals/corr/abs-1904-00962} used large batch size $B\sim\mathcal{O}(T)$ to reduce the variance. However, this trick cannot apply in the private setting due to \textit{per-sample gradient processing}, e.g., clipping or normalization. In theory, Cutkosky and Mehta \cite{cutkosky2020momentum} and Jin et al.\cite{DBLP:journals/corr/abs-2110-12459} use momentum techniques with a properly scaled weight decay and obtain a convergence rate $\mathbb{E}\Vert\nabla f(\vx_T)\Vert=\mathcal{O}(1/\sqrt[4]{T})$, which is comparable with the usual SGD in the non-convex setup \cite{ghadimi2013stochastic}. However, momentum techniques do not apply well in the private setting either, because we   only have access to previous descent directions with  noise due to the composite differential privacy requirement. As far as we know, there is no successful application of momentum in the private community, either practically or theoretically.

In this paper, we view the regularizer $r$ as a tunable hyperparameter, and make our upper bound decay as fast as possible $\mathcal{O}(1/\sqrt[4]{T})$ by tuning $r$. However, due to the restrictions imposed by privacy protection, we are unable to properly employ large batch size or use momentum, thus leaving a strictly positive bound $\mathcal{O}(\tau_0^2/r)$ in the right hand side of Theorem \ref{thm:main convergence} and Corollary \ref{cor:trade-off}. Another observation is that $r$ trades off between the non-vanishing bound $\mathcal{O}(\tau_0^2/r)$ and the decaying term of $\mathcal{O}\left(\sqrt[4]{dr^3\log\frac{1}{\delta}/(N^2\epsilon^2)}\right)$.
%\huishuai{At this point, readers expect to know what key techniques/inequalities we are using to overcome such difficulty.} %failing to make our algorithm converge in general and
%\xiaodong{Seems that this paragraph should be rewritten to talk about Corollary 3.5, I think.}

\subsection{Convergence Guarantee of DP-SGD}\label{subsec:dpsgd convergence}
We now turn our attention to \textit{clipped} DP-SGD. Before we compare the distinctions, we present the following theorem on convergence.

\begin{Theorem}\label{thm:main convergence clip}
Suppose that the objective $f(\vx)$ satisfies Assumption \ref{assumption: relaxed smoothness, lower boundedness} and \ref{assumption: a.s. sampling noise bound}. Given any noise multiplier $\sigma>0$ and any clipping threshold $c>2\tau_0/(1-\tau_1)$, we run DP-SGD using constant learning rate
\begin{equation}\label{eq:set eta clip}
    \eta=\sqrt{\frac{2}{(L_1(c+\tau_0)+L_0)T dc^2\sigma^2}},
\end{equation}
with sufficiently many iterations $T$ (larger than some constant determined by $(\sigma^2,d,L,\tau,c)$ respectively, specified in Lemma \ref{lemma:verify condition general sigma clip}).
We can obtain the following upper bound on gradient norms
\begin{equation}
    \mathbb{E}\left[\min_{0\leq k<T}\Vert\nabla f(\vx_k)\Vert\right]\leq C\left(\sqrt[4]{\frac{(D_f+1)^2c^3d\sigma^2}{T}}+\sqrt[4]{\frac{1}{Tc^2(c+\tau_0)d\sigma^2}}\right), 
\end{equation}
where we employ a constant $C$ only depending on $(\tau_0,\tau_1)$ and $(L_0,L_1)$.
\end{Theorem}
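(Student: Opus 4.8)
The plan is to reuse the machinery of Theorem~\ref{thm:main convergence}, but exploiting that per-sample \emph{clipping} is far gentler than normalization when the gradient is small. First I would derive a one-step descent inequality from the $(L_0,L_1)$-generalized smoothness of Assumption~\ref{assumption: relaxed smoothness, lower boundedness}. Writing $\bar{\vg}_k\triangleq\tfrac1B\sum_{i\in\mathcal{S}_k}\bar{h}_k^{(i)}\vg_k^{(i)}$ for the aggregated clipped gradient and recalling that the injected noise now has per-coordinate variance $c^2\sigma^2$, this reads
\begin{equation*}
    \mathbb{E}_k\left[f(\vx_{k+1})\right]-f(\vx_k)\leq-\underbrace{\eta\,\mathbb{E}_k\!\left[\langle\nabla f(\vx_k),\bar{\vg}_k\rangle\right]}_{\mathfrak{A}}+\underbrace{\frac{L_0+L_1\Vert\nabla f(\vx_k)\Vert}{2}\,\eta^2\!\left(c^2 d\sigma^2+\mathbb{E}_k\Vert\bar{\vg}_k\Vert^2\right)}_{\mathfrak{B}}.
\end{equation*}
Here $\Vert\bar{h}_k^{(i)}\vg_k^{(i)}\Vert\leq c$ by construction, hence $\Vert\bar{\vg}_k\Vert\leq c$ deterministically and $\mathfrak{B}\leq\tfrac{L_0+L_1\Vert\nabla f(\vx_k)\Vert}{2}\eta^2 c^2(d\sigma^2+1)$: no gradient-dependent blow-up enters the second-order term beyond the benign $L_1\Vert\nabla f(\vx_k)\Vert$ prefactor. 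As in the DP-NSGD proof, making the generalized-smoothness expansion legitimate despite the heavy Gaussian tail of $\eta\bar{\vz}_k$ needs $\eta$ (hence $1/\sqrt{T}$) small, which is what the ``sufficiently many iterations'' hypothesis provides (cf.\ Lemma~\ref{lemma:verify condition general sigma clip}).

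Next I would lower bound $\mathfrak{A}$ by splitting $\{0,\dots,T-1\}$ into $\mathcal{V}\triangleq\{k:\Vert\nabla f(\vx_k)\Vert\geq(c-\tau_0)/(1+\tau_1)\}$ and its complement. For $k\notin\mathcal{V}$, Assumption~\ref{assumption: a.s. sampling noise bound} forces $\Vert\vg_k^{(i)}\Vert\leq(1+\tau_1)\Vert\nabla f(\vx_k)\Vert+\tau_0\leq c$ almost surely, so clipping is \emph{inactive}, $\bar{h}_k^{(i)}=1$, the direction $\bar{\vg}_k$ is unbiased for $\nabla f(\vx_k)$, and $\mathfrak{A}=\eta\Vert\nabla f(\vx_k)\Vert^2$ exactly --- this is the structural point separating DP-SGD from DP-NSGD and is why no non-vanishing $\Theta(\tau_0^2/r)$-type term survives. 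For $k\in\mathcal{V}$, I would write $\langle\nabla f(\vx_k),\bar{h}_k^{(i)}\vg_k^{(i)}\rangle=\bar{h}_k^{(i)}\bigl(\Vert\nabla f(\vx_k)\Vert^2+\langle\nabla f(\vx_k),\vg_k^{(i)}-\nabla f(\vx_k)\rangle\bigr)\geq\bar{h}_k^{(i)}\Vert\nabla f(\vx_k)\Vert\bigl((1-\tau_1)\Vert\nabla f(\vx_k)\Vert-\tau_0\bigr)$ and combine it with $\bar{h}_k^{(i)}\geq c/((1+\tau_1)\Vert\nabla f(\vx_k)\Vert+\tau_0)$. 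The hypothesis $c>2\tau_0/(1-\tau_1)$ makes the threshold satisfy $(c-\tau_0)/(1+\tau_1)>\tau_0/(1-\tau_1)$, so the bracket $(1-\tau_1)\Vert\nabla f(\vx_k)\Vert-\tau_0$ is strictly positive throughout $\mathcal{V}$, yielding $\mathfrak{A}\gtrsim\eta\,c\,\Vert\nabla f(\vx_k)\Vert$. In sum, on all indices $\mathfrak{A}\geq\eta\,\rho(\Vert\nabla f(\vx_k)\Vert)$ for a single nondecreasing $\rho$ with $\rho(t)\asymp\min\{t^2,ct\}$.

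I would then telescope. Summing the descent inequality over $k$, using $f(\vx_0)-\mathbb{E}[f(\vx_T)]\leq D_f$, and absorbing both the $L_1\Vert\nabla f(\vx_k)\Vert$ part of $\mathfrak{B}$ and (in the no-clipping regime) the $L_0\Vert\nabla f(\vx_k)\Vert^2$ part into one half of the matching $\mathfrak{A}$ --- again needing $\eta$ small --- leaves $\tfrac12\eta\sum_k\mathbb{E}[\rho(\Vert\nabla f(\vx_k)\Vert)]\leq D_f+T\eta^2(c^2 d\sigma^2+\tau_0^2)\,C_3$ with $C_3=C_3(L_0,L_1,\tau_1)$. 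Since $\rho$ is nondecreasing, $\rho(\Vert\nabla f(\vx_k)\Vert)\geq\rho(m)$ for every $k$ with $m\triangleq\min_{0\leq k<T}\Vert\nabla f(\vx_k)\Vert$, and a short counting argument shows the telescoped bound forces $|\mathcal{V}|=\mathcal{O}(\sqrt{T})$, so for $T$ past the stated threshold $m$ is attained outside $\mathcal{V}$ where $\rho(m)=m^2$; dividing by $\tfrac12\eta T$ gives $\mathbb{E}[m^2]\lesssim D_f/(\eta T)+\eta(c^2 d\sigma^2+\tau_0^2)$, hence $\mathbb{E}[m]\leq\sqrt{\mathbb{E}[m^2]}\lesssim\sqrt{D_f/(\eta T)}+\sqrt{\eta c^2 d\sigma^2}+\sqrt{\eta\tau_0^2}$. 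Plugging in $\eta=\sqrt{2/((L_1(c+\tau_0)+L_0)Tdc^2\sigma^2)}$ collapses $\sqrt{D_f/(\eta T)}$ and $\sqrt{\eta c^2 d\sigma^2}$ into $\sqrt[4]{(D_f+1)^2 c^3 d\sigma^2/T}$ and $\sqrt{\eta\tau_0^2}$ into $\sqrt[4]{1/(Tc^2(c+\tau_0)d\sigma^2)}$, which is the claimed bound with $C$ depending only on $(\tau_0,\tau_1)$ and $(L_0,L_1)$.

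The main obstacle is the lower bound on $\mathfrak{A}$ for $k\in\mathcal{V}$ just above the threshold $(c-\tau_0)/(1+\tau_1)$, where clipping is only partially active: one must splice the active-clipping estimate $\bar{h}_k^{(i)}\geq c/((1+\tau_1)\Vert\nabla f(\vx_k)\Vert+\tau_0)$ onto the inactive case to obtain a single clean nondecreasing $\rho$ valid on the whole range, and this is exactly where $c>2\tau_0/(1-\tau_1)$ is consumed --- without it $(1-\tau_1)\Vert\nabla f(\vx_k)\Vert-\tau_0$ can turn negative just above the threshold and descent is lost. A secondary, purely technical nuisance shared with Theorem~\ref{thm:main convergence} is controlling the generalized-smoothness expansion against the unbounded $\eta\bar{\vz}_k$, which is what forces the quantitative lower bound on $T$.
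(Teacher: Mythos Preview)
Your proposal is broadly sound and shares the paper's skeleton (descent inequality, two-regime split, telescope, plug in $\eta$), but the route differs in three places and has one genuine soft spot.

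\textbf{Differences.} (i) You split at the clipping-activity threshold $(c-\tau_0)/(1+\tau_1)$; the paper splits at the smaller threshold $\tau_0/(1-\tau_1)$ where $\langle\nabla f,\vg_k\rangle$ first becomes nonnegative, and uses the universal bound $\bar h_k\ge c/(c+\Vert\vg_k\Vert)$ together with unbiasedness $\mathbb{E}_k[\vg_k]=\nabla f$ to get $\mathfrak{A}\ge \eta\,c\Vert\nabla f\Vert^2/(c+\tau_0+(1+\tau_1)\Vert\nabla f\Vert)$ in the large-gradient regime. Your pointwise estimate $\langle\nabla f,\vg_k\rangle\ge\Vert\nabla f\Vert((1-\tau_1)\Vert\nabla f\Vert-\tau_0)$ is strictly weaker than the in-expectation bound and is what causes trouble below. (ii) For $\mathfrak{B}$ you use the crude $\Vert\bar{\vg}_k\Vert\le c$; the paper instead proves a four-inequality absorption lemma (Lemma~\ref{lemma:second order upper bound clip}) that handles the $L_1\Vert\nabla f\Vert$ prefactor more uniformly. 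Your simpler route works but you still need case analysis to absorb $L_1\Vert\nabla f\Vert\cdot\eta^2 c^2 d\sigma^2$ when $\Vert\nabla f\Vert$ is large. (iii) For the final step you appeal to a nondecreasing $\rho$ and a counting bound $|\mathcal V|=\mathcal O(\sqrt{T})$; the paper just observes that either $|\mathcal U|\ge T/2$ or $|\mathcal U^c|\ge T/2$ and takes the maximum of the two resulting bounds, which is more direct.

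\textbf{Gap.} The assertion that the resulting $\rho$ is \emph{nondecreasing} is not right with your bounds: at $t=(c-\tau_0)/(1+\tau_1)$ the inactive-clipping value is $t^2$, but your active-clipping lower bound $\dfrac{ct((1-\tau_1)t-\tau_0)}{(1+\tau_1)t+\tau_0}$ drops to a strictly smaller value (the factor $(1-\tau_1)t-\tau_0$ vanishes as $c\downarrow 2\tau_0/(1-\tau_1)$). Thus $\sum_k\rho(\Vert\nabla f(\vx_k)\Vert)\ge T\rho(m)$ cannot be read off directly. Two easy fixes: either (a) use unbiasedness as the paper does, which gives the increasing $\phi(t)=ct^2/(c+\tau_0+(1+\tau_1)t)$ on the large-gradient side and lets you set $\rho(t)=\tfrac12\min\{t^2,\phi(t)\}$; or (b) skip the monotonicity altogether and argue, after your counting bound $|\mathcal V|<T/2$, that $\sum_{k\notin\mathcal V}\Vert\nabla f(\vx_k)\Vert^2\ge (T/2)m^2$, which is exactly what you need. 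Either way the argument closes with the stated rate. One minor note: under the paper's global form of $(L_0,L_1)$-smoothness the descent lemma holds for arbitrary displacements, so your ``heavy Gaussian tail'' worry about the expansion is not an issue here; the lower bound on $T$ is consumed only in the $\eta$-smallness absorptions.
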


Again, we would like to combine Theorem \ref{thm:main convergence clip} and Lemma \ref{thm:privacy guarantee} to have a full characterization.

\begin{Corollary}\label{cor:trade-off clip}
Under the same conditions of Theorem \ref{thm:main convergence clip}, we use $\sigma=c_2B\sqrt{T\log(1/\delta)}/(N\epsilon)$ with $c_2$ from Lemma \ref{thm:privacy guarantee} and set $T\ge\gO(N^2\epsilon^2/(B^2c^3d \log\frac{1}{\delta}))$. If we have sufficiently many samples (larger than $L_1$ times some constant determined by $(\epsilon,\delta,d,L,\tau,c,B)$, as specified in Lemma \ref{lemma:verify condition clip}), there holds the following privacy-utility trade-off
\begin{equation}
    \mathbb{E}\left[\min_{0\leq k<T}\Vert\nabla f(\vx_k)\Vert\right]\leq C^\prime\sqrt[4]{\frac{dc^3\log(1/\delta)}{N^2\epsilon^2}},
\end{equation}
where $C^\prime$ is a constant depending on $(\tau_0,\tau_1)$, $(L_0,L_1)$, $D_f$ and $B$.
\end{Corollary}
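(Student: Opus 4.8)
The plan is to obtain Corollary~\ref{cor:trade-off clip} by substituting the privacy-calibrated noise level from Lemma~\ref{thm:privacy guarantee} into the general-$\sigma$ bound of Theorem~\ref{thm:main convergence clip} and exploiting a cancellation in the iteration count $T$. Concretely, I would set $\sigma=c_2B\sqrt{T\log(1/\delta)}/(N\epsilon)$, so that $d\sigma^2=c_2^2B^2dT\log(1/\delta)/(N^2\epsilon^2)$ carries an explicit factor of $T$, and then read off what the two summands in the bound of Theorem~\ref{thm:main convergence clip} become under this choice.

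Before substituting, I would verify the hypotheses of Theorem~\ref{thm:main convergence clip}: $\sigma>0$ is immediate and $c>2\tau_0/(1-\tau_1)$ is assumed. The one delicate hypothesis is ``$T$ larger than some constant determined by $(\sigma^2,d,L,\tau,c)$'' from Lemma~\ref{lemma:verify condition general sigma clip}: since the calibrated $\sigma^2$ is itself proportional to $T$, this is a self-referential inequality, and after plugging $\sigma^2\propto T$ the $T$'s cancel on both sides, leaving instead a lower bound on the sample size $N$ in terms of $(\epsilon,\delta,d,L,\tau,c,B)$ --- precisely the content of Lemma~\ref{lemma:verify condition clip}. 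I would also enlarge $T$ if necessary so that the precondition $\epsilon<c_1B^2T/N^2$ of Lemma~\ref{thm:privacy guarantee} holds, which is harmless since increasing $T$ never worsens the bound.

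With the hypotheses in place the rest is algebra. The first summand $\sqrt[4]{(D_f+1)^2c^3d\sigma^2/T}$ becomes
\[
\sqrt[4]{\frac{(D_f+1)^2 c_2^2 B^2 c^3 d\log(1/\delta)}{N^2\epsilon^2}},
\]
in which $T$ has disappeared; this already has the target form $\gO\big(\sqrt[4]{dc^3\log(1/\delta)/(N^2\epsilon^2)}\big)$. The second summand $\sqrt[4]{1/(Tc^2(c+\tau_0)d\sigma^2)}=\sqrt[4]{N^2\epsilon^2/(c_2^2B^2c^2(c+\tau_0)dT^2\log(1/\delta))}$ decays like $T^{-1/2}$, so imposing $T\ge\gO(N^2\epsilon^2/(B^2c^3d\log\tfrac1\delta))$ makes it no larger than the first summand. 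Adding the two and absorbing every $T$-independent factor --- $(\tau_0,\tau_1)$, $(L_0,L_1)$, $D_f$, $B$, and the absolute constants $c_1,c_2$ --- into one constant $C'$ yields the claimed $C'\sqrt[4]{dc^3\log(1/\delta)/(N^2\epsilon^2)}$. I would also point out that, in contrast to Corollary~\ref{cor:trade-off}, no additive non-vanishing term appears, reflecting that clipping keeps the expected update aligned with $\nabla f$ even as $\Vert\nabla f(\vx_k)\Vert\to0$, whereas the normalization regularizer $r$ in DP-NSGD does not.

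The main obstacle is the self-referential ``$T$ large enough'' hypothesis: because the privacy-optimal $\sigma$ scales with $\sqrt{T}$, the threshold on $T$ coming from Theorem~\ref{thm:main convergence clip} depends on $\sigma$ and hence on $T$, and the only nontrivial bookkeeping is to disentangle this into a genuine, explicit lower bound on $N$ (Lemma~\ref{lemma:verify condition clip}). Once that is settled, the substitution and the balancing of the two terms are elementary.
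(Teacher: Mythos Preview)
Your proposal is correct and follows essentially the same approach as the paper: substitute the privacy-calibrated $\sigma$ into the bound of Theorem~\ref{thm:main convergence clip}, observe that the first summand loses its $T$-dependence while the second decays like $T^{-1/2}$, and invoke Lemma~\ref{lemma:verify condition clip} to convert the self-referential ``$T$ large enough'' hypothesis into a sample-size condition on $N$. The paper's written proof is terser (it simply ``takes the limit $T\ge\gO(N^2\epsilon^2/(B^2c^3d\log\tfrac1\delta))$'' and reads off the leading term), but your more explicit treatment of the two summands and of the $T$--$N$ disentanglement is exactly the intended argument.
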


By comparing Corollary~\ref{cor:trade-off clip} and Corollary~\ref{cor:trade-off}, the most significant distinction of DP-SGD from DP-NSGD is that clipping does not induce a non-vanishing term $\mathcal{O}(\tau_0^2/r)$ as what we obtained in Corollary \ref{cor:trade-off}.
This distinction is because $\mathfrak{A}:=\eta\mathbb{E}\left[\langle  \nabla f,h\vg \rangle\right]$ and $\bar{\mathfrak{A}}:=\eta\mathbb{E}\left[\langle \nabla f,\bar{h}\vg \rangle\right]$ behave quite differently in some cases (see details in Lemmas \ref{lemma:first order lower bound} \& \ref{lemma:first order lower bound clip} of Appendix~\ref{app:lemmas}).

Specifically, when $\Vert\nabla f\Vert$ is larger than $\tau_0/(1-\tau_1)$, we know $\langle\nabla f,\vg \rangle\geq 0$. Therefore, the following ordering
\begin{equation}\label{eq:ordering}
    \frac{c}{c+\Vert\vg\Vert}\leq\min\left \{1,\frac{c}{\Vert\vg\Vert}\right\}\leq \frac{2c}{c+\Vert\vg\Vert}
\end{equation}
guarantees $\mathfrak{A}$ and $\bar{\mathfrak{A}}$ to be equivalent to  $\Omega(\eta\Vert\nabla f\Vert)$, where \eqref{eq:ordering} can be argued by considering two cases $c>\|\vg\|$ and $c\le \|\vg\|$ separately.
When the gradient norm $\Vert\nabla f\Vert$ is small, the inner-product $\langle\nabla f,\vg \rangle$ could be of any sign, and we can only have $\mathfrak{A}=\Omega\left(\eta\left(\Vert\nabla f\Vert^2/r-\tau_0^3/r^2\right)\right)$ and $\bar{\mathfrak{A}}=\Omega\left(\eta\left(\Vert\nabla f\Vert^2\right)\right)$ instead. As $\mathfrak{A}$ controls the amount of descent within one iteration for DP-NSGD, the non-vanishing term appears. 
Continued in Appendix~\ref{subsec:example}, we provide a toy example of the distribution of $\vg$ to further illustrate the difference of $\mathfrak{A}$ and $\bar{\mathfrak{A}}$. This toy example will also confim that we have been using an optimal bound $\mathfrak{A}=\Omega\left(\eta\left(\Vert\nabla f\Vert^2/r-\tau_0^3/r^2\right)\right)$ when $\Vert\nabla f\Vert$ is small.

The training trajectories of DP-NSGD fluctuate more adversely than DP-SGD, since $\mathfrak{A}$ can be of any sign while $\bar{\mathfrak{A}}$ stays positive. This difference is also observed empirically (see Figure \ref{fig:curves-clippedsgd-normedsgd}) that the training loss of normalized SGD  with $r=0.01$ (closer to normalization) fluctuates more than that of the clipped SGD with $c=1$\footnote{$c=1$ makes the magnitude of clipped SGD similar as normalized SGD and hence the comparison is more meaningful.}.

\subsection{On the Biases from Normalization and Clipping}

We discuss further on how gradient \emph{normalization} or \emph{clipping} affects the overall convergence of the private algorithms. The influence is two-folded: one is that clipping/normalization induces \emph{bias}, i.e., the gap between true gradient $\nabla f$ and clipped/normalized gradient; the other is that added Gaussain noise for privacy may scale with the regularizer $r$ and the clipping threshold $c$.

\textbf{The Induced Bias.}
When writing the objective as an empirical average
$f(\vx)=\sum_i\ell(\vx,\xi_i)/N$, the true gradient is $\nabla f(\vx)=\frac{1}{N}\sum_{i=1}^{N}\nabla\ell(\vx,\xi_i)$. Then both expected descent directions of Normalized SGD and Clipped SGD
\begin{equation*}
    \mathbb{E}[h\vg]=\frac{1}{N}\sum_{i=1}^N \nabla \ell(\vx,\xi_i)\frac{1}{r+\Vert\nabla\ell(\vx,\xi_i)\Vert},\mathbb{E}[\bar{h}\vg]=\frac{1}{N}\sum_{i=1}^N \nabla \ell(\vx,\xi_i)\min\left\{1,\frac{c}{\Vert\nabla\ell(\vx,\xi_i)\Vert}\right\},
\end{equation*}
deviate from the true gradient $\nabla f(\vx)$. This means that the normalization or the clipping induces biases compared with the true gradient.  A small regularizer $r$ or a small clipping threshold $c$ induces large \emph{biases}, and prevents the loss curves from dropping sharply while a large  $r$ or  $c$ could reduce such \emph{biases}.  This can be seen from the training loss curves of different values of $c$ and $r$ in Figure \ref{fig:curves-clippedsgd-normedsgd}, whose implementation details are in Section \ref{sec:4}. This matches the theoretical insight that the \emph{bias} itself hinders convergence.%, thus inducing significant biases. 

 \begin{figure}
\centering
    \begin{minipage}{0.45\textwidth}
        \begin{center}
        \centerline{\includegraphics[width=90pt]{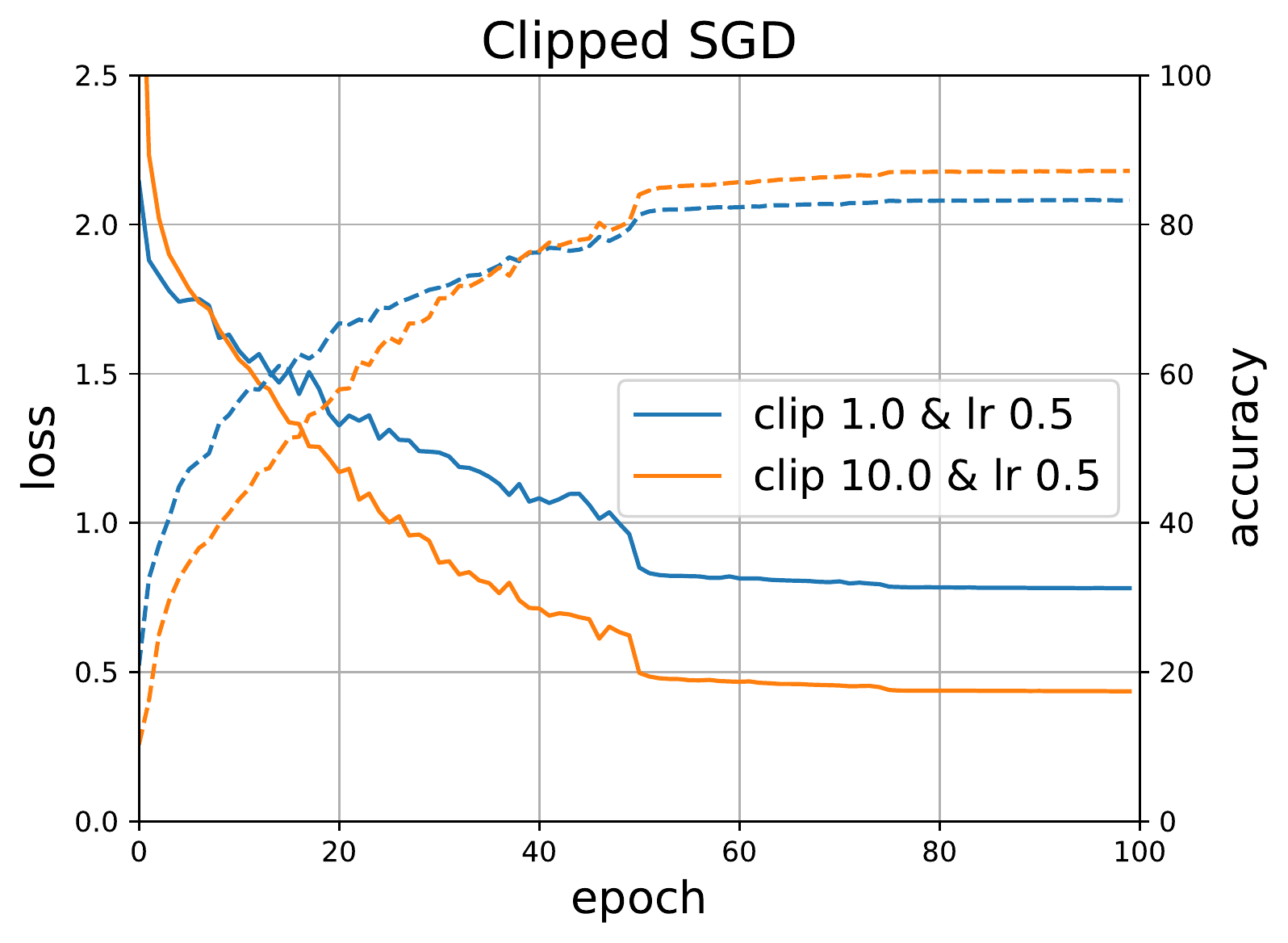}
        \includegraphics[width=90pt]{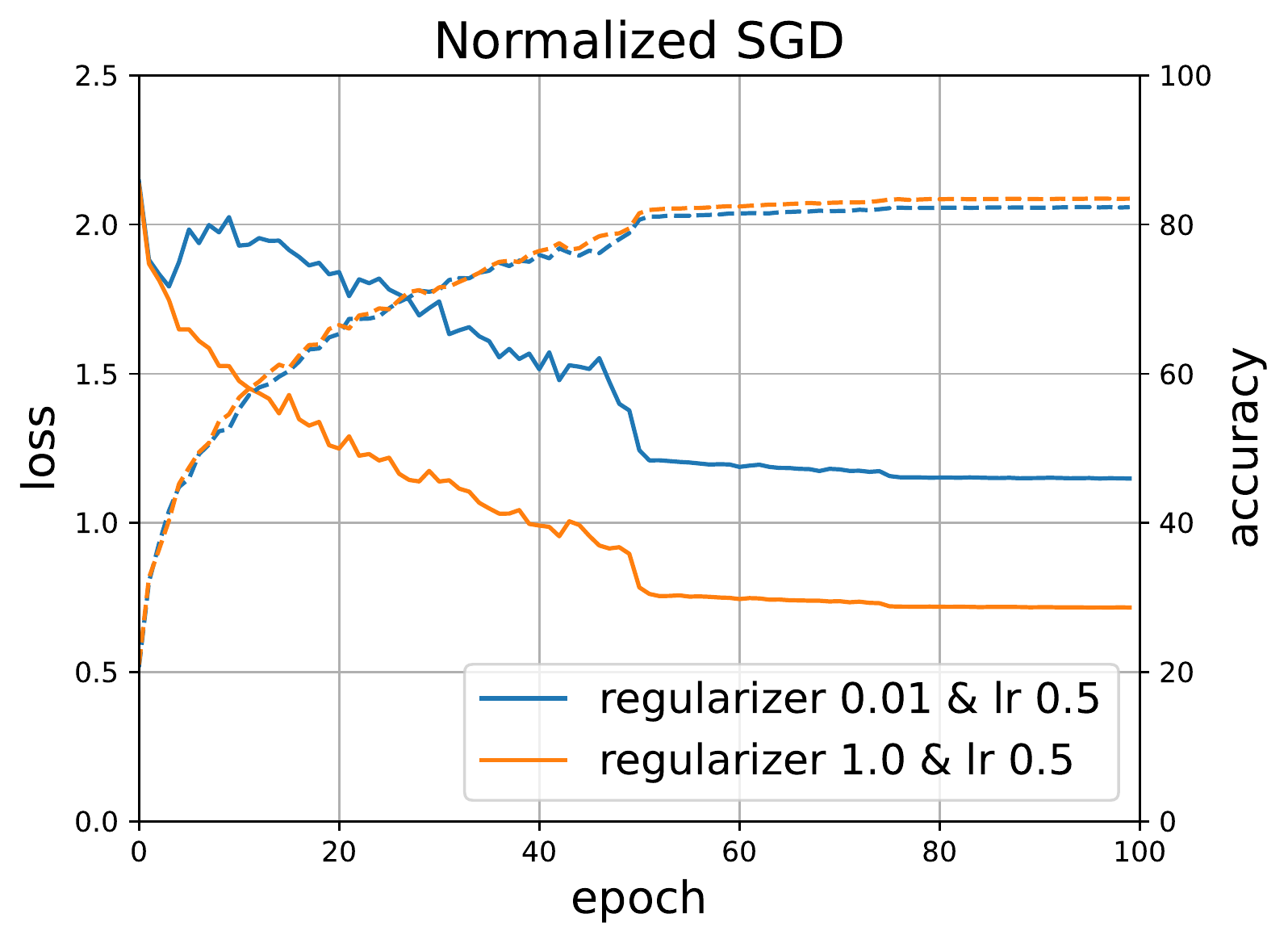}}
        \caption{Left: Training loss and training accuracy curves of Clipped SGD. Right:  Training loss and training accuracy curves of Normalized SGD. Both are trained with ResNet20 on CIFAR10 task.}
\label{fig:curves-clippedsgd-normedsgd}
        \end{center}
    \end{minipage}
    \hfill
    \begin{minipage}{0.45\textwidth}
        \begin{center}
        \centerline{\includegraphics[width=90pt]{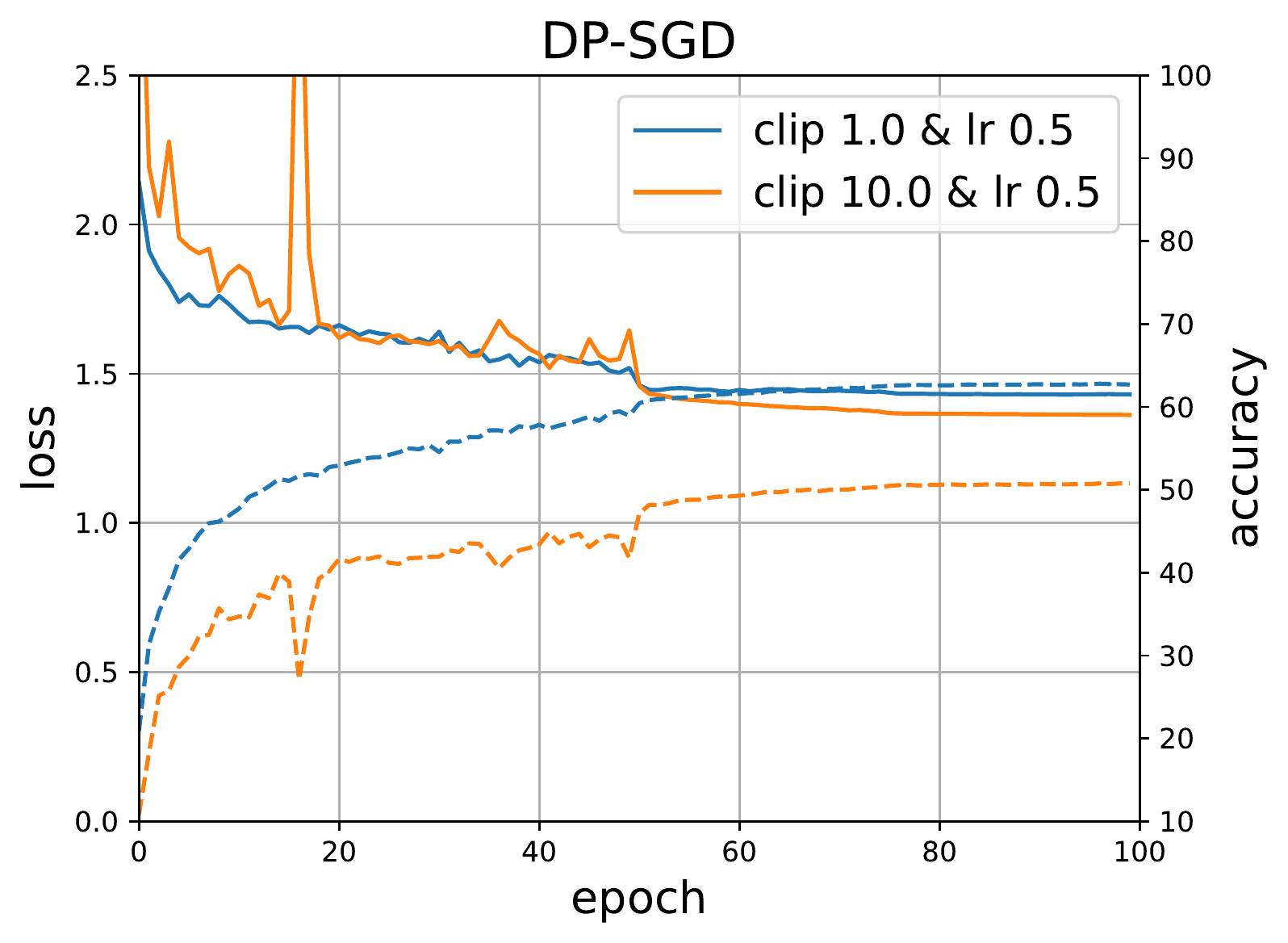}
        \includegraphics[width=90pt]{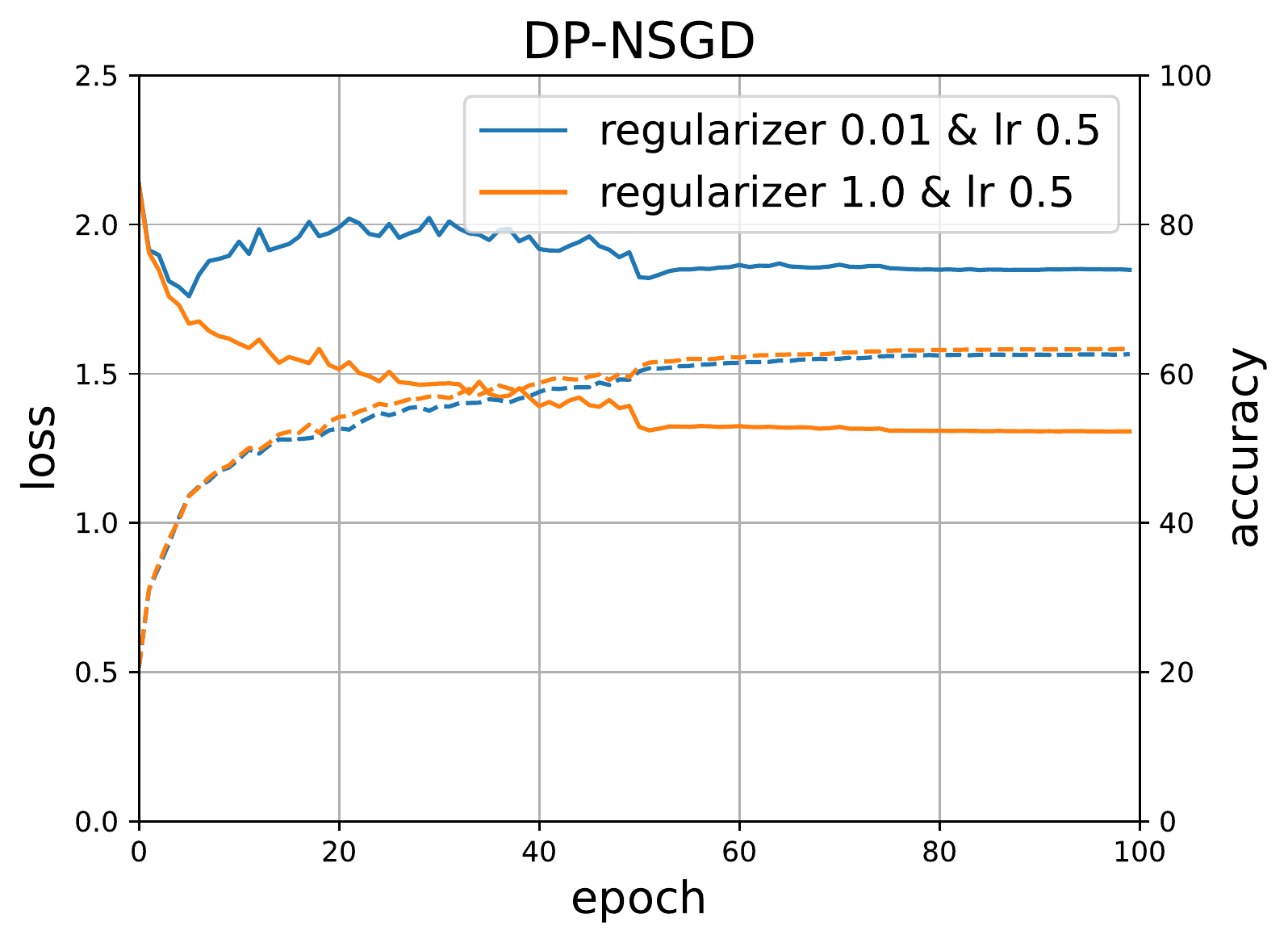}}
        \caption{ Left: Training loss and training accuracy curves of DP-SGD. Right:  Training loss and training accuracy curves of DP-NSGD. Task: ResNet20 on CIFAR10 with $\epsilon=8,\delta=\text{1e-5}$.}
\label{fig:curves-dpsgd-dpnsgd}
        \end{center}
    \end{minipage}
\vspace{-4mm}
\end{figure}

However surprisingly, the \emph{biases} affect  the accuracy curves differently for  clipped SGD and normalized SGD. The accuracy curves of clipped SGD vary with the value of $c$ while the value of $r$ makes almost no impact on the accuracy curves of normalized SGD. This phenomenon extends to the private setting (look at the accuracy curves in Figure \ref{fig:curves-dpsgd-dpnsgd}). 

A qualitative explanation would be as follows. After several epochs of training, \emph{good} samples $\xi$ (those already been classified correctly)  yield small gradients $\nabla\ell(\vx,\xi)$ and \emph{bad} samples $\xi^\prime$ (those not been correctly classified) yield large gradients $\nabla\ell(\vx,\xi^\prime)$. Typically, $c$ is set on the level of gradient norms, while $r$ is for regularizing the division. As the training goes, the gradient norm becomes small and clipping has no effect on the \emph{good} samples while normalized SGD comparatively strengthen the effect of \emph{good} samples. Therefore normalized SGD shows less drop in loss while obtaining a considerable level of accuracy. We call for a future investigation towards understanding this phenomenon  thoroughly. Specific to the setting $r\approx0,c=1$ in Figure~\ref{fig:curves-clippedsgd-normedsgd}, normalized SGD normalizes all $\nabla \ell(\vx,\xi)$ to a unit level, while clipped SGD would not change small gradients $\nabla \ell(\vx,\xi)$.

From a theoretical perspective, to give a finer-grained analysis of the \emph{bias}, imposing further assumptions to control $\gamma$ may be a promising future direction. For example, Chen et al.\cite{chen2020understanding} made an attempt towards this aspect, but their assumption that $\{\nabla\ell(\vx,\xi),\xi\in\mathbb{D}\}$ is nearly symmetric, is a bit artificial and not intuitive.  Sankararaman et al.\cite{sankararaman2020impact} proposed a concept \textit{gradient confusion}, defined as $\gamma=-\min\{\langle\nabla\ell(\vx,\xi_i),\nabla\ell(\vx,\xi_j)\rangle:i\neq j\}$ to approximately quantify how the per-sample gradients \emph{align} to each other.

\textbf{The Added Noises for Privacy Guarantee}. For the gradient \emph{clipping}, the added noise (Gaussian perturbation) $\bar{z}\sim\mathcal{N}(0,c^2\sigma^2\mI_d)$ is proportional to $c$, while for gradient \emph{normalizing}  $z\sim\mathcal{N}(0,\sigma^2\mI_d)$ keeps invariant with $r$. This suggests that when tuning DP-SGD, $\eta$ needs to vary when $c$ changes, in order to control the noise component $\eta\bar{z}$ in each update. In contrast, DP-NSGD is more likely to be robust under different scales of $r$, and thus is easier to tune  heuristically. Extensive experiments in Section \ref{sec:4} also support this claim empirically.

\section{Experiments}\label{sec:4}

This section conducts experiments to demonstrate the efficacy of Algorithm \ref{alg:normalized SGD} and compare the behavior of DP-SGD and DP-NSGD empirically. One example for the proof of concept is training a ResNet20 \cite{he2016deep} with CIFAR-10 dataset. As in literature \cite{yu2020gradient}, we replace all batch normalization layers with group normalization \cite{wu2018group} layers for easily computing the per-sample gradients.
The non-private accuracy for CIFAR-10 is 90.4\%. We compare the performances of DP-NSGD and DP-SGD with a wide range of hyper-parameters and different learning rate scheduling rules. All experiments can be run on a single Tesla V100 with 16GB memory. The ResNet20 has 270K  trainable parameters. 

\textbf{Hyperparameter choices.}
We first fix the privacy budget $\epsilon=\{2.0,4.0,8.0\}, \delta=10^{-5}$, which corresponds to setting the noise multiplier $\sigma=\{3.6, 2.0, 1.2\}$ for the case of batch size 1000 and number of epochs 100 with R\'enyi differential privacy accountant \cite{abadi2016deep,mironov2019renyi}. There are tighter privacy accountants \cite{gopi2021numerical} that can save $\epsilon$ for this noise multiplier. We then fix the weight decay to be 0 and use the classical learning rate scheduling strategy that multiplies the initial $lr$ with $0.1$ at epoch 50 and $0.01$ at epoch 75 respectively. The hyperparameters to tune are the initial learning rate $lr$ and the clip threshold $c$ for DP-SGD \cite{abadi2016deep}, where $lr$ takes values $\{0.05, 0.1, 0.2, 0.4, 0.8, 1.6, 3.2\}$ and $c$ takes values $\{0.1, 0.4, 1.6, 6.4, 12.8\}$. At the same time, the hyperparameters to tune for DP-NSGD are the initial learning rate $lr$ and the regularizer $r$, where $lr$ takes values $\{0.05, 0.1, 0.2, 0.4, 0.8, 1.6, 3.2\}$ and $r$ takes values $\{0.0001, 0.001, 0.01, 0.1, 1.0\}$. We compare the validation accuracy of DP-SGD and DP-NSGD via heatmaps of the above hyperparameter choices in Figure \ref{fig:heatmap-dpnsgd}.  We can see that the performance of DP-NSGD is rather stable for the regularizer taking values from $10^{-4}$ to $1.0$ and it is mostly affected by the learning rate. This is in sharp contrast with the case of DP-SGD where the performance depends on both the learning rate and the clip threshold in a complicated way. This indicates that  it is easy to tune the hyperparameters for DP-NSGD, which could not only reduce the tuning effort but also save the privacy budget for tuning hyperparameters \cite{tramer2020differentially, papernot2021hyperparameter}.

\begin{figure}[ht]
\vspace{-2mm}
\begin{center}
\centerline{\includegraphics[width=0.9\linewidth]{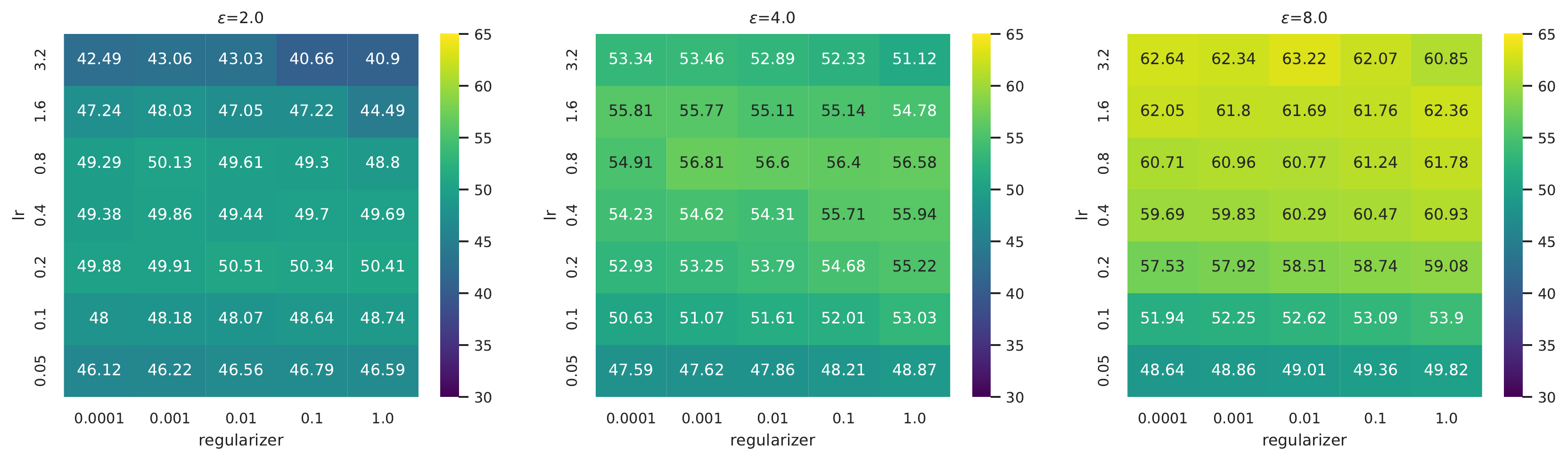}}
\centerline{\includegraphics[width=0.9\linewidth]{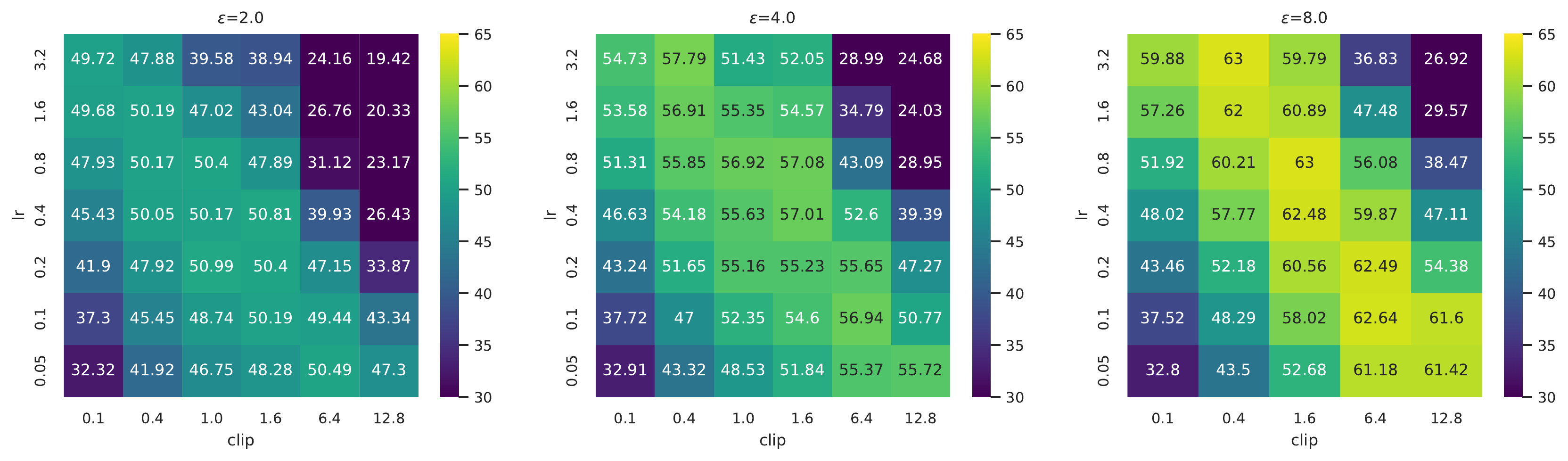}}
\caption{Experiments for ResNet20 on CIFAR10 task. Upper: Accuracy heatmap of DP-NSGD with varying lrs and regularizers. Lower: Accuracy heatmap of DP-SGD with varying lrs and clipping thresholds. The DP parameters are $\delta=1e^{-5}$ and  $\epsilon=2.0, 4.0, 8.0$ from left to right.}
\label{fig:heatmap-dpnsgd}
\end{center}
\vskip -0.2in
\end{figure}

We also run the above setting with the cyclic learning rate scheduling with $\text{min-lr}=0.02$ and $\text{max-lr}=1.0$. The best accuracy number are of DP-NSGD and DP-SGD can be as good as 66, which is comparable with the best number that is achieved  with model architecture modification in \cite{papernot2020tempered}. 

More experiments on language models are carried out in Appendix \ref{app:more-exp} and suggest similar observations.

\section{Concluding Remarks}\label{sec:conclusion}

In this paper, we have studied the convergence of two  algorithms, i.e., DP-SGD and DP-NSGD, for differentially private non-convex empirical risk minimization. We have achieved a rate that significantly improves over previous literature under similar setup and have analyzed the bias induced by the clipping or normalizing operation. %We have also compared the empirical behavior of DP-SGD and DP-NSGD, showing that the DP-NSGD is less sensitive to hyperparameters and hence easier to tune. Many directions are under explore. The first is how to extend this result to federated learning setting \cite{das2021dp,zhang2021understanding}. 
As for future directions, it is very interesting to consider the convergence theorems under stronger assumptions on the gradient distribution.

\bibliographystyle{plain}

\clearpage

\appendix
\section{Prerequisite Lemmas}\label{app:lemmas}

The following is a standard lemma for $(L_0,L_1)$-generalized smooth functions, and it can be obtained via Taylor's expansion. Throughout this appendix, we use $\mathbb{E}_k$ to denote taking expectation of $\{\vg_k^{(i)},i\in\mathcal{S}_k\}$ and $\vz_k$ conditioned on the past, especially $\vx_k$.
\begin{Lemma}[Lemma C.4, \cite{DBLP:journals/corr/abs-2110-12459}]\label{lemma:Descent Lemma}
A function $f:\mathbb{R}^d\rightarrow\mathbb{R}^d$ is $(L_0,L_1)$-generalized smooth, then for any $\vx,\vx^+\in\mathbb{R}^d$,
\begin{equation*}
    f(\vx^{+})\leq f(\vx)+\left\langle \nabla f(\vx),\vx^+ - \vx \right\rangle+\frac{L_0+L_1\Vert \nabla f(\vx)\Vert}{2}\Vert \vx^+ - \vx \Vert^2.
\end{equation*}
\end{Lemma}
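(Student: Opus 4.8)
The plan is to reduce this generalized descent inequality to the classical one-dimensional argument along the segment joining $\vx$ and $\vx^+$. Since $f$ is continuously differentiable (as required by Definition~\ref{def: relaxed smoothness}), I would introduce the auxiliary function $\phi(t)=f(\vx+t(\vx^+-\vx))$ for $t\in[0,1]$ and apply the fundamental theorem of calculus, giving $f(\vx^+)-f(\vx)=\int_0^1\langle\nabla f(\vx+t(\vx^+-\vx)),\,\vx^+-\vx\rangle\,dt$. Subtracting the linear term $\langle\nabla f(\vx),\vx^+-\vx\rangle$ from both sides then expresses the gap we must bound purely in terms of gradient differences: the quantity $f(\vx^+)-f(\vx)-\langle\nabla f(\vx),\vx^+-\vx\rangle$ equals $\int_0^1\langle\nabla f(\vx+t(\vx^+-\vx))-\nabla f(\vx),\,\vx^+-\vx\rangle\,dt$.

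Next I would bound the integrand using Cauchy--Schwarz, so that it is at most $\Vert\nabla f(\vx+t(\vx^+-\vx))-\nabla f(\vx)\Vert\cdot\Vert\vx^+-\vx\Vert$. The crucial step is then to invoke Definition~\ref{def: relaxed smoothness} in the correct orientation: applying it with first argument $\vx$ and second argument $\vx+t(\vx^+-\vx)$ yields $\Vert\nabla f(\vx+t(\vx^+-\vx))-\nabla f(\vx)\Vert\le(L_0+L_1\Vert\nabla f(\vx)\Vert)\cdot t\Vert\vx^+-\vx\Vert$, because the distance between the two points is exactly $t\Vert\vx^+-\vx\Vert$. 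This is precisely the point that needs care — the definition anchors the coefficient $L_0+L_1\Vert\nabla f(\cdot)\Vert$ at the \emph{first} argument, and choosing that first argument to be the fixed base point $\vx$ (rather than the moving point along the segment) is what makes the coefficient come out as $L_0+L_1\Vert\nabla f(\vx)\Vert$, matching the target. Had I instead anchored at the moving point, I would be left with a $t$-dependent coefficient $L_1\Vert\nabla f(\vx+t(\vx^+-\vx))\Vert$ that does not collapse into the clean bound.

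Finally I would substitute this estimate back and pull the constant coefficient $L_0+L_1\Vert\nabla f(\vx)\Vert$ together with the factor $\Vert\vx^+-\vx\Vert^2$ outside the integral, leaving $\int_0^1 t\,dt=\tfrac12$. This produces exactly $\frac{L_0+L_1\Vert\nabla f(\vx)\Vert}{2}\Vert\vx^+-\vx\Vert^2$ and hence the claimed inequality. There is no serious obstacle here: the only thing to watch is the orientation of the generalized-smoothness bound in the second step, together with the mild requirement that $f$ be $C^1$ so that the fundamental theorem of calculus is legitimate along the segment.
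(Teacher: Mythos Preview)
Your proposal is correct and follows exactly the approach the paper indicates: the paper does not give a detailed proof but simply remarks that the lemma ``can be obtained via Taylor's expansion,'' which is precisely the line-segment integration argument you carry out. Your care in anchoring the generalized-smoothness bound at the fixed point $\vx$ (so that the coefficient is $L_0+L_1\Vert\nabla f(\vx)\Vert$ rather than $t$-dependent) is the only subtlety, and you handle it correctly.
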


\begin{Lemma}\label{lemma:descent inequality}
For any $k\geq0$, we use $\vg_k$ to denote another realization of the underlying distribution behind the set of i.i.d. unbiased estimates $\{\vg^{(i)}_k:i\in\mathcal{S}_k\}$.
If we run DP-NSGD iteratively, the trajectory would satisfy the following bound:
\begin{equation}\label{eq:descent inequality 1}
    \mathbb{E}_k\left[f(\vx_{k+1})\right]-f(\vx_{k})\leq - \eta\mathbb{E}_k\left[\langle h_k\nabla f(\vx_k),\vg_k \rangle\right]+ \frac{L_0+L_1\Vert \nabla f(\vx_k)\Vert}{2}\eta^2\left(d\sigma^2 + \mathbb{E}_k\left\Vert h_k \vg_k \right\Vert^2\right).
\end{equation}
\end{Lemma}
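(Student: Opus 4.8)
The plan is to derive \eqref{eq:descent inequality 1} by applying the deterministic descent estimate of Lemma~\ref{lemma:Descent Lemma} along one step of DP-NSGD and then taking the conditional expectation $\mathbb{E}_k$ over the mini-batch $\mathcal{S}_k$ and the Gaussian noise $\vz_k$ given $\vx_k$, exploiting that $\vz_k$ is independent of the sampled gradients and centered.

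First I would write the update of Algorithm~\ref{alg:normalized SGD} as $\vx_{k+1}-\vx_k=-\eta(\bar{\vg}_k+\vz_k)$ with $\bar{\vg}_k:=\frac{1}{B}\sum_{i\in\mathcal{S}_k}h_k^{(i)}\vg_k^{(i)}$, and instantiate Lemma~\ref{lemma:Descent Lemma} at $\vx=\vx_k$, $\vx^+=\vx_{k+1}$ to obtain
\[
f(\vx_{k+1})\leq f(\vx_k)-\eta\langle\nabla f(\vx_k),\bar{\vg}_k+\vz_k\rangle+\frac{L_0+L_1\Vert\nabla f(\vx_k)\Vert}{2}\eta^2\Vert\bar{\vg}_k+\vz_k\Vert^2 .
\]
Taking $\mathbb{E}_k$, the first-order term simplifies because $\mathbb{E}_k[\vz_k]=0$ and because the summands $h_k^{(i)}\vg_k^{(i)}$ are i.i.d.\ copies of the generic realization $h_k\vg_k$, so $\mathbb{E}_k[\bar{\vg}_k]=\mathbb{E}_k[h_k\vg_k]$ and hence $-\eta\langle\nabla f(\vx_k),\mathbb{E}_k[\bar{\vg}_k]\rangle=-\eta\,\mathbb{E}_k[\langle h_k\nabla f(\vx_k),\vg_k\rangle]$. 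For the second-order term, independence and zero mean of $\vz_k$ give $\mathbb{E}_k\Vert\bar{\vg}_k+\vz_k\Vert^2=\mathbb{E}_k\Vert\bar{\vg}_k\Vert^2+\mathbb{E}_k\Vert\vz_k\Vert^2=\mathbb{E}_k\Vert\bar{\vg}_k\Vert^2+d\sigma^2$, and convexity of $\Vert\cdot\Vert^2$ applied to the average (Jensen) yields $\mathbb{E}_k\Vert\bar{\vg}_k\Vert^2\leq\frac{1}{B}\sum_{i\in\mathcal{S}_k}\mathbb{E}_k\Vert h_k^{(i)}\vg_k^{(i)}\Vert^2=\mathbb{E}_k\Vert h_k\vg_k\Vert^2$. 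Substituting these two estimates into the display and collecting terms gives exactly \eqref{eq:descent inequality 1}.

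This argument has no genuine difficulty; the only thing requiring care is the mini-batch bookkeeping, namely passing from the batch average $\bar{\vg}_k$ to the single-sample notation $h_k\vg_k$ — an exact identity in the first-order term via the i.i.d.\ property, but only an inequality in the second-order term via Jensen's inequality — while keeping the noise $\vz_k$ independent and centered so that its cross term with $\bar{\vg}_k$ vanishes and it contributes only the clean $d\sigma^2$ variance.
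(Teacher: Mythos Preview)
Your proposal is correct and follows essentially the same route as the paper: apply Lemma~\ref{lemma:Descent Lemma}, take $\mathbb{E}_k$, use the independence and zero mean of $\vz_k$ to isolate the $d\sigma^2$ term and kill the cross term, and then bound the squared norm of the batch average by the single-sample second moment. The only cosmetic difference is that the paper phrases the last step as the Cauchy--Schwarz inequality $\bigl\Vert\sum_i h_k^{(i)}\vg_k^{(i)}\bigr\Vert^2\leq B\sum_i\Vert h_k^{(i)}\vg_k^{(i)}\Vert^2$ rather than Jensen on $\Vert\cdot\Vert^2$, which is the same inequality.
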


\begin{proof}
The updating rule of our iterative algorithm could be summarized as
\begin{equation*}
    \vx_{k+1}=\vx_{k}-\eta\left(\frac{1}{B}\sum_{i\in\mathcal{S}_k}h^{(i)}_kg^{(i)}_k+ \vz_k\right),\quad \vz_k\sim\mathcal{N}(0,\sigma^2\mI_d).
\end{equation*}
By taking expectation $\mathbb{E}_k$ conditioned on the past, we rewrite the first-order term in Lemma \ref{lemma:Descent Lemma} into
\begin{equation}\label{eq:batch first}
    \mathbb{E}_k\left[\left\langle \nabla f(\vx),\vx_{k} - \vx_{k+1} \right\rangle\right]=\eta\mathbb{E}_k\left[\langle h_k\nabla f(\vx_k),\vg_k \rangle\right].
\end{equation}
In the same manner, we bound the second-order term by
\begin{equation}\label{eq:batch second}
    \mathbb{E}_k\Vert \vx_{k+1}-\vx_k\Vert^2=\eta^2 d \sigma^2 + \frac{\eta^2}{B^2} \mathbb{E}_k \left\Vert \sum_{i\in\mathcal{S}_k}h^{(i)}_k \vg^{(i)}_k \right\Vert^2\leq \eta^2 \left(d\sigma^2 + \mathbb{E}_k\left\Vert h_k \vg_k \right\Vert^2\right),
\end{equation}
where the last inequality follows from an elementary Cauchy-Schwarz inequality,
\begin{equation*}
    \left\Vert \sum_{i\in\mathcal{S}_k}h^{(i)}_k \vg^{(i)}_k \right\Vert^2\leq B\sum_{i\in\mathcal{S}_k}\left\Vert h^{(i)}_k \vg^{(i)}_k \right\Vert^2.
\end{equation*}
Plug \eqref{eq:batch first} and \eqref{eq:batch second} into Lemma \ref{lemma:Descent Lemma} to obtain the desired result.
\end{proof}

\begin{Remark}
This lemma implies that mini batch size $B$ does not affect expected upper bounds, due to \emph{per-sample} gradient normalization. We need to point out that $B$ could still influence high-probability upper bounds, and call for future investigations. 
\end{Remark}

In Section \ref{subsec:B.1}, we will upper bound the second-order term $\frac{L_0+L_1\Vert \nabla f(\vx_k)\Vert}{2}\eta^2\left(d\sigma^2 + \mathbb{E}_k\left\Vert h_k \vg_k \right\Vert^2\right)$ by a sum of $\alpha\eta h_k\Vert\nabla f(\vx_k)\Vert$ (for some $0<\alpha<1$) and another term of $\mathcal{O}(\eta^2)$ via a proper scaling of $\eta$. We firstly present the following lemma to provide a simplified lower bound for the first-order terms
\begin{equation}
    \eta\mathbb{E}_k\left[\langle h_k\nabla f(\vx_k),\vg_k \rangle\right]-\eta\alpha\mathbb{E}_{k}[h_k]\Vert\nabla f(\vx_k)\Vert^2.
\end{equation}

\begin{Lemma}[Lower bound first-order terms for normalizing]\label{lemma:first order lower bound}
Define a function $A:\mathbb{R}_+\rightarrow\mathbb{R}$ as
\begin{equation}
    A(s)=\begin{cases}
    \left(\dfrac{\tau_0}{r(1-\tau_1)+2\tau_0}-\dfrac{\alpha}{1-\tau_1}\right)s, & \text{if }s\geq\dfrac{\tau_0}{1-\tau_1};\\
    \dfrac{(1-\alpha)(1-\tau_1)}{r(1-\tau_1)+2\tau_0}s^2-\dfrac{4\tau_0^3}{r(r+\tau_0)(1-\tau_1)^3}, & \text{otherwise.}
    \end{cases}
\end{equation}
Then we have
\begin{equation*}
    \eta\mathbb{E}_k\left[\langle h_k\nabla f(\vx_k),\vg_k \rangle\right]-\eta\alpha\mathbb{E}_{k}[h_k]\Vert\nabla f(\vx_k)\Vert^2\geq\eta A(\Vert\nabla f(\vx_k)\Vert).
\end{equation*}
\end{Lemma}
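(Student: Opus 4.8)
The plan is to divide both sides by $\eta>0$ and prove the inequality for $\mathbb{E}_k[\langle h_k\nabla f(\vx_k),\vg_k\rangle]-\alpha\mathbb{E}_k[h_k]\Vert\nabla f(\vx_k)\Vert^2$ directly, writing $\vg_k=\nabla f(\vx_k)+\ve_k$ with $\ve_k\triangleq\vg_k-\nabla f(\vx_k)$ and abbreviating $G\triangleq\Vert\nabla f(\vx_k)\Vert$, $h_k=1/(r+\Vert\vg_k\Vert)$. Since $\mathbb{E}_k[\ve_k]=0$ we get $\mathbb{E}_k[\langle\nabla f(\vx_k),\vg_k\rangle]=G^2$ and $\mathbb{E}_k[\langle\nabla f(\vx_k),\ve_k\rangle]=0$, so the quantity equals $(1-\alpha)\mathbb{E}_k[h_k]G^2+\mathbb{E}_k[h_k\langle\nabla f(\vx_k),\ve_k\rangle]$. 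Assumption~\ref{assumption: a.s. sampling noise bound} gives $\Vert\ve_k\Vert\le\tau_0+\tau_1 G$ a.s., hence $(1-\tau_1)G-\tau_0\le\Vert\vg_k\Vert\le(1+\tau_1)G+\tau_0$, and, using $r>\tau_0$, the almost sure envelopes $\frac{1}{r+\tau_0+(1+\tau_1)G}\le h_k\le\min\{\,1/r,\ 1/(r-\tau_0+(1-\tau_1)G)\,\}$. These envelopes together with $\mathbb{E}_k[\ve_k]=0$ are the only tools; the proof then splits on the two regimes defining $A$.

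On the set $G\ge\tau_0/(1-\tau_1)$: here $\langle\nabla f(\vx_k),\vg_k\rangle=G^2+\langle\nabla f(\vx_k),\ve_k\rangle\ge G\big((1-\tau_1)G-\tau_0\big)\ge 0$ almost surely, so I bound $h_k\langle\nabla f(\vx_k),\vg_k\rangle\ge\langle\nabla f(\vx_k),\vg_k\rangle/(r+\tau_0+(1+\tau_1)G)$ pointwise, take $\mathbb{E}_k$, and use $\mathbb{E}_k[\langle\nabla f(\vx_k),\vg_k\rangle]=G^2$ to obtain $\mathbb{E}_k[h_k\langle\nabla f(\vx_k),\vg_k\rangle]\ge G^2/(r+\tau_0+(1+\tau_1)G)$. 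For the subtracted term I use $\mathbb{E}_k[h_k]\le 1/(r-\tau_0+(1-\tau_1)G)\le 1/((1-\tau_1)G)$, so $\alpha\mathbb{E}_k[h_k]G^2\le\alpha G/(1-\tau_1)$. Finally, $G\mapsto G/(r+\tau_0+(1+\tau_1)G)$ is increasing and, evaluated at the endpoint $G=\tau_0/(1-\tau_1)$, equals $\tau_0/(r(1-\tau_1)+2\tau_0)$; this gives $G^2/(r+\tau_0+(1+\tau_1)G)\ge\frac{\tau_0}{r(1-\tau_1)+2\tau_0}G$ on the whole regime, matching the first branch of $A$.

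On the set $G<\tau_0/(1-\tau_1)$: now $\langle\nabla f(\vx_k),\vg_k\rangle$ may change sign, so I keep the decomposition above. Here $\Vert\vg_k\Vert\le(1+\tau_1)G+\tau_0<2\tau_0/(1-\tau_1)$ a.s., hence $h_k\ge h_0\triangleq(1-\tau_1)/(r(1-\tau_1)+2\tau_0)$, which yields $(1-\alpha)\mathbb{E}_k[h_k]G^2\ge\frac{(1-\alpha)(1-\tau_1)}{r(1-\tau_1)+2\tau_0}G^2$ (the quadratic branch of $A$). For the cross term I center using $\mathbb{E}_k[\langle\nabla f(\vx_k),\ve_k\rangle]=0$, writing $\mathbb{E}_k[h_k\langle\nabla f(\vx_k),\ve_k\rangle]=\mathbb{E}_k[(h_k-h_0)\langle\nabla f(\vx_k),\ve_k\rangle]$, and bound $0\le h_k-h_0\le 1/r-h_0=\frac{2\tau_0}{r(r(1-\tau_1)+2\tau_0)}$ and $|\langle\nabla f(\vx_k),\ve_k\rangle|\le G(\tau_0+\tau_1 G)<\tau_0^2/(1-\tau_1)^2$; multiplying, $|\mathbb{E}_k[h_k\langle\nabla f(\vx_k),\ve_k\rangle]|\le\frac{2\tau_0^3}{r(r(1-\tau_1)+2\tau_0)(1-\tau_1)^2}$. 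A short verification shows this is $\le\frac{4\tau_0^3}{r(r+\tau_0)(1-\tau_1)^3}$ whenever $r>\tau_0$ (it reduces to $\tau_0(1-\tau_1)\le r(1-\tau_1)+4\tau_0$). Combining gives the second branch of $A$.

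The algebraic envelope estimates are routine; the one step requiring care is the cross term in the second regime — the right centering constant $h_0$ so that $\mathbb{E}_k[\ve_k]=0$ can be exploited, and checking that the resulting error constant is absorbable into $\frac{4\tau_0^3}{r(r+\tau_0)(1-\tau_1)^3}$ under only $r>\tau_0$ (rather than a stronger gap such as $r\ge 2\tau_0$). I expect this to be the main, and only mildly delicate, obstacle; everything else follows from the a.s. bounds on $h_k$ and monotonicity.
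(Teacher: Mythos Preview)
Your proposal is correct and follows essentially the same approach as the paper: the same case split on $\Vert\nabla f(\vx_k)\Vert$ relative to $\tau_0/(1-\tau_1)$, the same use of positivity of $\langle\nabla f(\vx_k),\vg_k\rangle$ in the large-gradient regime, and the same decomposition $(1-\alpha)\mathbb{E}_k[h_k]G^2+\mathbb{E}_k[h_k\langle\nabla f(\vx_k),\ve_k\rangle]$ with centering via $\mathbb{E}_k[\ve_k]=0$ in the small-gradient regime. The only minor difference is the centering constant for the cross term---the paper subtracts $1/(r+\tau_0+(1+\tau_1)G)$ whereas you subtract $h_0=(1-\tau_1)/(r(1-\tau_1)+2\tau_0)$---but both choices are valid and lead to the same bound (your algebraic check $\tau_0(1-\tau_1)\le r(1-\tau_1)+4\tau_0$ is correct and in fact holds for all $r\ge 0$, not just $r>\tau_0$).
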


\begin{proof}
We prove this lemma via separating the range of $\Vert\nabla f(\vx_k)\Vert$. When $\Vert\nabla f(\vx_k)\Vert\geq\tau_0/(1-\tau_1)$, then
\begin{align*}
    &\langle \nabla f(\vx_k),\vg_k \rangle=\Vert\nabla f(\vx_k)\Vert^2+\langle \nabla f(\vx_k),\vg_k-\nabla f(\vx_k) \rangle\\
    \geq& (1-\tau_1)\Vert\nabla f(\vx_k)\Vert^2-\tau_0\Vert\nabla f(\vx_k)\Vert\geq0,
\end{align*}
followed by
\begin{align}
    &\eta\mathbb{E}_k\left[\langle h_k\nabla f(\vx_k),\vg_k \rangle\right]-\eta\alpha\mathbb{E}_{k}[h_k]\Vert\nabla f(\vx_k)\Vert^2\notag\\
    =&\mathbb{E}_k\left[\frac{\eta\left\langle\nabla f(\vx_k),\vg_k \right\rangle}{r+\Vert \vg_k\Vert}\right]-\mathbb{E}_{k}\left[\frac{\alpha\eta}{(r+\Vert \vg_k\Vert)}\Vert\nabla f(\vx_k)\Vert^2\right]\notag\\
    \geq&\mathbb{E}_k\left[\frac{\eta\left\langle\nabla f(\vx_k),\vg_k \right\rangle}{r+\tau_0+(1+\tau_1)\Vert \nabla f(\vx_k)\Vert}\right]-\frac{\alpha}{1-\tau_1}\eta\Vert\nabla f(\vx_k)\Vert\notag\\
    =&\frac{\eta\left\Vert\nabla f(\vx_k)\right\Vert^2}{r+\tau_0+(1+\tau_1)\Vert \nabla f(\vx_k)\Vert}-\frac{\alpha}{1-\tau_1}\eta\Vert\nabla f(\vx_k)\Vert\notag\\
    \geq&\left(\frac{\tau_0}{r(1-\tau_1)+2\tau_0}-\frac{\alpha}{1-\tau_1}\right)\eta\Vert\nabla f(\vx_k)\Vert.\label{eq:first order lower bound 1}
\end{align}
When $\Vert\nabla f(\vx_k)\Vert< \tau_0/(1-\tau_1)$, we have $ \Vert \vg_k-\nabla f(\vx_k)\Vert\leq\tau_0+\tau_1\Vert\nabla f(\vx_k)\Vert\leq\tau_0/(1-\tau_1)$ as well. Then we decompose the first-order terms by
\begin{align*}
    &\eta\mathbb{E}_k\left[\langle h_k\nabla f(\vx_k),\vg_k \rangle\right]-\eta\alpha\mathbb{E}_{k}[h_k]\Vert\nabla f(\vx_k)\Vert^2\\
    =&(1-\alpha)\eta\mathbb{E}_k[h_k]\Vert\nabla f(\vx_k)\Vert^2+\mathbb{E}_k\left[\frac{\eta\left\langle\nabla f(\vx_k),\vg_k-\nabla f(\vx_k) \right\rangle}{r+\Vert \vg_k\Vert}\right].
\end{align*}
On one hand, we know
\begin{equation*}
    h_k=\frac{1}{r+\Vert\vg_k\Vert}\geq\frac{1}{r+\tau_0+(1+\tau_1)\Vert\nabla f(\vx_k)\Vert}\geq\frac{1-\tau_1}{r(1-\tau_1)+2\tau_0}.
\end{equation*}
On the other hand, we also have
\begin{align*}
    &\mathbb{E}_k\left[\frac{\eta\left\langle\nabla f(\vx_k),\vg_k-\nabla f(\vx_k) \right\rangle}{r+\Vert \vg_k\Vert}\right]\\
    =&\mathbb{E}_k\left[\frac{\eta\left\langle\nabla f(\vx_k),\vg_k-\nabla f(\vx_k) \right\rangle}{r+\tau_0+(1+\tau_1)\Vert\nabla f(\vx_k)\Vert}\right]+\mathbb{E}_k\left[\frac{\eta\left[(1+\tau_1)\Vert\nabla f(\vx_k)\Vert+\tau_0-\Vert\vg_k\Vert\right]\left\langle\nabla f(\vx_k),\vg_k-\nabla f(\vx_k) \right\rangle}{(r+\Vert \vg_k\Vert)(r+\tau_0+(1+\tau_1)\Vert\nabla f(\vx_k)\Vert)}\right]\\
    \geq&-\eta\frac{4\tau_0^3}{r(r+\tau_0)(1-\tau_1)^3}.
\end{align*}
Therefore, we have
\begin{align}
    &\eta\mathbb{E}_k\left[\langle h_k\nabla f(\vx_k),\vg_k \rangle\right]-\eta\alpha\mathbb{E}_{k}[h_k]\Vert\nabla f(\vx_k)\Vert^2\notag\\
    \geq&\eta\left[\frac{(1-\alpha)(1-\tau_1)}{r(1-\tau_1)+2\tau_0}\Vert\nabla f(\vx_k)\Vert^2-\frac{4\tau_0^3}{r(r+\tau_0)(1-\tau_1)^3}\right]\label{eq:first order lower bound 2}.
\end{align}
Combine both cases to derive the required lemma.
\end{proof}

We then establish similar results for gradient clipping based algorithms, whose updating rule is described by \eqref{eq:updating rule clipping}. Firstly, we mimic the proof of Lemma \ref{lemma:descent inequality} and derive without proof the following lemma.

\begin{Lemma}\label{lemma:descent inequality clip}
For any $k\geq0$, we use $\vg_k$ to denote another realization of the underlying distribution behind the set of i.i.d. unbiased estimates $\{\vg^{(i)}_k:i\in\mathcal{S}_k\}$. If we run DP-SGD iteratively, the trajectory would satisfy the following bound:
\begin{equation}\label{eq:descent inequality 1 clip}
    \mathbb{E}_k\left[f(\vx_{k+1})\right]-f(\vx_{k})\leq - \eta\mathbb{E}_k\left[\langle \bar{h}_k\nabla f(\vx_k),\vg_k \rangle\right]+ \frac{L_0+L_1\Vert \nabla f(\vx_k)\Vert}{2}\eta^2\left(dc^2\sigma^2 + \mathbb{E}_k\left\Vert \bar{h}_k \vg_k \right\Vert^2\right).
\end{equation}
\end{Lemma}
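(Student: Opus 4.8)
The plan is to transcribe the proof of Lemma~\ref{lemma:descent inequality} essentially verbatim, the only structural changes being that the per-sample factor $h_k^{(i)}$ is replaced by the clipping factor $\bar{h}_k^{(i)}=\min\{1,c/\Vert\vg_k^{(i)}\Vert\}$ and the injected noise $\vz_k\sim\mathcal{N}(0,\sigma^2\mI_d)$ is replaced by $\bar{\vz}_k\sim\mathcal{N}(0,c^2\sigma^2\mI_d)$, so that the DP-SGD update reads $\vx_{k+1}=\vx_k-\eta\big(\tfrac1B\sum_{i\in\mathcal{S}_k}\bar{h}_k^{(i)}\vg_k^{(i)}+\bar{\vz}_k\big)$. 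First I would invoke the $(L_0,L_1)$ descent lemma, Lemma~\ref{lemma:Descent Lemma}, with $\vx^+=\vx_{k+1}$ and $\vx=\vx_k$, and then take the conditional expectation $\mathbb{E}_k$ of the resulting first- and second-order terms, using throughout that $\bar{\vz}_k$ is independent of the sampled gradients with mean $\vzero$ and per-coordinate variance $c^2\sigma^2$, and that $\{\vg_k^{(i)}:i\in\mathcal{S}_k\}$ are i.i.d.\ with $\mathbb{E}_k[\vg_k^{(i)}]=\nabla f(\vx_k)$.

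For the first-order term, linearity of expectation together with $\mathbb{E}_k[\bar{\vz}_k]=\vzero$ gives $\mathbb{E}_k\langle\nabla f(\vx_k),\vx_k-\vx_{k+1}\rangle=\tfrac{\eta}{B}\sum_{i\in\mathcal{S}_k}\mathbb{E}_k\langle\bar{h}_k^{(i)}\nabla f(\vx_k),\vg_k^{(i)}\rangle=\eta\,\mathbb{E}_k\langle\bar{h}_k\nabla f(\vx_k),\vg_k\rangle$, where, exactly as in the statement, $\vg_k$ (and hence $\bar{h}_k$) denotes a generic draw from the common distribution of the $\vg_k^{(i)}$; note that $\bar{h}_k^{(i)}$ is correlated with $\vg_k^{(i)}$, but it is never pulled out of an expectation, so this causes no difficulty. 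For the second-order term, independence of $\bar{\vz}_k$ makes the cross term vanish, leaving $\mathbb{E}_k\Vert\vx_{k+1}-\vx_k\Vert^2=\eta^2 dc^2\sigma^2+\tfrac{\eta^2}{B^2}\mathbb{E}_k\big\Vert\sum_{i\in\mathcal{S}_k}\bar{h}_k^{(i)}\vg_k^{(i)}\big\Vert^2$, and the Cauchy--Schwarz bound $\big\Vert\sum_{i\in\mathcal{S}_k}\bar{h}_k^{(i)}\vg_k^{(i)}\big\Vert^2\le B\sum_{i\in\mathcal{S}_k}\big\Vert\bar{h}_k^{(i)}\vg_k^{(i)}\big\Vert^2$ combined with the i.i.d.\ property reduces this to $\mathbb{E}_k\Vert\vx_{k+1}-\vx_k\Vert^2\le\eta^2\big(dc^2\sigma^2+\mathbb{E}_k\Vert\bar{h}_k\vg_k\Vert^2\big)$, mirroring \eqref{eq:batch first}--\eqref{eq:batch second}. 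Substituting these into Lemma~\ref{lemma:Descent Lemma} and rearranging yields \eqref{eq:descent inequality 1 clip}.

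I do not expect a genuine obstacle here: the only quantitative change from Lemma~\ref{lemma:descent inequality} is that the Gaussian noise now contributes $dc^2\sigma^2$ rather than $d\sigma^2$, since $\bar{\vz}_k$ has per-coordinate variance $c^2\sigma^2$; the clipping-specific fact $\bar{h}_k^{(i)}\le1$ is not used at all at this stage and is only exploited later, in the clipped analogue of Lemma~\ref{lemma:first order lower bound}, when lower bounding the first-order term $\bar{\mathfrak{A}}$. The one item worth double-checking is the bookkeeping of the noise scale: the prose earlier in the paper writes the DP-SGD noise as $\mathcal{N}(0,c\sigma^2\mI_d)$, whereas the $dc^2\sigma^2$ appearing in \eqref{eq:descent inequality 1 clip} (and in the learning-rate choice \eqref{eq:set eta clip}) requires covariance $c^2\sigma^2\mI_d$, matching \eqref{eq:updating rule clipping}; I would standardize on $c^2\sigma^2\mI_d$ throughout.
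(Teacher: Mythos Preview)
Your proposal is correct and matches the paper's approach exactly: the paper itself omits the proof, stating only that one should ``mimic the proof of Lemma~\ref{lemma:descent inequality},'' which is precisely what you do. Your observation about the noise-scale typo ($c\sigma^2$ versus $c^2\sigma^2$) is a valid catch; the computations in \eqref{eq:descent inequality 1 clip} and \eqref{eq:set eta clip} indeed require $\bar{\vz}_k\sim\mathcal{N}(0,c^2\sigma^2\mI_d)$ as in \eqref{eq:updating rule clipping}.
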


Then, we provide another lemma for clipping, similar to Lemma \ref{lemma:first order lower bound}.

\begin{Lemma}[Lower bound first-order terms for clipping]\label{lemma:first order lower bound clip}
Define a function $B:\mathbb{R}_+\rightarrow\mathbb{R}$ as
\begin{equation}
    B(s)=\begin{cases}
    \left(\dfrac{\tau_0c}{c(1-\tau_1)+2\tau_0}-\dfrac{\alpha}{1-\tau_1}\right)s, & \text{if }s\geq\dfrac{\tau_0}{1-\tau_1};\\
    (1-\alpha)s^2, & \text{otherwise.}
    \end{cases}
\end{equation}
If we take $c\geq 2\tau_0/(1-\tau_1)$, then
\begin{equation*}
    \eta\mathbb{E}_k\left[\langle \bar{h}_k\nabla f(\vx_k),\vg_k \rangle\right]-\eta\alpha\mathbb{E}_{k}[\bar{h}_k]\Vert\nabla f(\vx_k)\Vert^2\geq\eta B(\Vert\nabla f(\vx_k)\Vert).
\end{equation*}
\end{Lemma}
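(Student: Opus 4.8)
The plan is to mirror the proof of Lemma~\ref{lemma:first order lower bound} almost line for line, but exploiting the simpler structure of the clipping factor $\bar{h}_k=\min\{1,c/\Vert\vg_k\Vert\}$ and the hypothesis $c\geq 2\tau_0/(1-\tau_1)$. First I would split the range of $\Vert\nabla f(\vx_k)\Vert$ at the threshold $\tau_0/(1-\tau_1)$, exactly as in Lemma~\ref{lemma:first order lower bound}.

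For the case $\Vert\nabla f(\vx_k)\Vert\geq\tau_0/(1-\tau_1)$: I would first record that $\langle\nabla f(\vx_k),\vg_k\rangle\geq (1-\tau_1)\Vert\nabla f(\vx_k)\Vert^2-\tau_0\Vert\nabla f(\vx_k)\Vert\geq 0$, which is the same computation already used in the excerpt. Then the key tool is the ordering \eqref{eq:ordering}, namely $\frac{c}{c+\Vert\vg_k\Vert}\leq\bar{h}_k\leq\frac{2c}{c+\Vert\vg_k\Vert}$. Since the inner product is nonnegative, I can lower bound $\bar{h}_k$ by $\frac{c}{c+\Vert\vg_k\Vert}\geq\frac{c}{c+\tau_0+(1+\tau_1)\Vert\nabla f(\vx_k)\Vert}$ in the first term, and upper bound $\bar{h}_k$ by $\frac{2c}{c+\Vert\vg_k\Vert}\leq\frac{1}{1-\tau_1}$ (using $\Vert\vg_k\Vert\geq(1-\tau_1)\Vert\nabla f(\vx_k)\Vert-\tau_0$ and $c\geq 2\tau_0/(1-\tau_1)$, so that $c+\Vert\vg_k\Vert\geq c+(1-\tau_1)\Vert\nabla f(\vx_k)\Vert-\tau_0\geq\tfrac{c}{2}(1-\tau_1)$ after absorbing $\tau_0$) in the $\alpha$-term. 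Taking $\mathbb{E}_k$, the first term becomes $\frac{\eta\Vert\nabla f(\vx_k)\Vert^2}{c+\tau_0+(1+\tau_1)\Vert\nabla f(\vx_k)\Vert}$, which I then lower bound by $\frac{\tau_0 c}{c(1-\tau_1)+2\tau_0}\eta\Vert\nabla f(\vx_k)\Vert$ using $\Vert\nabla f(\vx_k)\Vert\geq\tau_0/(1-\tau_1)$ (so the denominator is at most $\frac{c(1-\tau_1)+2\tau_0}{(1-\tau_1)\tau_0}\cdot\Vert\nabla f(\vx_k)\Vert\cdot\frac{1}{c}$, after the standard manipulation $c+\tau_0+(1+\tau_1)s\leq\frac{(c(1-\tau_1)+2\tau_0)s}{\tau_0 c/(1-\tau_1)}$-type inequality). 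Combining yields the claimed bound $\bigl(\frac{\tau_0 c}{c(1-\tau_1)+2\tau_0}-\frac{\alpha}{1-\tau_1}\bigr)\eta\Vert\nabla f(\vx_k)\Vert$.

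For the case $\Vert\nabla f(\vx_k)\Vert<\tau_0/(1-\tau_1)$: here the crucial simplification is that $\Vert\vg_k\Vert\leq\Vert\nabla f(\vx_k)\Vert+\tau_0+\tau_1\Vert\nabla f(\vx_k)\Vert=(1+\tau_1)\Vert\nabla f(\vx_k)\Vert+\tau_0<\frac{2\tau_0}{1-\tau_1}\leq c$, so $\bar{h}_k\equiv 1$ almost surely. Then $\eta\mathbb{E}_k[\langle\bar{h}_k\nabla f(\vx_k),\vg_k\rangle]-\eta\alpha\mathbb{E}_k[\bar{h}_k]\Vert\nabla f(\vx_k)\Vert^2 = \eta\langle\nabla f(\vx_k),\mathbb{E}_k\vg_k\rangle-\eta\alpha\Vert\nabla f(\vx_k)\Vert^2 = (1-\alpha)\eta\Vert\nabla f(\vx_k)\Vert^2$ by unbiasedness, which is exactly $\eta B(\Vert\nabla f(\vx_k)\Vert)$. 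Combining the two cases gives the lemma.

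The main obstacle — really the only delicate point — is verifying the constant in the first case: one must check that with $c\geq 2\tau_0/(1-\tau_1)$ the upper bound $\bar{h}_k\leq 1/(1-\tau_1)$ genuinely holds (this is where the clipping-threshold hypothesis is consumed), and that the algebraic step $\frac{\Vert\nabla f(\vx_k)\Vert^2}{c+\tau_0+(1+\tau_1)\Vert\nabla f(\vx_k)\Vert}\geq\frac{\tau_0 c}{c(1-\tau_1)+2\tau_0}\Vert\nabla f(\vx_k)\Vert$ is valid for $\Vert\nabla f(\vx_k)\Vert\geq\tau_0/(1-\tau_1)$; this reduces to showing $\frac{\Vert\nabla f(\vx_k)\Vert}{c+\tau_0+(1+\tau_1)\Vert\nabla f(\vx_k)\Vert}$ is increasing in $\Vert\nabla f(\vx_k)\Vert$ and then evaluating at the endpoint, a routine but bookkeeping-heavy computation. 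Everything else is a direct transcription of the normalization proof with $r$ replaced by the effective quantities coming from $c$.
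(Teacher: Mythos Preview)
Your two-case split and the treatment of the small-gradient case $\Vert\nabla f(\vx_k)\Vert<\tau_0/(1-\tau_1)$ are exactly what the paper does, and that part is correct: the hypothesis $c\ge 2\tau_0/(1-\tau_1)$ forces $\Vert\vg_k\Vert\le c$, so $\bar h_k\equiv 1$ and unbiasedness gives $(1-\alpha)\eta\Vert\nabla f(\vx_k)\Vert^2$ on the nose. (Incidentally, this is where the clipping-threshold hypothesis is actually consumed, not in Case~1 as you suggest in your last paragraph; the bound $\bar h_k\le 1\le 1/(1-\tau_1)$ needs no hypothesis at all.)

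The gap is in the large-gradient case, specifically the $\alpha$-term. You upper bound $\bar h_k$ by a \emph{constant} $1/(1-\tau_1)$; that yields
\[
\eta\alpha\,\mathbb{E}_k[\bar h_k]\,\Vert\nabla f(\vx_k)\Vert^2 \;\le\; \frac{\alpha}{1-\tau_1}\,\eta\,\Vert\nabla f(\vx_k)\Vert^2,
\]
which is \emph{quadratic} in $\Vert\nabla f(\vx_k)\Vert$, not the linear quantity $\dfrac{\alpha}{1-\tau_1}\eta\Vert\nabla f(\vx_k)\Vert$ that appears in $B(s)$. Subtracting a quadratic term from your (correctly linear) lower bound on the first term cannot produce the stated linear lower bound for all $s\ge\tau_0/(1-\tau_1)$. (Your own chain, by the way, only gives $\frac{2c}{c+\Vert\vg_k\Vert}\le \frac{4}{1-\tau_1}$ from $c+\Vert\vg_k\Vert\ge \frac{c}{2}(1-\tau_1)$, not $\frac{1}{1-\tau_1}$.) Also, the first term after taking $\mathbb{E}_k$ should carry a factor $c$ in the numerator, i.e.\ $\dfrac{\eta c\,\Vert\nabla f(\vx_k)\Vert^2}{c+\tau_0+(1+\tau_1)\Vert\nabla f(\vx_k)\Vert}$; you dropped it.

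The point you are missing, and what the paper does instead, is to use a $\Vert\vg_k\Vert$-dependent upper bound on $\bar h_k$ so that one power of $\Vert\nabla f(\vx_k)\Vert$ cancels. The paper bounds $\bar h_k\le c/\Vert\vg_k\Vert$ (the displayed $1/\Vert\vg_k\Vert$ is a typo), giving
\[
\eta\alpha\,\bar h_k\,\Vert\nabla f(\vx_k)\Vert^2 \;\le\; \frac{\eta\alpha c\,\Vert\nabla f(\vx_k)\Vert^2}{\Vert\vg_k\Vert},
\]
and then uses $\Vert\vg_k\Vert\ge(1-\tau_1)\Vert\nabla f(\vx_k)\Vert-\tau_0$ to reduce to a linear term. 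If you want to rescue your route via the upper half of \eqref{eq:ordering}, you must \emph{keep} $c+\Vert\vg_k\Vert$ in the denominator and bound it below by $(1-\tau_1)\Vert\nabla f(\vx_k)\Vert$ (using $c>\tau_0$); you then obtain a linear term, but with the worse coefficient $\dfrac{2c\alpha}{1-\tau_1}$ rather than $\dfrac{\alpha}{1-\tau_1}$, so the exact statement of $B(s)$ would not follow.
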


\begin{proof}
Recall that $\bar{h}_k$ is defined as
\begin{equation*}
    \bar{h}_k=\min\left\{1,\frac{c}{\Vert\vg_k\Vert}\right\}\geq\frac{c}{c+\Vert\vg_k\Vert}.
\end{equation*}
Again, we take the strategy of separting the range of $\Vert\nabla f(\vx_k)\Vert$. When $\Vert\nabla f(\vx_k)\Vert\geq\tau_0/(1-\tau_1)$, we know $\left\langle \nabla f(\vx_k),\vg_k \right\rangle\geq0$, followed by
\begin{align}
    &\eta\mathbb{E}_k\left[\langle \bar{h}_k\nabla f(\vx_k),\vg_k \rangle\right]-\eta\alpha\mathbb{E}_{k}[\bar{h}_k]\Vert\nabla f(\vx_k)\Vert^2\notag\\
    \geq&\mathbb{E}_k\left[\frac{\eta c\left\langle\nabla f(\vx_k),\vg_k \right\rangle}{c+\Vert \vg_k\Vert}\right]-\mathbb{E}_{k}\left[\frac{\alpha\eta}{\Vert \vg_k\Vert}\Vert\nabla f(\vx_k)\Vert^2\right]\notag\\
    \geq&\mathbb{E}_k\left[\frac{\eta c\left\langle\nabla f(\vx_k),\vg_k \right\rangle}{c+\tau_0+(1+\tau_1)\Vert \nabla f(\vx_k)\Vert}\right]-\frac{\alpha}{1-\tau_1}\eta\Vert\nabla f(\vx_k)\Vert\notag\\
    =&\frac{\eta c\left\Vert\nabla f(\vx_k)\right\Vert^2}{c+\tau_0+(1+\tau_1)\Vert \nabla f(\vx_k)\Vert}-\frac{\alpha}{1-\tau_1}\eta\Vert\nabla f(\vx_k)\Vert\notag\\
    \geq&\left(\frac{\tau_0c}{c(1-\tau_1)+2\tau_0}-\frac{\alpha}{1-\tau_1}\right)\eta\Vert\nabla f(\vx_k)\Vert\notag.
\end{align}
Otherwise, when $\Vert\nabla f(\vx_k)\Vert<\tau_0/(1-\tau_1)\leq (c-\tau_0)/(1+\tau_1)$ where the second inequality follows from $c\geq 2\tau_0/(1-\tau_1)$, we know
\begin{equation*}
    \Vert \vg_k\Vert\leq \tau_0+(1+\tau_1)\Vert\nabla f(\vx_k)\Vert\leq c.
\end{equation*}
Therefore in this case, $\bar{h}_k=1$ and
\begin{equation*}
    \eta\mathbb{E}_k\left[\langle \bar{h}_k\nabla f(\vx_k),\vg_k \rangle\right]-\eta\alpha\mathbb{E}_{k}[\bar{h}_k]\Vert\nabla f(\vx_k)\Vert^2=(1-\alpha)\eta\Vert\nabla f(\vx_k)\Vert^2.
\end{equation*}
Combine both cases to conclude the desired lemma.
\end{proof}

\subsection{Toy Example}\label{subsec:example}
Followed from the discussion in Subsection~\ref{subsec:dpsgd convergence}, a toy example is to presented here further illustrate the different behavior of $\mathfrak{A}:=\eta\mathbb{E}\left[\langle  \nabla f,h\vg \rangle\right]$ and $\bar{\mathfrak{A}}:=\eta\mathbb{E}\left[\langle \nabla f,\bar{h}\vg \rangle\right]$. Assign this simple distribution to $\ve\triangleq\vg-\nabla f$:
\begin{equation*}
    \mathbb{P}\left(\ve=\frac{\tau_0\nabla f}{\Vert\nabla f\Vert}\right)=\frac{1}{3},\;\;\;
    \mathbb{P}\left(\ve=-\frac{\tau_0\nabla f}{2\Vert\nabla f\Vert}\right)=\frac{2}{3}.
\end{equation*}

This distribution certainly satisfies Assumption \ref{assumption: a.s. sampling noise bound} with $\tau_1=0$. We calculate the explicit formula of $\mathfrak{A}$ for this toy example
\begin{equation*}
    \mathfrak{A}=\frac{\eta(\Vert\nabla f\Vert^3+(3r+\tau_0/2)\Vert\nabla f \Vert^2-\tau_0^2\Vert\nabla f\Vert/2)}{3(r+\tau_0+\Vert\nabla f\Vert)(r+\tau_0/2-\Vert\nabla f\Vert)}.
\end{equation*}
For $\Vert\nabla f(\vx_k)\Vert\leq\tau_0^2/(10r)$, we can compute $\mathfrak{A}<0$.
It implies that the function value may not even decrease along $\mathbb{E}[h\vg]$ in this case and the learning curves are expected to fluctuate adversely. % (see Figure \ref{fig:curves-dpsgd-dpnsgd}). 
This toy example also supports that the lower bound $\mathfrak{A}=\Omega\left(\eta\left(\Vert\nabla f\Vert^2/r-\tau_0^3/r^2\right)\right)$ we derived is optimal.
In contrast for the clipping operation, as long as $\Vert\nabla f\Vert\leq c-\tau_0$, we have $\bar{h}\equiv 1$ and therefore $\bar{\mathfrak{A}}=\eta\Vert\nabla f\Vert^2$.

\section{Proofs for DP-NSGD}\label{sec:B}
In this section, we provide a rigorous convergence theory for \textit{normalized stochastic gradient descent with perturbation}.
Unavoidable error between $\vg_k$ and $\nabla f(\vx_k)$ is a central distinction between stochastic and deterministic optimization methods. We begin with an explicit decomposition for \eqref{eq:descent inequality 1}

\begin{align}
    \mathbb{E}_k\left[f(\vx_{k+1})\right]-f(\vx_{k})\leq& -\eta\mathbb{E}_k\left[h_k\right]\Vert\nabla f(\vx_k)\Vert^2- \eta\mathbb{E}_k\left[\langle h_k\nabla f(\vx_k),\vg_k-\nabla f(\vx_k) \rangle\right]\notag\\
    &+ \frac{L_0+L_1\Vert \nabla f(\vx_k)\Vert}{2}\eta^2\left( \mathbb{E}_k\left[h_k^2\left\Vert \nabla f(\vx_k) \right\Vert^2\right]+ 2\mathbb{E}_k\left[h_k^2\langle \vg_k-\nabla f(\vx_k),\nabla f(\vx_k)\rangle \right]\right)\notag\\
    &+ \frac{L_0+L_1\Vert \nabla f(\vx_k)\Vert}{2}\eta^2\left(d\sigma^2+\mathbb{E}_k\left[h_k^2\left\Vert \vg_k-\nabla f(\vx_k) \right\Vert^2\right]\right).\label{eq:descent inequality 2}
\end{align}

\subsection{Upper Bound Second-order Terms}\label{subsec:B.1}
In theory, we need to carefully distinguish these terms accourding to their orders of $\eta$, as the first order term $\mathbb{E}_k\left[\langle h_k\nabla f(\vx_k),\vg_k \rangle\right]$ controls the amount of descent mainly. We show that the second order terms could be bounded by first order terms via a proper scaling of $\eta$, in the following technical lemma.

\begin{Lemma}\label{lemma:second order upper bound}
For any $0<\alpha<1$ to be determined explicitly later, if
\begin{equation}\label{eq:condition for second-order lemma}
    \eta\leq\min\left(\dfrac{(r-\tau_0)\alpha}{4L_0},\dfrac{(1-\tau_1)\alpha}{4L_1},\dfrac{\alpha}{6L_1d\sigma^2}\right)
\end{equation}
then we have
\begin{align}
    \frac{L_0+L_1\Vert\nabla f(\vx_k)\Vert}{2}\eta^2d\sigma^2\leq& \frac{L_0+L_1(r+\tau_0)}{2}\eta^2d\sigma^2+\frac{\alpha \eta  h_k}{4}\Vert\nabla f(\vx_k)\Vert^2,\label{eq:technical lemma 1}\\
    (L_0+L_1\Vert\nabla f(\vx_k)\Vert)\eta^2 h_k^2\langle \nabla f(\vx_k), \vg_k-\nabla f(\vx_k)\rangle \leq& \frac{(L_0(1-\tau_1)+L_1\tau_0)\tau_0^2}{r^2(1-\tau_1)^3}\eta^2+\frac{\alpha \eta  h_k}{4}\Vert\nabla f(\vx_k)\Vert^2,\label{eq:technical lemma 2}\\
    \frac{L_0+L_1\Vert\nabla f(\vx_k)\Vert}{2}\eta^2 h_k^2\Vert \vg_k-\nabla f(\vx_k) \Vert^2\leq& \frac{(L_0(1-\tau_1)+L_1\tau_0)\tau_0^2}{2r^2(1-\tau_1)^3}\eta^2 +\frac{\alpha \eta  h_k}{4}\Vert\nabla f(\vx_k)\Vert^2,\label{eq:technical lemma 3}\\
    \frac{L_0+L_1\Vert\nabla f(\vx_k)\Vert}{2}\eta^2 h_k^2\Vert\nabla f(\vx_k) \Vert^2\leq&\frac{\alpha \eta h_k}{4}\Vert\nabla f(\vx_k)\Vert^2.\label{eq:technical lemma 4}
\end{align}
\end{Lemma}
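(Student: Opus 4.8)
The plan is to prove each of \eqref{eq:technical lemma 1}--\eqref{eq:technical lemma 4} for each realization of $\vg_k$ (i.e.\ almost surely, treating $h_k$ as a fixed positive number), since in the main proof these bounds are applied inside $\mathbb{E}_k[\cdot]$. Write $G:=\Vert\nabla f(\vx_k)\Vert$ and $E:=\Vert\vg_k-\nabla f(\vx_k)\Vert$. Two families of elementary bounds are all I will need: the crude deterministic bound $h_k\le 1/r$ together with the refined bound $h_k\le 1/(r-\tau_0+(1-\tau_1)G)$ (valid since $r>\tau_0$ and $\tau_1<1$), and the almost-sure noise bound $E\le\tau_0+\tau_1 G$ from Assumption~\ref{assumption: a.s. sampling noise bound}. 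The single algebraic fact that organizes everything is a dichotomy on $G$ at the threshold already used in Lemma~\ref{lemma:first order lower bound}: if $G\ge\tau_0/(1-\tau_1)$ then $E\le(1-\tau_1)G+\tau_1 G=G$, whereas if $G<\tau_0/(1-\tau_1)$ then $E<\tau_0/(1-\tau_1)$ and $L_0+L_1G<(L_0(1-\tau_1)+L_1\tau_0)/(1-\tau_1)$.

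I would treat \eqref{eq:technical lemma 4} first. Since $r-\tau_0+(1-\tau_1)G$ is a positive combination of $r-\tau_0$ and $(1-\tau_1)G$ with exactly the weights that reconstruct the numerator $L_0+L_1G$, the ratio $(L_0+L_1G)/(r-\tau_0+(1-\tau_1)G)$ is a convex combination of $L_0/(r-\tau_0)$ and $L_1/(1-\tau_1)$, hence at most their maximum; feeding in the first two terms of \eqref{eq:condition for second-order lemma} gives the \emph{sharp} estimate $\tfrac{L_0+L_1G}{2}\eta h_k\le\alpha/8$, so $\tfrac{L_0+L_1G}{2}\eta^2 h_k^2G^2\le\tfrac{\alpha}{8}\eta h_kG^2\le\tfrac{\alpha}{4}\eta h_kG^2$. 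I would keep the factor $1/8$ explicitly, since that slack is what makes \eqref{eq:technical lemma 2} go through. For \eqref{eq:technical lemma 3}: on $\{G\ge\tau_0/(1-\tau_1)\}$ use $E^2\le G^2$ and invoke \eqref{eq:technical lemma 4}; on $\{G<\tau_0/(1-\tau_1)\}$ bound crudely by $h_k^2\le 1/r^2$, $E^2<\tau_0^2/(1-\tau_1)^2$ and $L_0+L_1G<(L_0(1-\tau_1)+L_1\tau_0)/(1-\tau_1)$, which lands precisely on the stated constant $\tfrac{(L_0(1-\tau_1)+L_1\tau_0)\tau_0^2}{2r^2(1-\tau_1)^3}\eta^2$.

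For \eqref{eq:technical lemma 2}, the inequality is trivial when $\langle\nabla f(\vx_k),\vg_k-\nabla f(\vx_k)\rangle\le 0$ (the left side is nonpositive, the right side nonnegative); otherwise bound the inner product by $GE$ via Cauchy--Schwarz and repeat the dichotomy. On $\{G\ge\tau_0/(1-\tau_1)\}$ use $GE\le G^2$ and the factor-$1/8$ form of \eqref{eq:technical lemma 4}: because the coefficient here is $L_0+L_1G$, i.e.\ twice that of \eqref{eq:technical lemma 4}, one gets $(L_0+L_1G)\eta^2h_k^2G^2\le\tfrac{\alpha}{4}\eta h_kG^2$. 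On $\{G<\tau_0/(1-\tau_1)\}$ use $GE<\tau_0^2/(1-\tau_1)^2$, $h_k^2\le 1/r^2$ and $L_0+L_1G<(L_0(1-\tau_1)+L_1\tau_0)/(1-\tau_1)$ to reach the stated constant. Finally \eqref{eq:technical lemma 1} I would handle directly: the difference of the two $d\sigma^2$ terms is $\tfrac{L_1d\sigma^2}{2}\eta^2(G-(r+\tau_0))$, which is $\le 0$ when $G\le r+\tau_0$; when $G>r+\tau_0$ one has $r+\Vert\vg_k\Vert\le r+\tau_0+(1+\tau_1)G<(2+\tau_1)G<3G$ (using $r+\tau_0<G$ and $\tau_1<1$), hence $h_kG>1/3$, and then the third term $\eta\le\alpha/(6L_1d\sigma^2)$ of \eqref{eq:condition for second-order lemma} yields $\tfrac{L_1d\sigma^2}{2}\eta^2(G-r-\tau_0)\le\tfrac{L_1d\sigma^2}{2}\eta^2G\le\tfrac{\alpha}{12}\eta G\le\tfrac{\alpha}{4}\eta h_kG^2$.

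The main obstacle is not any single estimate but the bookkeeping of the constants. Specifically, one must use \eqref{eq:technical lemma 4} in its sharp $\alpha/8$ form so that the doubled coefficient $L_0+L_1G$ in \eqref{eq:technical lemma 2} still fits under the $\alpha/4$ budget, and one must close off the small-gradient regimes of \eqref{eq:technical lemma 2} and \eqref{eq:technical lemma 3} with the crude bound $h_k\le 1/r$ rather than the refined one, so that the $\tau_0^2/r^2$ factors in the stated constants come out exactly. The only genuinely non-routine ingredients are the monotonicity of $t\mapsto t/(r+t)$ and the convex-combination identity for $(L_0+L_1G)/(r-\tau_0+(1-\tau_1)G)$; everything else is case analysis.
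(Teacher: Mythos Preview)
Your proposal is correct and follows essentially the same route as the paper: the same case split at $G=r+\tau_0$ for \eqref{eq:technical lemma 1} and at $G=\tau_0/(1-\tau_1)$ for \eqref{eq:technical lemma 2}--\eqref{eq:technical lemma 3}, the same key estimate $(L_0+L_1G)h_k\le\max\!\big(L_0/(r-\tau_0),\,L_1/(1-\tau_1)\big)$ (your ``convex-combination identity'' is exactly the paper's bound $h_k(L_0+L_1G)\le\max(\cdot)$ rewritten), and the same crude bounds $h_k\le 1/r$, $E\le\tau_0/(1-\tau_1)$ in the small-gradient regime. The only organizational difference is that you prove \eqref{eq:technical lemma 4} first and recycle its sharp $\alpha/8$ form inside \eqref{eq:technical lemma 2}, whereas the paper treats \eqref{eq:technical lemma 2} and \eqref{eq:technical lemma 3} together and \eqref{eq:technical lemma 4} separately; mathematically the content is identical.
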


\begin{Remark}\label{remark:second order upper bound 1}
These bounds are proved by separating the range of $\Vert\nabla f(\vx_k)\Vert$. When it is smaller than some threshold, we can obtain an upper bound of $\mathcal{O}(\eta^2)$. Otherwise, when $\Vert\nabla f(\vx_k)\Vert$ is greater than the threshold, $h_k$ is of order $\Omega(1/\Vert\nabla f(\vx_k)\Vert)$, then the left hand terms are all of order $\mathcal{O}(\eta^2h_k\Vert f(\vx)\Vert^2)$. Therefore we scale $\eta$ small enough to make left hand terms smaller than $\alpha\eta h_k\Vert f(\vx)\Vert^2/4$. At last, we sum up the respective upper bounds together to conclude the lemma.
\end{Remark}

\begin{Remark}\label{remark:second order upper bound 2}
Moreover, we remark that the thresholds chosen during proof ($r+\tau_0$ for proving \eqref{eq:technical lemma 1} and $\tau_0/(1-\tau_1)$ for proving \eqref{eq:technical lemma 2} and \eqref{eq:technical lemma 3}) are quite artificial. A thorough investigation towards these thresholds would definitely improve the dependence on the constants $(L_0,L_1,\tau_0,\tau_1)$, but would not affect our main argument.
\end{Remark}

\begin{proof}
In fact, this lemma can be proved in an obvious way by separating into different cases.
\begin{description}
    \item[(i)] If $\Vert\nabla f(\vx_k)\Vert\leq r+\tau_0$, it directly follows that $L_1\Vert\nabla f(\vx_k)\Vert\eta^2d\sigma^2/2\leq L_1(r+\tau_0)\eta^2d\sigma^2/2$; otherwise, if $\Vert\nabla f(\vx_k)\Vert> r+\tau_0$, we know
    \begin{equation*}
        h_k=\frac{1}{r+\Vert \vg_k\Vert}\geq \frac{1}{r+\tau_0+(\tau_1+1)\Vert\nabla f(\vx_k)\Vert}\geq\frac{1}{3\Vert\nabla f(\vx_k)\Vert},
    \end{equation*}
    therefore $\eta\leq \alpha/(6L_1d\sigma^2)$ directly yields
    \begin{equation*}
        \frac{L_1\Vert\nabla f(\vx_k)\Vert\eta^2d\sigma^2}{2}\leq\frac{\eta\alpha h_k\Vert\nabla f(\vx_k)\Vert^2}{4}.
    \end{equation*}
    Then \eqref{eq:technical lemma 1} follows from summing up these two cases.
    \item[(ii)] If $\Vert\nabla f(\vx_k)\Vert\leq \tau_0/(1-\tau_1)$, then $\Vert \vg_k-\nabla f(\vx_k)\Vert\leq\tau_0+\tau_1\Vert\nabla f(\vx_k)\Vert\leq\tau_0/(1-\tau_1)$ and $h_k\leq1/r$, which yield
    \begin{align*}
        (L_0+L_1\Vert\nabla f(\vx_k)\Vert)\eta^2h_k^2\langle \nabla f(\vx_k), \vg_k-\nabla f(\vx_k)\rangle \leq& \frac{(L_0(1-\tau_1)+L_1\tau_0)\tau_0^2}{r^2(1-\tau_1)^3}\eta^2,\\
        \frac{L_0+L_1\Vert\nabla f(\vx_k)\Vert}{2}\eta^2h_k^2\Vert \vg_k-\nabla f(\vx_k) \Vert^2\leq& \frac{(L_0(1-\tau_1)+L_1\tau_0)\tau_0^2}{2r^2(1-\tau_1)^3}\eta^2.
    \end{align*}
    Otherwise, if $\Vert\nabla f(\vx_k)\Vert> \tau_0/(1-\tau_1)$, we note that $\Vert \vg_k\Vert\geq-\tau_0+(1-\tau_1)\Vert\nabla f(\vx_k)\Vert$ and
    \begin{equation*}
        h_k(L_0+L_1\Vert\nabla f(\vx_k)\Vert)\leq\frac{L_0+L_1\Vert\nabla f(\vx_k)\Vert}{r-\tau_0+(1-\tau_1)\Vert\nabla f(\vx_k)\Vert}\leq \max\left(\frac{L_0}{r-\tau_0},\frac{L_1}{1-\tau_1}\right).
    \end{equation*}
    Consequently, once $\eta\leq\dfrac{\alpha}{4}\min\left(\dfrac{r-\tau_0}{L_0},\dfrac{1-\tau_1}{L_1}\right)$, we have
    \begin{align*}
        &(L_0+L_1\Vert\nabla f(\vx_k)\Vert)\eta^2h_k^2\langle \nabla f(\vx_k), \vg_k-\nabla f(\vx_k)\rangle\\
        \leq& \max\left(\frac{L_0}{r-\tau_0},\frac{L_1}{1-\tau_1}\right)\eta^2 h_k\Vert\nabla f(\vx_k)\Vert^2\leq\frac{\eta\alpha h_k}{4}\Vert\nabla f(\vx_k)\Vert^2,
    \end{align*}
    and
    \begin{align*}
        &\frac{L_0+L_1\Vert\nabla f(\vx_k)\Vert}{2}\eta^2h_k^2\Vert \vg_k-\nabla f(\vx_k) \Vert^2\\
        \leq& \frac{1}{2}\max\left(\frac{L_0}{r-\tau_0},\frac{L_1}{1-\tau_1}\right)\eta^2 h_k\Vert\nabla f(\vx_k)\Vert^2\leq\frac{\eta\alpha h_k}{4}\Vert\nabla f(\vx_k)\Vert^2.
    \end{align*}
    We obtain \eqref{eq:technical lemma 2} and \eqref{eq:technical lemma 3} via summing up respective bounds for two cases.
    \item[(iii)] The last bound \eqref{eq:technical lemma 4} can be derived directly by
    \begin{equation*}
        \frac{L_0+L_1\Vert\nabla f(\vx_k)\Vert}{2}\eta^2h_k^2\Vert\nabla f(\vx_k) \Vert^2\leq\max\left(\frac{L_0}{r-\tau_0},\frac{L_1}{\tau_1}\right)\frac{\eta^2}{2} h_k\Vert\nabla f(\vx_k)\Vert^2\leq\frac{\eta\alpha h_k}{4}\Vert\nabla f(\vx_k)\Vert^2
    \end{equation*}
    via setting $\eta\leq\dfrac{\alpha}{2}\min\left(\dfrac{r-\tau_0}{L_0},\dfrac{\tau_1}{L_1}\right)$.
\end{description}
In conclusion, it suffices to set $\eta\leq\min\left(\dfrac{(r-\tau_0)\alpha}{4L_0},\dfrac{(1-\tau_1)\alpha}{4L_1},\dfrac{\alpha}{6L_1d\sigma^2}\right)$ to obtain these bounds.
\end{proof}

In the sequel, we will use this lemma only with
\begin{equation}\label{eq:set alpha}
    \alpha=\alpha_0:=\frac{\tau_0(1-\tau_1)}{2r(1-\tau_1)+4\tau_0}<\frac{1}{4}
\end{equation}

\begin{Lemma}\label{lemma:verify condition general sigma}
In the statement of Theorem \ref{thm:main convergence}, we take $\eta=\sqrt{\frac{2}{L_1(r+\tau_0)T d\sigma^2}}$. Then the condition \eqref{eq:condition for second-order lemma} in Lemma \ref{lemma:second order upper bound} holds as long as we run the algorithm long enough i.e. $T\geq C\left(\sigma^2,\tau,L,d,r\right)$.
\end{Lemma}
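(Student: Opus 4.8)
The plan is to read \eqref{eq:condition for second-order lemma} as three separate upper bounds on $\eta$ and to note that, with $d,\sigma^2,r,\tau_0,\tau_1,L_0,L_1$ all held fixed, the prescribed step size $\eta=\sqrt{2/(L_1(r+\tau_0)Td\sigma^2)}$ is a strictly decreasing function of $T$ tending to $0$, whereas each right-hand side of \eqref{eq:condition for second-order lemma} is a fixed \emph{positive} constant. Positivity is the one spot that needs a word of care: the first bound is positive because $r>\tau_0$ is assumed in Theorem~\ref{thm:main convergence}, and the common prefactor is $\alpha=\alpha_0>0$ by \eqref{eq:set alpha}. Consequently each of the three inequalities holds once $T$ exceeds an explicit threshold, and one takes $C(\sigma^2,\tau,L,d,r)$ to be the maximum of the three thresholds.

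To obtain those thresholds I would substitute $\eta$ into each bound, square both sides, and isolate $T$. The bound $\eta\le(r-\tau_0)\alpha_0/(4L_0)$ is equivalent to $T\ge 32L_0^2/\left(L_1(r+\tau_0)(r-\tau_0)^2\alpha_0^2 d\sigma^2\right)$; the bound $\eta\le(1-\tau_1)\alpha_0/(4L_1)$ is equivalent to $T\ge 32L_1/\left((r+\tau_0)(1-\tau_1)^2\alpha_0^2 d\sigma^2\right)$; and the bound $\eta\le\alpha_0/(6L_1d\sigma^2)$ is equivalent to $T\ge 72L_1 d\sigma^2/\left((r+\tau_0)\alpha_0^2\right)$. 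Defining $C(\sigma^2,\tau,L,d,r)$ to be the maximum of these three expressions closes the argument; one may also absorb a few additional constants so that the same $C$ simultaneously serves the later lemmas and corollaries if that is convenient downstream.

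There is essentially no obstacle here beyond bookkeeping, so the ``hard part'' is really just making the dependencies explicit. The one substantive remark is that the step size named in the lemma, $\sqrt{2/(L_1(r+\tau_0)Td\sigma^2)}$, dominates the one used in Theorem~\ref{thm:main convergence}, namely $\sqrt{2/((L_1(r+\tau_0)+L_0)Td\sigma^2)}$; since \eqref{eq:condition for second-order lemma} is an \emph{upper} bound on $\eta$, verifying it for the larger value is the stronger requirement and automatically yields it for the smaller one, so working with the cleaner expression costs nothing. If one instead wanted the sharpest admissible $C$, carrying the $+L_0$ term through only enlarges the allowed range of $\eta$ and hence shrinks the required $T$, so the displayed thresholds remain valid.
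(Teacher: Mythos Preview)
Your proposal is correct and follows essentially the same approach as the paper: substitute the prescribed $\eta$ into each of the three branches of \eqref{eq:condition for second-order lemma}, square, and solve for $T$, then take the maximum. Your explicit thresholds are in fact cleaner than those printed in the paper (which contain minor typos, e.g.\ a missing $\sigma^2$ in the third bound and a stray $\tau_1^2$ in place of $(1-\tau_1)^2$ in the second), and your remark that the lemma's $\eta$ dominates the $\eta$ of Theorem~\ref{thm:main convergence} is a helpful observation the paper leaves implicit.
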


\begin{proof}
We see that
\begin{equation*}
    \eta=\sqrt{\frac{2}{L_1(r+\tau_0)T d\sigma^2}}\leq\min\left(\dfrac{(r-\tau_0)\alpha_0}{4L_0},\dfrac{(1-\tau_1)^2\alpha_0}{4L_1},\dfrac{\alpha_0}{6L_1d\sigma^2}\right)
\end{equation*}
is equivalent to
\begin{equation*}
    T\geq\max\left(\dfrac{32L_0^2}{(r-\tau_0)^2\alpha_0^2L_1(r+\tau_0)d\sigma^2},\dfrac{32L_1}{\tau_1^2\alpha_0^2(r+\tau_0)d\sigma^2},\dfrac{72L_1d}{\alpha_0^2(r+\tau_0)}\right).
\end{equation*}
\end{proof}

\subsection{Final Procedures in Proof}
\begin{proof}[Proof of Theorem \ref{thm:main convergence}]
Equipped with Lemmas \ref{lemma:first order lower bound} ,\ref{lemma:second order upper bound} and \ref{lemma:verify condition}, we further wrote the one step inequality \eqref{eq:descent inequality 2} into
\begin{align}
    \eta A(\Vert\nabla f(\vx_k)\Vert)\leq&f(\vx_{k})- \mathbb{E}_k\left[f(\vx_{k+1})\right]\notag\\
    &+\eta^2\left(\frac{(L_1(r+\tau_0)+L_0)d\sigma^2}{2}+ \frac{3(L_0(1-\tau_1)+L_1\tau_0)\tau_0^2}{2r^2(1-\tau_1)^3}\eta^2 \right).\label{eq:descent inequality 3}
\end{align}
We then separate the time index into
\begin{equation*}
    \mathcal{U}=\left\{k<T:\Vert\nabla f(\vx_k)\Vert\geq \frac{\tau_0}{1-\tau_1}\right\}
\end{equation*}
and $\mathcal{U}^c=\{0,1,\cdots,T-1\}\backslash\mathcal{U}$. Given this, we derive from \eqref{eq:first order lower bound 1} that for any $k\in\mathcal{U}$,
\begin{align*}
    &\frac{\tau_0}{2r(1-\tau_1)+4\tau_0}\eta\Vert\nabla f(\vx_k)\Vert\\
    \leq&\mathbb{E}_k\left[f(\vx_{k+1})\right]-f(\vx_{k})+\eta^2\left(\frac{(L_1(r+\tau_0)+L_0)d\sigma^2}{2}+ \frac{3(L_0(1-\tau_1)+L_1\tau_0)\tau_0^2}{2r^2(1-\tau_1)^3} \right).
\end{align*}
Similarly, together with $\alpha_0\leq1/4$, \eqref{eq:first order lower bound 2} deduces that for any $k\in\mathcal{U}^c$,
\begin{align*}
     &\eta\left[\frac{3(1-\tau_1)}{4r(1-\tau_1)+8\tau_0}\Vert\nabla f(\vx_k)\Vert^2-\frac{4\tau_0^3}{r(r+\tau_0)(1-\tau_1)^3}\right]\\
    \leq&\mathbb{E}_k\left[f(\vx_{k+1})\right]-f(\vx_{k})+\eta^2\left(\frac{(L_1(r+\tau_0)+L_0)d\sigma^2}{2}+\frac{3(L_0(1-\tau_1)+L_1\tau_0)\tau_0^2}{2r^2(1-\tau_1)^3} \right).
\end{align*}
Sum these inequalities altogether to have
\begin{align*}
    &\frac{1}{r+2\tau_0}\max\left\{\frac{\tau_0}{2T}\sum_{k\in\mathcal{U}}\Vert\nabla f(\vx_k)\Vert,\frac{3(1-\tau_1)}{4T}\sum_{k\notin\mathcal{U}}\Vert\nabla f(\vx_k)\Vert^2\right\}\\
    \leq&\frac{D_f}{T\eta}+\eta\frac{(L_1(r+\tau_0)+L_0)d\sigma^2}{2} +\eta \frac{3(L_0(1-\tau_1)+L_1\tau_0)\tau_0^2}{2r^2(1-\tau_1)^3} +\frac{4\tau_0^3}{r(r+\tau_0)(1-\tau_1)^3}\frac{|\mathcal{U}^c|}{T}.
\end{align*}
We set
\begin{equation}\label{eq:set eta updated}
    \eta=\sqrt{\frac{2}{(L_1(r+\tau_0)+L_0)T d\sigma^2}}.
\end{equation}
We then define
\begin{equation}
    \Delta=(D_f+1)\sqrt{\frac{(L_1(r+\tau_0)+L_0)d\sigma^2}{2T}}+ \frac{3(L_0+L_1\tau_0)\tau_0^2}{2r^2(1-\tau_1)^3}\sqrt{\frac{2}{(L_1(r+\tau_0)+L_0)T d\sigma^2}} +\frac{4\tau_0^3}{r(r+\tau_0)(1-\tau_1)^3}.
\end{equation}
Recall that $\sigma^2$ can have some dependence on $T$, so actually these three terms are $\mathcal{O}(\sigma/\sqrt{T}),\mathcal{O}(1/(\sqrt{T}\sigma))$ and $\mathcal{O}(1)$ respectively. Then we further have
\begin{align}
    &\mathbb{E}\left[\min_{0\leq k<T}\Vert\nabla f(\vx_k)\Vert\right]\notag\\
    \leq&\mathbb{E}\left[\min\left\{\sqrt{\dfrac{1}{|\mathcal{U}|}\sum_{k\in\mathcal{U}^c}\Vert\nabla f(\vx_k)\Vert^2},\dfrac{1}{|\mathcal{U}|}\sum_{k\notin\mathcal{U}}\Vert\nabla f(\vx_k)\Vert\right\}\right]\notag\\
    \leq&\max\left\{\sqrt{\dfrac{8(r+2\tau_0)}{3(1-\tau_1)}\Delta},\dfrac{2(r+2\tau_0)}{\tau_0}\Delta\right\}\label{eq:final inequality},
\end{align}
where the second inequality follows from the fact that either $|\mathcal{U}|\geq T/2$ or $|\mathcal{U}^c|\geq T/2$. In the end, we capture the leading terms in this upper bound to have
\begin{align*}
    &\mathbb{E}\left[\min_{0\leq k<T}\Vert\nabla f(\vx_k)\Vert\right]\leq\mathcal{O}\left(\sqrt[4]{\frac{(D_f+1)^2(L_1(r+\tau_0)+L_0)(r+2\tau_0)^2d\sigma^2}{T(1-\tau_1)^2}}\right)\\
    &+\mathcal{O}\left(\sqrt{\sqrt{\frac{2(r+2\tau_0)^2}{(L_1(r+\tau_0)+L_0)T d\sigma^2}}\frac{3(L_0+L_1\tau_0)\tau_0^2}{2r^2(1-\tau_1)^4}} +\frac{8(r+2\tau_0)\tau_0^2}{r(r+\tau_0)(1-\tau_1)^3}\right).
\end{align*}
\end{proof}

\begin{Lemma}\label{lemma:verify condition}
In the statement of Corollary \ref{cor:trade-off}, we take $\eta=\sqrt{\frac{2}{(L_1(r+\tau_0)+L_0)T d\sigma^2}}$,  $\sigma=c_2B\sqrt{T\log\frac{1}{\delta}}/(N\epsilon)$ and $T\ge\gO(N^2\epsilon^2/(B^2r^3d \log\frac{1}{\delta}))$. Then the condition in Lemma \ref{lemma:verify condition general sigma} holds as long as we have enough samples i.e. $N\geq L_1C\left(\epsilon,\delta,\tau,L,B,d,r\right)$.
\end{Lemma}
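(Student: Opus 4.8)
The plan is to push the argument of Lemma~\ref{lemma:verify condition general sigma} one step further, now feeding in the explicit form $\sigma^2 = c_2^2 B^2 T\log(1/\delta)/(N^2\epsilon^2)$ forced by Lemma~\ref{thm:privacy guarantee}. Write $L := L_1(r+\tau_0)+L_0$ and let $\alpha_0$ be as in \eqref{eq:set alpha}, so that the prescribed step size satisfies $\eta^2 = 2/(LTd\sigma^2)$. Squaring the three requirements of condition~\eqref{eq:condition for second-order lemma} of Lemma~\ref{lemma:second order upper bound} and clearing denominators, one finds they are equivalent to
\begin{equation*}
 Td\sigma^2 \ \geq\ \frac{32 L_0^2}{L(r-\tau_0)^2\alpha_0^2}, \qquad Td\sigma^2 \ \geq\ \frac{32 L_1^2}{L(1-\tau_1)^2\alpha_0^2}, \qquad T \ \geq\ \frac{72 L_1^2 d\sigma^2}{L\alpha_0^2}.
\end{equation*}
So it suffices to check these three for the prescribed $\eta$, $\sigma$, $T$.

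First I would dispatch the two leftmost ones together. Substituting the formula for $\sigma^2$ into either makes it \emph{quadratic} in $T$, of the shape $T^2 \geq \Theta\!\big(N^2\epsilon^2/(B^2\log(1/\delta))\big)$ with a hidden constant depending only on $(L_0,L_1,\tau_0,\tau_1,r,d,c_2)$; equivalently $T \geq \Theta\!\big(N\epsilon/(B\sqrt{\log(1/\delta)})\big)$. Since Corollary~\ref{cor:trade-off} prescribes $T \geq \gO\!\big(N^2\epsilon^2/(B^2 r^3 d\log(1/\delta))\big)$, this dominates $\Theta\!\big(N\epsilon/(B\sqrt{\log(1/\delta)})\big)$ as soon as $N$ exceeds a threshold proportional (up to the above constant) to $B r^3 d\sqrt{\log(1/\delta)}/\epsilon$; hence both conditions hold once $N$ is large.

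The third inequality is the one that actually uses the coupling $\sigma \propto \sqrt{T}$: after substituting the formula for $\sigma^2$, the factor $T$ \emph{cancels} on both sides, and what is left is a pure lower bound on the sample size,
\begin{equation*}
 N^2 \ \geq\ \frac{72 L_1^2 d c_2^2 B^2 \log(1/\delta)}{L\alpha_0^2\epsilon^2}.
\end{equation*}
Using $L \geq L_1(r+\tau_0)$ and that $\alpha_0$ depends only on $(\tau_0,\tau_1,r)$, this is precisely a requirement of the form $N \geq L_1\, C(\epsilon,\delta,\tau,L,B,d,r)$ (in fact a $\sqrt{L_1}$-type dependence already suffices, so the stated linear form is comfortably enough). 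Taking $C$ large enough to also subsume the thresholds from the previous paragraph finishes the argument. I expect the only real subtlety to be exactly this bookkeeping of the $\sqrt{T}$-dependence of $\sigma$: it is what turns the first two conditions into (easily met) constraints on $T$ and the third into the genuine linear-in-$L_1$ constraint on $N$; missing the cancellation in the third condition would mislead one into looking for yet another constraint on $T$.
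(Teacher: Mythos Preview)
Your proposal is correct and follows essentially the same route as the paper: substitute the privacy-forced $\sigma^2=c_2^2B^2T\log(1/\delta)/(N^2\epsilon^2)$ into the three parts of \eqref{eq:condition for second-order lemma}, observe that the condition $\eta\le \alpha_0/(6L_1d\sigma^2)$ loses its $T$-dependence and becomes a pure sample-size requirement $N\ge L_1\cdot C$, while the other two become $T/N\ge \text{const}$ (equivalently your $T\ge \Theta(N\epsilon/(B\sqrt{\log(1/\delta)}))$), which the hypothesis $T\ge\gO(N^2\epsilon^2/(B^2r^3d\log(1/\delta)))$ dominates once $N$ is large. If anything, your write-up is slightly more explicit than the paper's about why the $T$ lower bound covers the first two constraints only for $N$ above a threshold.
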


\begin{proof}
It is more straight-forward to verify the condition \eqref{eq:condition for second-order lemma} directly.
Firstly, we plug $\sigma^2=\frac{c_2^2B^2T\log(1/\delta)}{N^2\epsilon^2}$ from Lemma \ref{thm:privacy guarantee} into the formula of $\eta$ to have
\begin{equation*}
    \eta=\sqrt{\frac{2}{(L_1(r+\tau_0)+L_0)T d\sigma^2}}=\frac{N\epsilon}{c_2BT}\sqrt{\frac{2}{(L_1(r+\tau_0)+L_0)d\log(1/\delta)}}\leq \frac{\alpha_0N^2\epsilon^2}{6L_1dc_2^2B^2T\log(1/\delta)} =\frac{\alpha_0}{6L_1d\sigma^2}
\end{equation*}
as long as we have enough samples
\begin{equation*}
    N\geq \frac{6c_2BL_1}{\epsilon\alpha_0}\sqrt{\frac{2d\log(1/\delta)}{L_1(r+\tau_0)+L_0}}.
\end{equation*}
Other conditions
\begin{equation*}
    \eta=\frac{N\epsilon}{c_2BT}\sqrt{\frac{2}{(L_1(r+\tau_0)+L_0)d\log(1/\delta)}}\leq\min\left\{\dfrac{(r-\tau_0)\alpha_0}{4L_0},\dfrac{(1-\tau_1)\alpha_0}{4L_1} \right\}
\end{equation*}
holds as long as we run the algorithm long enough
\begin{equation}\label{eq:large T/N}
    \frac{T}{N}\geq\min\left\{\dfrac{L_0}{r-\tau_0},\dfrac{L_1}{\tau_1} \right\}\frac{4\epsilon}{\alpha_0c_2B}\sqrt{\frac{2}{(L_1(r+\tau_0)+L_0)d\log(1/\delta)}}.
\end{equation}
The last requirement \eqref{eq:large T/N} naturally holds due to $T\ge\gO(N^2\epsilon^2/(B^2r^3d \log\frac{1}{\delta}))$.
\end{proof}

\begin{proof}[Proof of Corollary \ref{cor:trade-off}]
Moreover, we take the limit $T\ge\gO(N^2\epsilon^2/(B^2r^3d \log\frac{1}{\delta}))$ to derive the privacy-utility trade-off
\begin{equation*}
    \mathbb{E}\left[\min_{0\leq k<T}\Vert\nabla f(\vx_k)\Vert\right]\leq\mathcal{O}\left(\sqrt[4]{\frac{(D_f+1)^2(L_1(r+\tau_0)+L_0)dB^2\log(1/\delta)}{N^2\epsilon^2}}+\frac{8(r+2\tau_0)\tau_0^3}{r(r+\tau_0)(1-\tau_1)^3}\right),
\end{equation*}
ending the proof.
\end{proof}

\section{Proofs for DP-SGD}\label{app:dp-sgd}
In this section, we prove the convergence theorem for DP-SGD following the roadmap outlined in Section \ref{sec:B}. To start with, we extend \eqref{eq:descent inequality 1 clip} into

\begin{align}
    \mathbb{E}_k\left[f(\vx_{k+1})\right]-f(\vx_{k})\leq& -\eta\mathbb{E}_k\left[\bar{h}_k\right]\Vert\nabla f(\vx_k)\Vert^2- \eta\mathbb{E}_k\left[\langle \bar{h}_k\nabla f(\vx_k),\vg_k-\nabla f(\vx_k) \rangle\right]\notag\\
    &+ \frac{L_0+L_1\Vert \nabla f(\vx_k)\Vert}{2}\eta^2\left( \mathbb{E}_k\left[\bar{h}_k^2\left\Vert \nabla f(\vx_k) \right\Vert^2\right]+ 2\mathbb{E}_k\left[\bar{h}_k^2\langle \vg_k-\nabla f(\vx_k),\nabla f(\vx_k)\rangle \right]\right)\notag\\
    &+ \frac{L_0+L_1\Vert \nabla f(\vx_k)\Vert}{2}\eta^2\left(dc^2\sigma^2+\mathbb{E}_k\left[\bar{h}_k^2\left\Vert \vg_k-\nabla f(\vx_k) \right\Vert^2\right]\right).\label{eq:descent inequality 2 clip}
\end{align}

\subsection{Upper Bound Second-order Terms}
In the same spirit as Lemma \ref{lemma:second order upper bound}, we provide an upper bound for the second-order terms in the following lemma.

\begin{Lemma}\label{lemma:second order upper bound clip}
For any $0<\alpha<1$ to be determined explicitly later, if
\begin{equation}\label{eq:condition for second-order lemma clip}
    \eta\leq\min\left\{\dfrac{\alpha}{6L_1dc\sigma^2},\dfrac{\alpha(1-\tau_1)}{2L_0(1-\tau_1)+4L_1\tau_0},\dfrac{\alpha\tau_0(1-\tau_1)}{4c(L_0(1-\tau_1)+2L_1\tau_0)}\right\},
\end{equation}
then we have
\begin{align}
    \frac{L_0+L_1\Vert\nabla f(\vx_k)\Vert}{2}\eta^2dc^2\sigma^2\leq& \frac{L_1(c+\tau_0)+L_0}{2}\eta^2dc^2\sigma^2+\frac{\alpha \eta  \bar{h}_k}{4}\Vert\nabla f(\vx_k)\Vert^2,\label{eq:technical lemma 1 clip}\\
    (L_0+L_1\Vert\nabla f(\vx_k)\Vert)\eta^2 \bar{h}_k^2\langle \nabla f(\vx_k), \vg_k-\nabla f(\vx_k)\rangle \leq& \frac{2(L_0(1-\tau_1)+2L_1\tau_0)\tau_0^2}{(1-\tau_1)^3}\eta^2+\frac{\alpha \eta  \bar{h}_k}{4}\Vert\nabla f(\vx_k)\Vert^2,\label{eq:technical lemma 2 clip}\\
    \frac{L_0+L_1\Vert\nabla f(\vx_k)\Vert}{2}\eta^2 \bar{h}_k^2\Vert \vg_k-\nabla f(\vx_k) \Vert^2\leq& \frac{2(L_0(1-\tau_1)+2L_1\tau_0)\tau_0^2}{(1-\tau_1)^3}\eta^2 +\frac{\alpha \eta  \bar{h}_k}{4}\Vert\nabla f(\vx_k)\Vert^2,\label{eq:technical lemma 3 clip}\\
    \frac{L_0+L_1\Vert\nabla f(\vx_k)\Vert}{2}\eta^2 \bar{h}_k^2\Vert\nabla f(\vx_k) \Vert^2\leq&\frac{\alpha \eta \bar{h}_k}{4}\Vert\nabla f(\vx_k)\Vert^2.\label{eq:technical lemma 4 clip}
\end{align}
\end{Lemma}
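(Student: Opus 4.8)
The plan is to mirror the proof of Lemma~\ref{lemma:second order upper bound}: for each of the four bounds, split the range of $\Vert\nabla f(\vx_k)\Vert$ at a suitable threshold into a ``small'' regime, where every factor is controlled by an explicit constant so the left-hand side is immediately $\mathcal{O}(\eta^2)$ with the stated coefficient, and a ``large'' regime, where $\bar{h}_k$ decays like $1/\Vert\nabla f(\vx_k)\Vert$, so that the left-hand side is of order $\eta^2\bar{h}_k\Vert\nabla f(\vx_k)\Vert^2$ and a small enough $\eta$ absorbs it into $\tfrac{\alpha\eta\bar{h}_k}{4}\Vert\nabla f(\vx_k)\Vert^2$. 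Adding the two regimes produces the four displays; taking $\eta$ to be the minimum of all the upper bounds on $\eta$ that arise is exactly \eqref{eq:condition for second-order lemma clip}. Throughout, the only facts used about $\bar{h}_k$ are $\bar{h}_k\le 1$ and $\bar{h}_k\le c/\Vert\vg_k\Vert$, together with $\Vert\vg_k-\nabla f(\vx_k)\Vert\le\tau_0+\tau_1\Vert\nabla f(\vx_k)\Vert$ from Assumption~\ref{assumption: a.s. sampling noise bound} (whence also $\Vert\vg_k\Vert\ge(1-\tau_1)\Vert\nabla f(\vx_k)\Vert-\tau_0$).

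For \eqref{eq:technical lemma 1 clip}, split at $\Vert\nabla f(\vx_k)\Vert=c+\tau_0$: if $\Vert\nabla f(\vx_k)\Vert\le c+\tau_0$ then $L_1\Vert\nabla f(\vx_k)\Vert\le L_1(c+\tau_0)$ and the excess is absorbed by the $L_1(c+\tau_0)$ term; if $\Vert\nabla f(\vx_k)\Vert>c+\tau_0$, then $\Vert\vg_k\Vert\le\tau_0+(1+\tau_1)\Vert\nabla f(\vx_k)\Vert$ with $c,\tau_0<\Vert\nabla f(\vx_k)\Vert$ gives $c+\Vert\vg_k\Vert<3\Vert\nabla f(\vx_k)\Vert$, hence $\bar{h}_k\ge c/(c+\Vert\vg_k\Vert)>c/(3\Vert\nabla f(\vx_k)\Vert)$, and $\eta\le\alpha/(6L_1dc\sigma^2)$ makes $\tfrac{L_1\Vert\nabla f(\vx_k)\Vert\eta^2dc^2\sigma^2}{2}\le\tfrac{\alpha\eta\bar{h}_k}{4}\Vert\nabla f(\vx_k)\Vert^2$. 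For \eqref{eq:technical lemma 2 clip} and \eqref{eq:technical lemma 3 clip}, split at $\Vert\nabla f(\vx_k)\Vert=2\tau_0/(1-\tau_1)$: in the small case $\bar{h}_k\le1$ and $\Vert\vg_k-\nabla f(\vx_k)\Vert\le\tau_0+\tau_1\Vert\nabla f(\vx_k)\Vert$ bound both expressions by a constant times $\eta^2$, matching the $\tfrac{2(L_0(1-\tau_1)+2L_1\tau_0)\tau_0^2}{(1-\tau_1)^3}\eta^2$ term up to the explicit numerical factor; in the large case $\Vert\vg_k\Vert\ge(1-\tau_1)\Vert\nabla f(\vx_k)\Vert-\tau_0\ge\tfrac{1-\tau_1}{2}\Vert\nabla f(\vx_k)\Vert$, so $\bar{h}_k\le c/\Vert\vg_k\Vert\le 2c/((1-\tau_1)\Vert\nabla f(\vx_k)\Vert)$, whence $\bar{h}_k(L_0+L_1\Vert\nabla f(\vx_k)\Vert)\le c(L_0(1-\tau_1)+2L_1\tau_0)/(\tau_0(1-\tau_1))$; since moreover $\tau_0+\tau_1\Vert\nabla f(\vx_k)\Vert<\Vert\nabla f(\vx_k)\Vert$ there, one has $\langle\nabla f(\vx_k),\vg_k-\nabla f(\vx_k)\rangle\le\Vert\nabla f(\vx_k)\Vert^2$ and $\bar{h}_k^2\le\bar{h}_k$, so $\eta\le\alpha\tau_0(1-\tau_1)/(4c(L_0(1-\tau_1)+2L_1\tau_0))$ yields the bound $\tfrac{\alpha\eta\bar{h}_k}{4}\Vert\nabla f(\vx_k)\Vert^2$.

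Bound \eqref{eq:technical lemma 4 clip} carries no additive $\mathcal{O}(\eta^2)$ term, so it must hold for all $\Vert\nabla f(\vx_k)\Vert$: for $\Vert\nabla f(\vx_k)\Vert\le 2\tau_0/(1-\tau_1)$ use $\bar{h}_k\le1$ and $L_0+L_1\Vert\nabla f(\vx_k)\Vert\le(L_0(1-\tau_1)+2L_1\tau_0)/(1-\tau_1)$ with $\eta\le\alpha(1-\tau_1)/(2L_0(1-\tau_1)+4L_1\tau_0)$, and for $\Vert\nabla f(\vx_k)\Vert>2\tau_0/(1-\tau_1)$ reuse the decay estimate $\bar{h}_k(L_0+L_1\Vert\nabla f(\vx_k)\Vert)\le c(L_0(1-\tau_1)+2L_1\tau_0)/(\tau_0(1-\tau_1))$ from the previous paragraph, which the third threshold on $\eta$ handles with room to spare. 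The main obstacle is exactly where DP-SGD departs from DP-NSGD: the uniform bound $\bar{h}_k\le1$ does not decay with $\Vert\nabla f(\vx_k)\Vert$ (whereas $h_k\le1/r$ was simultaneously uniform and harmless, since $r$ enters only through $\eta^2$ terms), so taming the large-gradient regime forces the use of $\bar{h}_k\le c/\Vert\vg_k\Vert$ together with a lower bound on $\Vert\vg_k\Vert$ that is only usable once $\Vert\nabla f(\vx_k)\Vert$ safely exceeds $\tau_0/(1-\tau_1)$; placing the case split at $2\tau_0/(1-\tau_1)$ (so that $\Vert\vg_k\Vert\ge\tfrac{1-\tau_1}{2}\Vert\nabla f(\vx_k)\Vert$) is what keeps both regimes clean. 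The remaining bookkeeping of numerical constants is loose and, as in Remark~\ref{remark:second order upper bound 2}, immaterial to the argument.
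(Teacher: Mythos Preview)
Your proposal is correct and follows essentially the same approach as the paper's own proof: the same case splits (at $c+\tau_0$ for \eqref{eq:technical lemma 1 clip} and at $2\tau_0/(1-\tau_1)$ for the remaining three), the same key estimate $\bar{h}_k(L_0+L_1\Vert\nabla f(\vx_k)\Vert)\le cL_0/\tau_0+2cL_1/(1-\tau_1)$ in the large-gradient regime via $\bar{h}_k\le c/\Vert\vg_k\Vert$ and $\Vert\vg_k\Vert\ge\tfrac{1-\tau_1}{2}\Vert\nabla f(\vx_k)\Vert$, and the same three $\eta$-thresholds collected into \eqref{eq:condition for second-order lemma clip}. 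Your commentary on why $2\tau_0/(1-\tau_1)$ is the natural split point and how this differs from the DP-NSGD argument is exactly the insight the paper records in Remarks~\ref{remark:second order upper bound 1} and~\ref{remark:second order upper bound 2}.
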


\begin{proof}
In fact, this lemma can be proved in an obvious way by separating into different cases.
\begin{description}
    \item[(i)] If $\Vert\nabla f(\vx_k)\Vert\leq c+\tau_0$, it directly follows that $L_1\Vert\nabla f(\vx_k)\Vert\eta^2d\sigma^2/2\leq L_1(c+\tau_0)\eta^2dc^2\sigma^2/2$; otherwise, if $\Vert\nabla f(\vx_k)\Vert> c+\tau_0$, we know
    \begin{equation*}
        \bar{h}_k=\min\left\{1,\frac{c}{\Vert\vg_k\Vert}\right\}\geq\frac{c}{c+\Vert\vg_k\Vert}\geq\frac{c}{c+\tau_0+(\tau_1+1)\Vert\nabla f(\vx_k)\Vert}\geq\frac{c}{3\Vert\nabla f(\vx_k)\Vert}
    \end{equation*}
    therefore $\eta\leq \alpha/(6L_1dc\sigma^2)$ directly yields
    \begin{equation*}
        \frac{L_1\Vert\nabla f(\vx_k)\Vert\eta^2dc^2\sigma^2}{2}\leq\frac{\eta\alpha \bar{h}_k\Vert\nabla f(\vx_k)\Vert^2}{4}.
    \end{equation*}
    Then \eqref{eq:technical lemma 1 clip} follows from summing up these two cases.
    
    \item[(ii)] If $\Vert\nabla f(\vx_k)\Vert\leq 2\tau_0/(1-\tau_1)$, then $\Vert \vg_k-\nabla f(\vx_k)\Vert\leq\tau_0+\tau_1\Vert\nabla f(\vx_k)\Vert\leq2\tau_0/(1-\tau_1)$ and $\bar{h}_k\leq1$, which yield
    \begin{align*}
        (L_0+L_1\Vert\nabla f(\vx_k)\Vert)\eta^2\bar{h}_k^2\langle \nabla f(\vx_k), \vg_k-\nabla f(\vx_k)\rangle \leq& \frac{2(L_0(1-\tau_1)+2L_1\tau_0)\tau_0^2}{(1-\tau_1)^3}\eta^2,\\
        \frac{L_0+L_1\Vert\nabla f(\vx_k)\Vert}{2}\eta^2\bar{h}_k^2\Vert \vg_k-\nabla f(\vx_k) \Vert^2\leq& \frac{2(L_0(1-\tau_1)+2L_1\tau_0)\tau_0^2}{(1-\tau_1)^3}\eta^2.
    \end{align*}
    Otherwise, if $\Vert\nabla f(\vx_k)\Vert> 2\tau_0/(1-\tau_1)$, we note that $\Vert \vg_k-\nabla f(\vx_k)\Vert\leq\frac{1+\tau_1}{2}\Vert\nabla f(\vx_k)\Vert$ and $\Vert\vg_k\Vert\geq\frac{1-\tau_1}{2}\Vert\nabla f(\vx_k)\Vert$. Moreover,
    \begin{equation*}
        \bar{h}_k(L_0+L_1\Vert\nabla f(\vx_k)\Vert)\leq\frac{c(L_0+L_1\Vert\nabla f(\vx_k)\Vert))}{\Vert\vg_k\Vert}\leq\frac{2c(L_0+L_1\Vert\nabla f(\vx_k)\Vert))}{(1-\tau_1)\Vert \nabla f(\vx_k)\Vert}\leq\frac{cL_0}{\tau_0}+\frac{2cL_1}{1-\tau_1}.
    \end{equation*}
    Consequently, once $\eta\leq\dfrac{\alpha}{4}\dfrac{\tau_0(1-\tau_1)}{c(L_0(1-\tau_1)+2L_1\tau_0)}$, we have
    \begin{align*}
        &(L_0+L_1\Vert\nabla f(\vx_k)\Vert)\eta^2\bar{h}_k^2\langle \nabla f(\vx_k), \vg_k-\nabla f(\vx_k)\rangle\\
        \leq& \left(\frac{cL_0}{\tau_0}+\frac{2cL_1}{1-\tau_1}\right)\eta^2 \bar{h}_k\Vert\nabla f(\vx_k)\Vert^2\leq\frac{\eta\alpha \bar{h}_k}{4}\Vert\nabla f(\vx_k)\Vert^2,
    \end{align*}
    and
    \begin{align*}
        &\frac{L_0+L_1\Vert\nabla f(\vx_k)\Vert}{2}\eta^2\bar{h}_k^2\Vert \vg_k-\nabla f(\vx_k) \Vert^2\\
        \leq& \frac{1}{2}\max\left(\frac{cL_0}{\tau_0}+\frac{2cL_1}{1-\tau_1}\right)\eta^2 \bar{h}_k\Vert\nabla f(\vx_k)\Vert^2\leq\frac{\eta\alpha \bar{h}_k}{4}\Vert\nabla f(\vx_k)\Vert^2.
    \end{align*}
    We obtain \eqref{eq:technical lemma 2 clip} and \eqref{eq:technical lemma 3 clip} via summing up respective bounds for two cases.
    
    \item[(iii)] We firstly derive a bound on $\bar{h}_k(L_0+L_1\Vert\nabla f(\vx_k)\Vert)$. When $\Vert\nabla f(\vx_k)\Vert\leq2\tau_0/(1-\tau_1)$, we know $\bar{h}_k(L_0+L_1\Vert\nabla f(\vx_k)\Vert)\leq\frac{L_0(1-\tau_1)+2L_1\tau_0}{1-\tau_1}$. Otherwise, we know $\bar{h}_k(L_0+L_1\Vert\nabla f(\vx_k)\Vert)\leq\frac{cL_0}{\tau_0}+\frac{2cL_1}{1-\tau_1}$. The last bound \eqref{eq:technical lemma 4 clip} can be derived directly by
    \begin{align*}
        &\frac{L_0+L_1\Vert\nabla f(\vx_k)\Vert}{2}\eta^2\bar{h}_k^2\Vert\nabla f(\vx_k) \Vert^2\\
        \leq&\max\left(\frac{L_0(1-\tau_1)+2L_1\tau_0}{1-\tau_1},\frac{cL_0}{\tau_0}+\frac{2cL_1}{1-\tau_1}\right)\frac{\eta^2}{2} \bar{h}_k\Vert\nabla f(\vx_k)\Vert^2\leq\frac{\eta\alpha \bar{h}_k}{4}\Vert\nabla f(\vx_k)\Vert^2
    \end{align*}
    via setting $\eta\leq\dfrac{\alpha}{2}\min\left(\dfrac{1-\tau_1}{L_0(1-\tau_1)+2L_1\tau_0},\dfrac{\tau_0(1-\tau_1)}{c(L_0(1-\tau_1)+2L_1\tau_0)}\right)$.
\end{description}
In general, the four inequalities hold as long as we ensure \eqref{eq:condition for second-order lemma clip}.
\end{proof}

Explanations in Remarks \ref{remark:second order upper bound 1} and \ref{remark:second order upper bound 2} also explain the motivations behind this proof. In the sequel, we will use this lemma only with
\begin{equation}\label{eq:set alpha clip}
    \alpha=\alpha_0:=\frac{\tau_0(1-\tau_1)}{c(1-\tau_1)+2\tau_0}<\frac{1}{2}.
\end{equation}

\begin{Lemma}\label{lemma:verify condition general sigma clip}
In the statement of Theorem \ref{thm:main convergence clip}, we take $\eta=\sqrt{\frac{2}{(L_1(c+\tau_0)+L_0)T dc^2\sigma^2}}$. Then the condition \eqref{eq:condition for second-order lemma clip} in Lemma \ref{lemma:second order upper bound clip} holds as long as we run the algorithm long enough i.e. $T\geq C\left(\sigma^2,\tau,L,d,c\right)$.
\end{Lemma}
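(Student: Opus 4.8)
The plan is to mirror the proof of Lemma~\ref{lemma:verify condition general sigma} almost verbatim, tracking only the modified constants that the clipping threshold $c$ introduces. The point is that in the setting of Theorem~\ref{thm:main convergence clip} the noise multiplier $\sigma$ is treated as a fixed quantity, so the prescribed step size $\eta=\sqrt{2/((L_1(c+\tau_0)+L_0)Tdc^2\sigma^2)}$ scales as $\Theta(T^{-1/2})$; consequently ``$\eta$ small enough to satisfy \eqref{eq:condition for second-order lemma clip}'' is equivalent to ``$T$ larger than an explicit threshold'', since each of the three upper bounds on the right-hand side of \eqref{eq:condition for second-order lemma clip} is a strictly positive constant once $(\sigma^2,\tau_0,\tau_1,L_0,L_1,d,c)$ are fixed.

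Concretely, I would substitute the prescribed $\eta$, together with $\alpha=\alpha_0=\tau_0(1-\tau_1)/(c(1-\tau_1)+2\tau_0)$ from \eqref{eq:set alpha clip}, into each of the three requirements $\eta\le \alpha_0/(6L_1dc\sigma^2)$, $\eta\le \alpha_0(1-\tau_1)/(2L_0(1-\tau_1)+4L_1\tau_0)$, and $\eta\le \alpha_0\tau_0(1-\tau_1)/(4c(L_0(1-\tau_1)+2L_1\tau_0))$, square both sides, and solve for $T$. For the first one this gives $T\ge 72L_1^2 d\sigma^2/(\alpha_0^2(L_1(c+\tau_0)+L_0))$, and the second and third yield analogous explicit lower bounds
\[
T\ \ge\ \frac{2\,(2L_0(1-\tau_1)+4L_1\tau_0)^2}{\alpha_0^2(1-\tau_1)^2(L_1(c+\tau_0)+L_0)dc^2\sigma^2},
\qquad
T\ \ge\ \frac{32\,(L_0(1-\tau_1)+2L_1\tau_0)^2}{\alpha_0^2\tau_0^2(1-\tau_1)^2(L_1(c+\tau_0)+L_0)d\sigma^2},
\]
in which the extra factors of $c^2\sigma^2$ cancel appropriately and leave finite constants. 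Taking $C(\sigma^2,\tau,L,d,c)$ to be the maximum of these three thresholds completes the argument.

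I do not expect any genuine obstacle here: the content of the lemma is purely the translation ``$\eta$ small $\iff$ $T$ large'', and the only bookkeeping nuance is that $\alpha_0$ itself depends on $c$ (and on $\tau_0,\tau_1$), so when collecting the final constant one should record that dependence explicitly rather than absorb it silently — exactly as in Lemma~\ref{lemma:verify condition general sigma}. The genuinely more delicate companion is Lemma~\ref{lemma:verify condition clip}, where $\sigma$ is allowed to grow like $\sqrt{T}$ and one instead needs $N$ sufficiently large; that case is handled separately in Appendix~\ref{app:dp-sgd} and is not needed here.
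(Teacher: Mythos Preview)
Your proposal is correct and essentially identical to the paper's own proof: the paper likewise substitutes $\eta=\sqrt{2/((L_1(c+\tau_0)+L_0)Tdc^2\sigma^2)}$ with $\alpha=\alpha_0$ into \eqref{eq:condition for second-order lemma clip}, squares, and records the equivalent lower bound on $T$ as the maximum of the three resulting thresholds. Your explicit expressions for the three thresholds match those obtained by expanding the paper's compact formula $T\ge \frac{2}{(L_1(c+\tau_0)+L_0)dc^2\sigma^2}\max\{\cdot\}^2$.
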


\begin{proof}
We see that
\begin{equation*}
    \eta=\sqrt{\frac{2}{(L_1(c+\tau_0)+L_0)T dc^2\sigma^2}}\leq\min\left\{\dfrac{\alpha_0}{6L_1dc\sigma^2},\dfrac{\alpha_0(1-\tau_1)}{2L_0(1-\tau_1)+4L_1\tau_0},\dfrac{\alpha_0\tau_0(1-\tau_1)}{4c(L_0(1-\tau_1)+2L_1\tau_0)}\right\}
\end{equation*}
is equivalent to
\begin{equation*}
    T\geq\frac{2}{(L_1(c+\tau_0)+L_0)dc^2\sigma^2}\max\left\{\dfrac{6L_1dc\sigma^2}{\alpha},\dfrac{2L_0(1-\tau_1)+4L_1\tau_0}{\alpha(1-\tau_1)},\dfrac{4c(L_0(1-\tau_1)+2L_1\tau_0)}{\alpha\tau_0(1-\tau_1)}\right\}^2.
\end{equation*}
\end{proof}

\subsection{Final Procedures in Proof}

\begin{proof}[Proof of Theorem \ref{thm:main convergence clip}]
Equipped with Lemmas \ref{lemma:first order lower bound clip}, \ref{lemma:second order upper bound clip} and \ref{lemma:verify condition clip}, we further wrote the one step inequality \eqref{eq:descent inequality 2 clip} into
\begin{align}
    \eta B(\Vert\nabla f(\vx_k)\Vert)\leq&f(\vx_{k})- \mathbb{E}_k\left[f(\vx_{k+1})\right]\notag\\
    &+\eta^2\left(\frac{(L_1(c+\tau_0)+L_0)dc^2\sigma^2}{2}+ \frac{4(L_0(1-\tau_1)+2L_1\tau_0)\tau_0^2}{(1-\tau_1)^3}\eta^2 \right).\label{eq:descent inequality 3 clip}
\end{align}
We then separate the time index into
\begin{equation*}
    \mathcal{U}=\left\{k<T:\Vert\nabla f(\vx_k)\Vert\geq \frac{\tau_0}{1-\tau_1}\right\}
\end{equation*}
and $\mathcal{U}^c=\{0,1,\cdots,T-1\}\backslash\mathcal{U}$. Given this, we derive from \eqref{eq:descent inequality 3 clip} that for any $k\in\mathcal{U}$,
\begin{align*}
    &\frac{\tau_0(c-1)}{c(1-\tau_1)+2\tau_0}\eta\Vert\nabla f(\vx_k)\Vert\\
    \leq&f(\vx_{k})-\mathbb{E}_k\left[f(\vx_{k+1})\right]+\eta^2\left(\frac{(L_1(c+\tau_0)+L_0)dc^2\sigma^2}{2}+ \frac{4(L_0(1-\tau_1)+2L_1\tau_0)\tau_0^2}{(1-\tau_1)^3}\eta^2 \right).
\end{align*}
Similarly, together with $\alpha_0\leq1/2$, \eqref{eq:first order lower bound 2} deduces that for any $k\in\mathcal{U}^c$,
\begin{align*}
     &\frac{1}{2}\eta\Vert\nabla f(\vx_k)\Vert^2 \\
    \leq&f(\vx_{k})-\mathbb{E}_k\left[f(\vx_{k+1})\right]+\eta^2\left(\frac{(L_1(c+\tau_0)+L_0)dc^2\sigma^2}{2}+ \frac{4(L_0(1-\tau_1)+2L_1\tau_0)\tau_0^2}{(1-\tau_1)^3}\eta^2 \right).
\end{align*}
Sum these inequalities altogether to have
\begin{align*}
    &\max\left\{\frac{\tau_0(c-1)}{c(1-\tau_1)+2\tau_0}\frac{1}{T}\sum_{k\in\mathcal{U}}\Vert\nabla f(\vx_k)\Vert,\frac{1}{2T}\sum_{k\notin\mathcal{U}}\Vert\nabla f(\vx_k)\Vert^2\right\}\\
    \leq&\frac{(f(x_0)-f^\ast)}{T\eta}+\eta\frac{(L_1(c+\tau_0)+L_0)dc^2\sigma^2}{2} +\eta \frac{4(L_0(1-\tau_1)+2L_1\tau_0)\tau_0^2}{(1-\tau_1)^3}.
\end{align*}
We minimize the sum of first two terms by setting
\begin{equation}\label{eq:set eta updated clip}
    \eta=\sqrt{\frac{2}{(L_1(c+\tau_0)+L_0)T dc^2\sigma^2}}.
\end{equation}
We then define
\begin{equation}
    \Delta=(D_f+1)\sqrt{\frac{(L_1(c+\tau_0)+L_0)dc^2\sigma^2}{2T}}+ \frac{4(L_0+2L_1\tau_0)\tau_0^2}{(1-\tau_1)^3}\sqrt{\frac{2}{(L_1(c+\tau_0)+L_0)T dc^2\sigma^2}}.
\end{equation}
Recall $\sigma^2$ can grow with $T$, so these two terms are $\mathcal{O}(\sigma/\sqrt{T}),\mathcal{O}(1/(\sigma\sqrt{T}))$ respectively. Then we further have
\begin{align}
    &\mathbb{E}\left[\min_{0\leq k<T}\Vert\nabla f(\vx_k)\Vert\right]\notag\\
    \leq&\mathbb{E}\left[\min\left\{\sqrt{\dfrac{1}{|\mathcal{U}|}\sum_{k\in\mathcal{U}^c}\Vert\nabla f(\vx_k)\Vert^2},\dfrac{1}{|\mathcal{U}|}\sum_{k\notin\mathcal{U}}\Vert\nabla f(\vx_k)\Vert\right\}\right]\notag\\
    \leq&\max\left\{\sqrt{4\Delta},\dfrac{2(c+2\tau_0)}{\tau_0(c-1)}\Delta\right\}\label{eq:final inequality clip},
\end{align}
where the second inequality follows from the fact that either $|\mathcal{U}|\geq T/2$ or $|\mathcal{U}^c|\geq T/2$.
In the end, we capture the leading terms in this upper bound to have
\begin{align*}
    \mathbb{E}\left[\min_{0\leq k<T}\Vert\nabla f(\vx_k)\Vert\right]\leq&\mathcal{O}\left(\sqrt[4]{\frac{(D_f+1)^2(L_1(c+\tau_0)+L_0)dc^2\sigma^2}{2T}}\right)\\
    +&\mathcal{O}\left(\sqrt{\sqrt{\frac{2}{(L_1(c+\tau_0)+L_0)T dc^2\sigma^2}}\frac{(L_0+2L_1\tau_0)\tau_0}{(1-\tau_1)^3}} \right).
\end{align*}
\end{proof}

\begin{Lemma}\label{lemma:verify condition clip}
In the statement of Corollary \ref{cor:trade-off clip}, we take $\eta=\sqrt{\frac{2}{(L_1(c+\tau_0)+L_0)T dc^2\sigma^2}}$, $\sigma=c_2B\sqrt{T\log(1/\delta)}/(N\epsilon)$ and $T\ge\gO(N^2\epsilon^2/(B^2c^3d \log\frac{1}{\delta}))$. Then the condition in Lemma \ref{lemma:verify condition general sigma clip} holds as long as we have enough samples i.e. $N\geq L_1C\left(\epsilon,\delta,\tau,L,B,d,c\right)$.
\end{Lemma}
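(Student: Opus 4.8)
The plan is to follow the template of Lemma~\ref{lemma:verify condition} almost verbatim, only replacing the DP-NSGD regularizer $r$ by the clipping threshold $c$, the constant $\alpha_0$ by its clipped counterpart from \eqref{eq:set alpha clip}, and the three-term requirement \eqref{eq:condition for second-order lemma} by its clipped analogue \eqref{eq:condition for second-order lemma clip} of Lemma~\ref{lemma:second order upper bound clip}. First I would substitute the privacy-calibrated noise level $\sigma^2=c_2^2B^2T\log(1/\delta)/(N^2\epsilon^2)$ from Lemma~\ref{thm:privacy guarantee} into the prescribed step size, obtaining
\begin{equation*}
    \eta=\sqrt{\frac{2}{(L_1(c+\tau_0)+L_0)Tdc^2\sigma^2}}=\frac{N\epsilon}{c_2BT\sqrt{\log(1/\delta)}}\sqrt{\frac{2}{(L_1(c+\tau_0)+L_0)dc^2}},
\end{equation*}
which makes the $T$- and $\sigma$-dependence of $\eta$ explicit and reduces the whole lemma, via Lemma~\ref{lemma:verify condition general sigma clip}, to checking \eqref{eq:condition for second-order lemma clip} term by term.

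Next I would dispatch the three inequalities of \eqref{eq:condition for second-order lemma clip} one at a time. For the first, $\eta\le\alpha_0/(6L_1dc\sigma^2)$, the right-hand side equals $\alpha_0N^2\epsilon^2/(6L_1dc\,c_2^2B^2T\log(1/\delta))$, which also scales like $1/T$; hence the factors of $T$ cancel and the inequality collapses to a lower bound on $N$ of the form $N\ge \tfrac{6c_2BL_1}{\alpha_0\epsilon}\sqrt{2d\log(1/\delta)/(L_1(c+\tau_0)+L_0)}$, i.e.\ the binding, ``$N\ge L_1C$''-type constraint (for large $L_1$ this quantity grows only linearly in $L_1$). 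The remaining two inequalities of \eqref{eq:condition for second-order lemma clip} have right-hand sides that are constants in $(T,\sigma)$, so after substituting $\eta$ they turn into lower bounds on the ratio $T/N$; since the hypothesis $T\ge\gO(N^2\epsilon^2/(B^2c^3d\log(1/\delta)))$ forces $T/N\gtrsim N\epsilon^2/(B^2c^3d\log(1/\delta))$, each of them holds once $N$ exceeds a further constant depending only on $(\epsilon,\delta,\tau,L,B,d,c)$. Taking the maximum over the three resulting requirements on $N$ yields the stated $N\ge L_1C(\epsilon,\delta,\tau,L,B,d,c)$.

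I do not expect a genuine obstacle: the lemma is a bookkeeping check that the constraints accumulated in the proof are all met in the regime where both $T$ and $N$ are large. The one point that needs a little care is verifying that $\alpha_0$ from \eqref{eq:set alpha clip} depends only on $(\tau_0,\tau_1,c)$ and carries no hidden $(L_0,L_1)$-dependence, so that the dominant constraint from the first inequality is indeed (essentially) linear in $L_1$; and, as with Remark~\ref{remark:second order upper bound 2}, the constant $C$ produced this way is far from tight and could be sharpened by revisiting the thresholds used in the proof of Lemma~\ref{lemma:second order upper bound clip}.
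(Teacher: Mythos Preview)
Your proposal is correct and follows essentially the same route as the paper: substitute the privacy-calibrated $\sigma^2$ into $\eta$, verify the first inequality of \eqref{eq:condition for second-order lemma clip} directly to obtain the $N\ge L_1C(\cdot)$ bound, and show the remaining two inequalities reduce to lower bounds on $T/N$ that are absorbed by the assumption $T\ge\gO(N^2\epsilon^2/(B^2c^3d\log(1/\delta)))$. The only cosmetic difference is a stray factor of $c$ in the paper's displayed $N$-bound (stemming from what appears to be a typo between $dc\sigma^2$ and $dc^2\sigma^2$), which does not affect the structure of the argument.
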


\begin{proof}
It is more straight-forward to verify the condition \eqref{eq:condition for second-order lemma} directly.
Firstly, we plug $\sigma^2=\frac{c_2^2B^2T\log(1/\delta)}{N^2\epsilon^2}$ from Lemma \ref{thm:privacy guarantee} into the formula of $\eta$ to have
\begin{equation*}
    \eta=\sqrt{\frac{2}{(L_1(c+\tau_0)+L_0)T dc^2\sigma^2}}=\frac{N\epsilon}{c_2BT}\sqrt{\frac{2}{(L_1(c+\tau_0)+L_0)dc^2\log(1/\delta)}}\leq \frac{\alpha_0N^2\epsilon^2}{6L_1dc^2c_2^2B^2T\log(1/\delta)} =\frac{\alpha_0}{6L_1dc^2\sigma^2}
\end{equation*}
as long as we have enough samples
\begin{equation*}
    N\geq \frac{6L_1cc_2B}{\epsilon\alpha_0}\sqrt{\frac{2d\log(1/\delta)}{L_1(c+\tau_0)+L_0}}.
\end{equation*}
Other conditions
\begin{equation*}
    \eta=\frac{N\epsilon}{cc_2BT}\sqrt{\frac{2}{(L_1(c+\tau_0)+L_0)d\log(1/\delta)}}\leq\min\left\{\dfrac{\alpha_0(1-\tau_1)}{2L_0(1-\tau_1)+4L_1\tau_0},\dfrac{\alpha_0\tau_0(1-\tau_1)}{4c(L_0(1-\tau_1)+2L_1\tau_0)}\right\}
\end{equation*}
holds as long as we run the algorithm long enough
\begin{equation}\label{eq:large T/N clip}
    \frac{T}{N}\geq\min\left\{1,\dfrac{2c}{\tau_0}\right\}\frac{2(L_0(1-\tau_1)+2L_1\tau_0)\epsilon}{cc_2B\alpha_0(1-\tau_1)}\sqrt{\frac{2}{(L_1(c+\tau_0)+L_0)d\log(1/\delta)}}.
\end{equation}
The last requirement \eqref{eq:large T/N clip} naturally holds due to $T\ge\gO(N^2\epsilon^2/(B^2c^3d \log\frac{1}{\delta}))$.
\end{proof}
\begin{proof}[Proof of Corollary \ref{cor:trade-off clip}]

Moreover, we take the limit $T\ge\gO(N^2\epsilon^2/(B^2c^3d \log\frac{1}{\delta}))$ to derive the privacy-utility trade-off
\begin{equation*}
    \mathbb{E}\left[\min_{0\leq k<T}\Vert\nabla f(\vx_k)\Vert\right]\leq\mathcal{O}\left(\sqrt[4]{\frac{(D_f+1)^2(L_1(c+\tau_0)+L_0)dc^2B^2\log(1/\delta)}{N^2\epsilon^2}}\right),
\end{equation*}
ending the proof.
\end{proof}

\section{More Experiments}\label{app:more-exp}

\subsection{Fine-tuning Large Language Models}

We use the pretrained RoBERTa model~\cite{liu2019roberta}\footnote{The model and checkpoints can be found at {https://github.com/pytorch/fairseq/tree/master/examples/roberta}.}, which has 125M parameters (RoBERTa-Base). We fine-tune the full pre-trained models except the embedding layer for SST-2 classification task from the GLUE benchmark~\cite{wang2018glue}. We adopt the setting as in ~\cite{li2021large}: full-precision training with the batch size 1000  and the number of epochs 10.

\textbf{Hyperparameter choice:}   For privacy parameters, we use $\epsilon=8, \delta=\text{1e-5}$. With Renyi differential privacy accountant, this corresponds to setting the noise multiplier  $0.635$. We compare the behavior of DP-SGD and DP-NSGD.   For DP-SGD, we search the clipping threshold  $c$ from $\{0.1, 0.5, 2.5, 12.5, 50.0\}$ and the learning rate $lr$ from $\{\text{0.05, 0.1, 0.2, 0.4, 0.8, 1.6}\}$. For the DP-NSGD, we search the learning rate $lr$ over the same set of DP-SGD and the regularizer $r$ from $\{\text{1e-3,  1e-2, 1e-1, 1., 10.0}\}$.

\begin{figure}[ht]
% \vskip 0.2in
\vspace{-2mm}
\begin{center}
\centerline{\includegraphics[width=0.5\columnwidth]{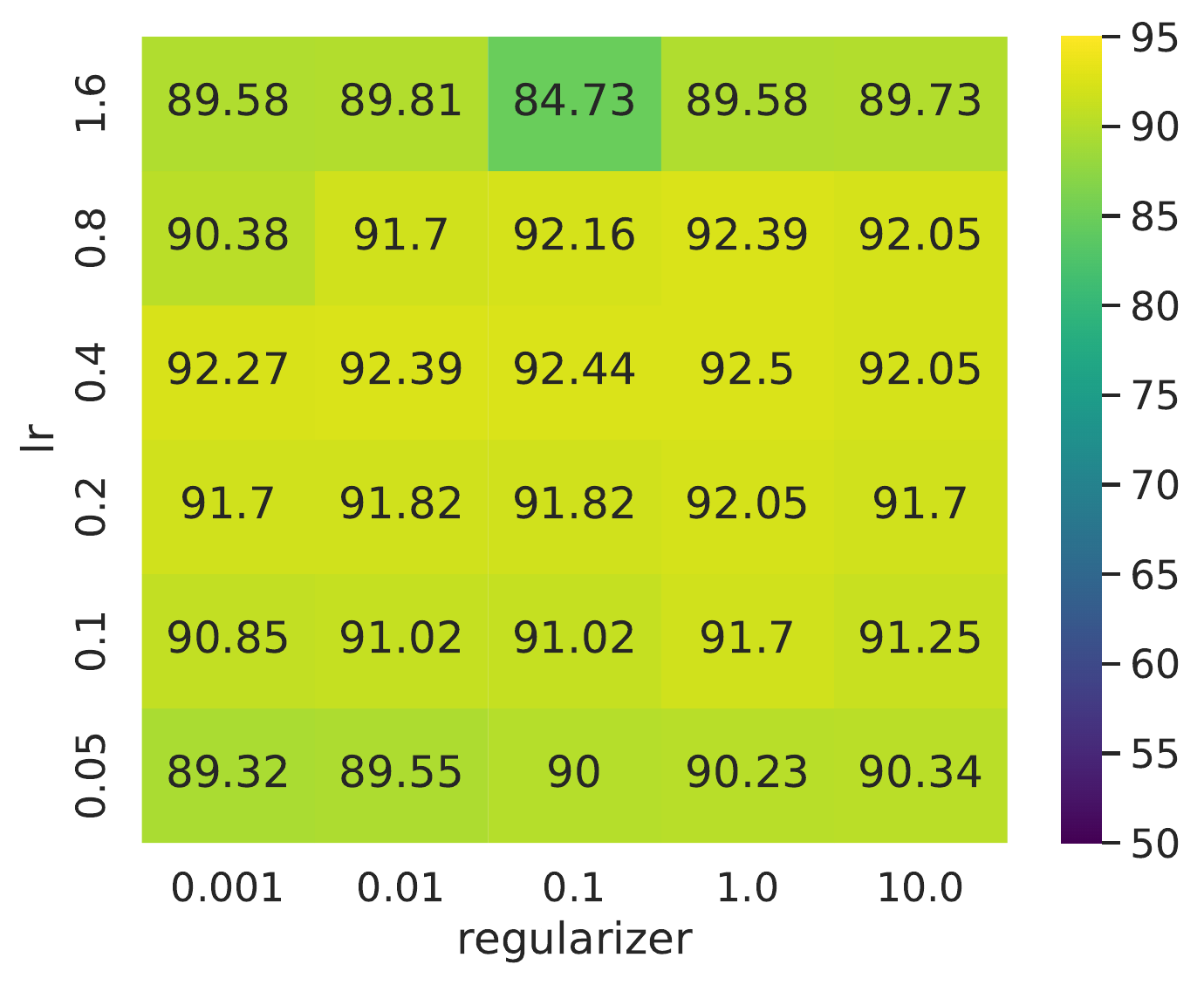}
\includegraphics[width=0.5\columnwidth]{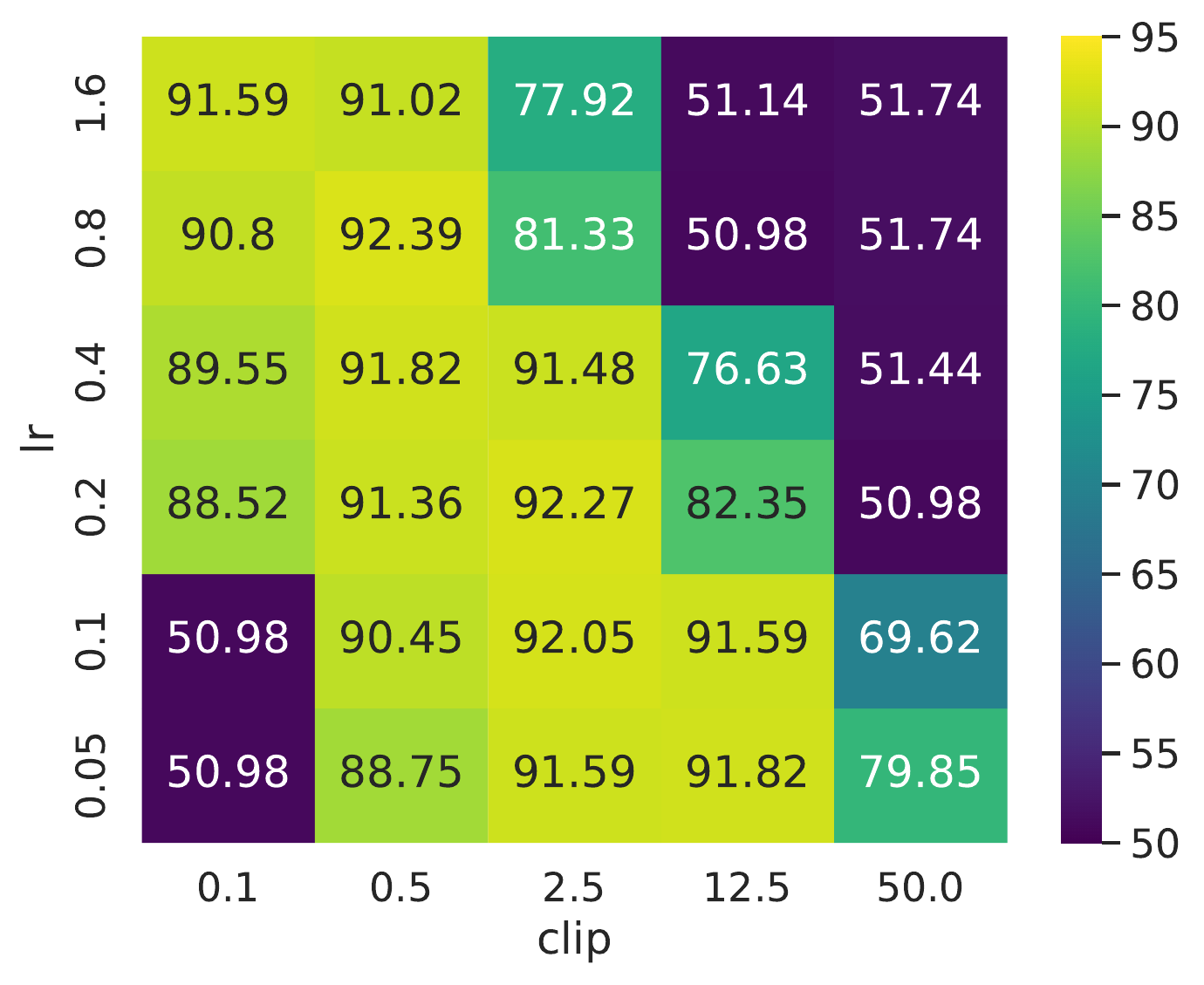}}
\caption{Experiments of fine-tuning RoBERTa on SST-2 task. Left: Accuracy heatmap of DP-NSGD with varying learning rates and regularizers. Right: Accuracy heatmap of DP-SGD with varying learning rates and clipping thresholds. }
\label{fig:heatmap-dpnsgd-dpsgd-sst2}
\end{center}
\vskip -0.2in
\end{figure}
We have similar observation in Figure \ref{fig:heatmap-dpnsgd-dpsgd-sst2} that the performance of DP-NSGD is rather stable for the regularizer and the learning rate, which indicates that it could be easier to tune than DP-SGD.

We also run the experiments with DP-Adam and DP-NAdam optimizers.  DP-Adam optimizer adds the per-example gradient clipping and Gaussian noise addition steps to Adam \cite{ADAM}.  For DP-Adam, we search the clipping threshold  $c$ takes values $\{0.1, 0.5, 2.5, 12.5, 50.0\}$ and the learning rate $lr$ taking values $\{\text{1e-4, 5e-4, 1e-3, 2e-3, 5e-3}\}$. For DP-NAdam, we use per-sample gradient normalization to replace the per-sample gradient clipping. We search the learning rate $lr$ from $\{\text{1e-4, 5e-4, 1e-3, 2e-3, 5e-3}\}$ and the regularizer factor $r$ from $\{\text{1e-3,  1e-2, 1e-1, 1., 10.0}\}$.

\begin{figure}[ht]
\vskip 0.2in
\begin{center}
\centerline{\includegraphics[width=0.5\columnwidth]{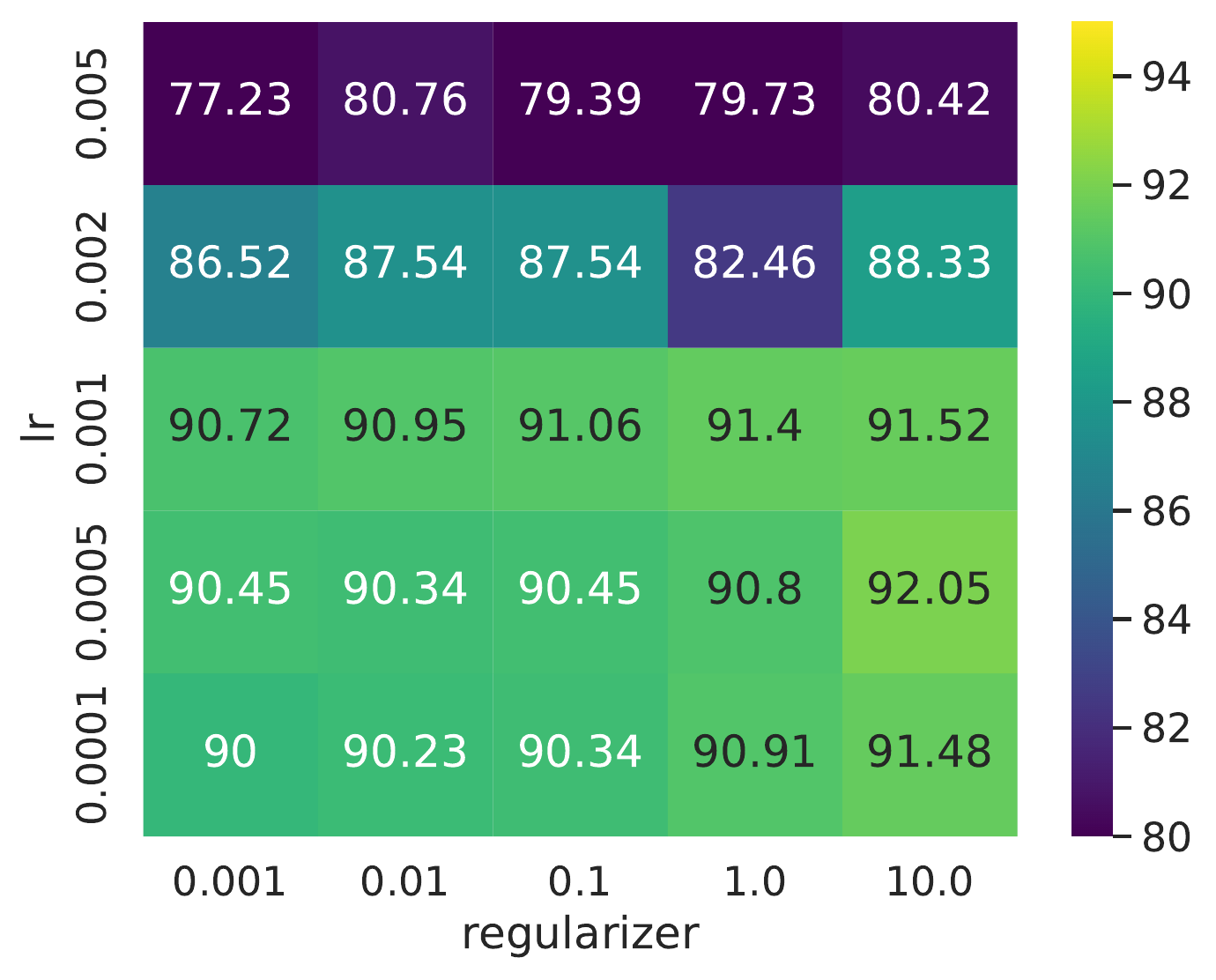}
\includegraphics[width=0.5\columnwidth]{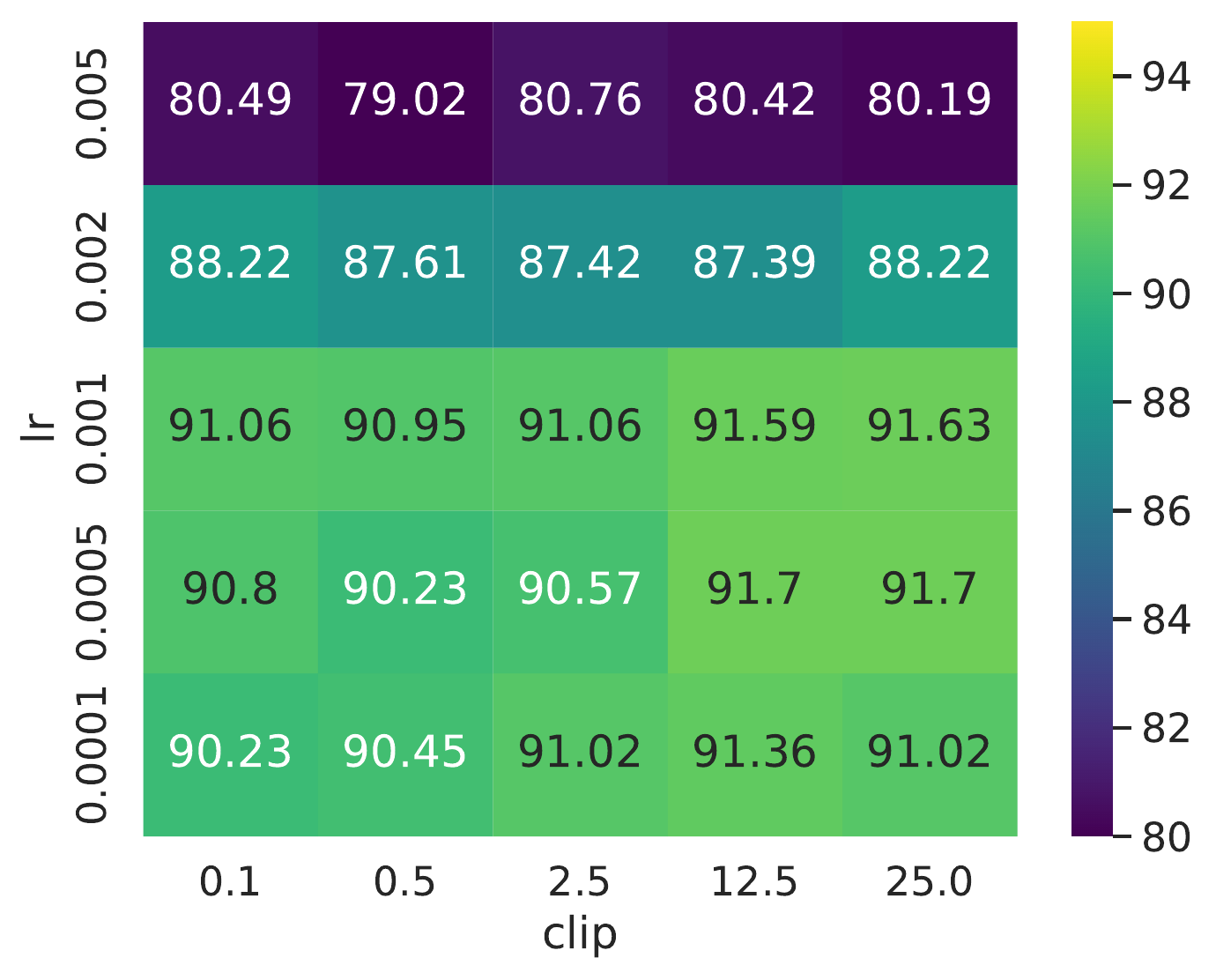}}
\caption{Left: Accuracy heatmap of DP-NAdam with varying learning rates and regularizers. Right: Accuracy heatmap of DP-Adam with varying learning rates and clips. }
\label{fig:heatmap-dpnadam-dpadam-sst2}
\end{center}
\vskip -0.2in
\end{figure}

\section{Proof for Privacy Guarantee}\label{app:privacy}

This section presents a simple proof for Lemma \ref{thm:privacy guarantee}. To begin with, we formally introduce the functional view of Renyi Differential Privacy below. Define a functional as
\begin{equation}\label{eq:privacy accountant functional}
    \epsilon_{\mathcal{M}}(\alpha)\triangleq\sup_{\mathbb{D},\mathbb{D}^\prime} D_\alpha(\mathcal{M}(\mathbb{D})\Vert\mathcal{M}(\mathbb{D}^\prime))=\sup_{\mathbb{D},\mathbb{D}^\prime}\frac{1}{\alpha-1}\log\mathbb{E}_{\theta\sim\mathcal{M}(\mathbb{D}^\prime)}\left[\left(\frac{\mathcal{M}(\mathbb{D})(\theta)}{\mathcal{M}(\mathbb{D}^\prime)(\theta)}\right)^\alpha\right],\alpha\geq1
\end{equation}
where $\mathcal{M}(\mathbb{D})$ denotes the distribution of the output with input $\mathbb{D}$ and $\mathcal{M}(\mathbb{D})(\theta)$ refers to the density at $\theta$ of this distribution. The following propositions clarify several notions of differential privacy in the literature.

\begin{Proposition}\label{prop:translation to DP}
Let $\mathcal{M}$ be a randomized mechanism.
\begin{description}
    \item[(i)] If and only if $\epsilon_\mathcal{M}(\infty)\leq\epsilon$, then $\mathcal{M}$ is $\epsilon$-(pure)-DP, \cite{dwork2014algorithmic}.
    \item[(ii)] If and only if $\epsilon_\mathcal{M}(\alpha)\leq\epsilon$, then $\mathcal{M}$ is $(\alpha,\epsilon)$-RDP (Renyi differential privacy), \cite{mironov2017renyi}.
    \item[(iii)] If and only if $\delta\geq\exp[(\alpha-1)(\epsilon_\mathcal{M}(\alpha)-\epsilon)]$ for some $\alpha\geq1$, then $\mathcal{M}$ is $(\epsilon,\delta)$-DP, \cite{dwork2014analyze}.
    \item[(iv)] If and only if $\epsilon_\mathcal{M}(\alpha)\leq\rho\alpha$ for any $\alpha\geq1$, then $\mathcal{M}$ is $\rho$-zCDP (zero-concentrated differential privacy), \cite{bun2016concentrated}.
    \item[(v)] If and only if $\epsilon_\mathcal{M}(\alpha)\leq\rho\alpha$ for any $\alpha\in(1,\omega)$, then $\mathcal{M}$ is $(\rho,\omega)$-tCDP (truncated concentrated differential privacy), \cite{bun2018composable}.
\end{description}
\end{Proposition}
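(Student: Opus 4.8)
The plan is to treat the five items uniformly as translations of each privacy notion into a single condition on the functional $\epsilon_{\mathcal{M}}(\cdot)$, relying on two standard facts about the R\'enyi divergence $D_\alpha$: that it is nondecreasing in $\alpha$, and that its endpoints recover the KL divergence as $\alpha\to 1$ and the max-divergence as $\alpha\to\infty$. Under this view, four of the five equivalences, namely (i), (ii), (iv) and (v), reduce to unfolding the definitions, whereas (iii) carries the only genuinely analytic content, the RDP-to-DP tail bound.

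First I would dispatch (ii), (iv) and (v). By definition, $(\alpha,\epsilon)$-RDP asks for $D_\alpha(\mathcal{M}(\mathbb{D})\Vert\mathcal{M}(\mathbb{D}^\prime))\leq\epsilon$ on every neighboring pair, which is exactly $\sup_{\mathbb{D}\sim\mathbb{D}^\prime}D_\alpha(\cdot\Vert\cdot)=\epsilon_{\mathcal{M}}(\alpha)\leq\epsilon$, giving (ii). Both $\rho$-zCDP and $(\rho,\omega)$-tCDP are defined by the linear-in-$\alpha$ bound $D_\alpha\leq\rho\alpha$, the former for all orders $\alpha\geq 1$ and the latter only for $\alpha\in(1,\omega)$; taking the supremum over neighboring datasets turns these into $\epsilon_{\mathcal{M}}(\alpha)\leq\rho\alpha$ on the respective ranges, yielding (iv) and (v). For (i) I would invoke that the order-$\infty$ R\'enyi divergence equals the essential supremum of the log-likelihood ratio, $D_\infty(P\Vert Q)=\mathrm{ess\,sup}_\theta\log\bigl(P(\theta)/Q(\theta)\bigr)$, so $\epsilon_{\mathcal{M}}(\infty)\leq\epsilon$ is the pointwise bound $\mathcal{M}(\mathbb{D})(\theta)\leq e^\epsilon\mathcal{M}(\mathbb{D}^\prime)(\theta)$ almost everywhere; integrating over any output set $S$ recovers $\Pr[\mathcal{M}(\mathbb{D})\in S]\leq e^\epsilon\Pr[\mathcal{M}(\mathbb{D}^\prime)\in S]$, i.e. pure $\epsilon$-DP, and conversely the set bound for all $S$ (tested on $\{P>e^\epsilon Q\}$) forces the density bound a.e.

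The substantive step is (iii), the conversion from a R\'enyi bound at a single order to $(\epsilon,\delta)$-DP. Fixing $\alpha>1$ and a neighboring pair with $P=\mathcal{M}(\mathbb{D})$, $Q=\mathcal{M}(\mathbb{D}^\prime)$, I would write $\Pr[P\in S]=\mathbb{E}_{\theta\sim Q}\bigl[\mathbf{1}[\theta\in S]\,(P(\theta)/Q(\theta))\bigr]$ and split the integration according to whether the privacy loss $\log(P(\theta)/Q(\theta))$ lies below or above the threshold $\epsilon$. On the low-loss part the ratio contributes at most $e^\epsilon\Pr[Q\in S]$, while the high-loss part is controlled by a Markov-type inequality applied to the $\alpha$-th moment $\mathbb{E}_{\theta\sim Q}[(P/Q)^\alpha]=\exp[(\alpha-1)D_\alpha(P\Vert Q)]$. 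Arranging the split at the sharp threshold leaves additive slack exactly $\exp[(\alpha-1)(\epsilon_{\mathcal{M}}(\alpha)-\epsilon)]$, so $\Pr[P\in S]\leq e^\epsilon\Pr[Q\in S]+\exp[(\alpha-1)(\epsilon_{\mathcal{M}}(\alpha)-\epsilon)]$; since enlarging $\delta$ only weakens the guarantee and we may choose the most favorable $\alpha$, the condition $\delta\geq\exp[(\alpha-1)(\epsilon_{\mathcal{M}}(\alpha)-\epsilon)]$ for some $\alpha$ suffices for $(\epsilon,\delta)$-DP.

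The hard part will be this tail estimate in (iii): forcing the leftover mass to collapse to the clean closed form $\exp[(\alpha-1)(\epsilon_{\mathcal{M}}(\alpha)-\epsilon)]$ demands that the H\"older/Markov bookkeeping be done at the optimal threshold rather than through a crude union bound. I would also flag that the reverse implication in (iii) is delicate, since a generic $(\epsilon,\delta)$-DP mechanism need not admit a matching R\'enyi bound at any finite order; accordingly I would read the ``if and only if'' as the assertion that $\epsilon_{\mathcal{M}}$ certifies $(\epsilon,\delta)$-DP through this conversion, with the converse understood in the privacy-profile (hockey-stick divergence) sense rather than as an exact per-$\alpha$ equivalence.
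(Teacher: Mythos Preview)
The paper does not actually prove this proposition; it states the five items with citations to the original sources and adds only a one-sentence remark that item (iii) is adapted from the second assertion of Theorem~2 in \cite{abadi2016deep}, with the converse direction usually attributed to Proposition~3 of \cite{mironov2017renyi}. There is therefore no in-paper argument to compare against beyond these pointers to the literature.

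Your sketch is the standard route and is correct for the sufficiency directions: items (ii), (iv), (v) are definitional rewritings once the supremum over neighboring pairs is absorbed into $\epsilon_{\mathcal{M}}(\alpha)$; item (i) follows from $D_\infty$ being the max-divergence; and item (iii) is the RDP-to-$(\epsilon,\delta)$-DP conversion via the change-of-measure/Markov argument you outline, which is exactly Proposition~3 of \cite{mironov2017renyi}. Your caution about the ``only if'' in (iii) is well placed and worth stating more sharply: as literally written, the converse is false in general, since an $(\epsilon,\delta)$-DP mechanism whose output distributions are not mutually absolutely continuous (e.g., one that places a $\delta$-mass singularity on a point of zero density under the neighbor) has $\epsilon_{\mathcal{M}}(\alpha)=\infty$ for every $\alpha>1$, while at $\alpha=1$ the right-hand side is $1$. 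The paper's remark implicitly concedes this by attributing the two directions to different sources, and downstream (in the proof of Lemma~\ref{thm:privacy guarantee}) only the ``if'' direction of (iii) is ever invoked. So your reading of the biconditional as a certification statement rather than an exact equivalence is the right one.
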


We remark that Proposition \ref{prop:translation to DP}(iii) is adapted from the second assertion of Theorem 2 in \cite{abadi2016deep}, while the literature prefers to use the converse argument for this assertion, Proposition 3 in \cite{mironov2017renyi}. Here we also restate the composition theorem for Renyi differential privacy.

\begin{Proposition}[Proposition 1, \cite{mironov2017renyi}]\label{prop:composition theorem}
Let $\mathcal{M}=\mathcal{M}_T\circ\mathcal{M}_{T-1}\circ\cdots\circ\mathcal{M}_1$ be defined in an interactively compositional way, then for any fixed $\alpha\geq1$,
\begin{equation*}
    \epsilon_{\mathcal{M}}(\alpha)\leq\sum_{i=1}^{T}\epsilon_{\mathcal{M}_i}(\alpha).
\end{equation*}
\end{Proposition}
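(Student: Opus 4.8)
The plan is to unfold the definition of the R\'enyi divergence along the $T$ rounds of the interactive composition and integrate out the rounds one at a time, invoking the single-round accountant bound at each step. Fix an order $\alpha>1$ and a pair of neighboring datasets $\mathbb{D}\sim\mathbb{D}^\prime$; the boundary case $\alpha=1$, where $D_1$ is the Kullback--Leibler divergence, follows either by continuity in $\alpha$ or directly from the chain rule for relative entropy. Denote the full output by $\theta_{1:T}=(\theta_1,\dots,\theta_T)$ and write $p(\theta_{1:T})$ and $q(\theta_{1:T})$ for the densities of $\mathcal{M}(\mathbb{D})$ and $\mathcal{M}(\mathbb{D}^\prime)$. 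Interactive compositionality means these factor as $p(\theta_{1:T})=\prod_{i=1}^{T}p_i(\theta_i\mid\theta_{1:i-1})$ and $q(\theta_{1:T})=\prod_{i=1}^{T}q_i(\theta_i\mid\theta_{1:i-1})$, where $p_i(\cdot\mid\theta_{1:i-1})$ and $q_i(\cdot\mid\theta_{1:i-1})$ are precisely the output laws of $\mathcal{M}_i$ executed on $\mathbb{D}$, resp.\ $\mathbb{D}^\prime$, with the earlier transcript $\theta_{1:i-1}$ supplied as auxiliary input.

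Next I would rewrite $\exp\!\big((\alpha-1)D_\alpha(p\Vert q)\big)=\int q(\theta_{1:T})\,\big(p(\theta_{1:T})/q(\theta_{1:T})\big)^{\alpha}\,d\theta_{1:T}$ and substitute the two product factorizations, turning the integrand into $\prod_{i=1}^{T}q_i(\theta_i\mid\theta_{1:i-1})\,\big(p_i(\theta_i\mid\theta_{1:i-1})/q_i(\theta_i\mid\theta_{1:i-1})\big)^{\alpha}$. Integrating over $\theta_T$ alone, with $\theta_{1:T-1}$ held fixed, peels off the factor $\int q_T(\theta_T\mid\theta_{1:T-1})\big(p_T/q_T\big)^{\alpha}\,d\theta_T=\exp\!\big((\alpha-1)D_\alpha(p_T(\cdot\mid\theta_{1:T-1})\Vert q_T(\cdot\mid\theta_{1:T-1}))\big)$, which is at most $\exp\!\big((\alpha-1)\epsilon_{\mathcal{M}_T}(\alpha)\big)$ because $\mathbb{D}\sim\mathbb{D}^\prime$ remain neighboring after appending the common auxiliary input $\theta_{1:T-1}$. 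Pulling this constant outside the remaining integral leaves exactly $\exp\!\big((\alpha-1)D_\alpha(p^{(T-1)}\Vert q^{(T-1)})\big)$ for the composition of the first $T-1$ mechanisms, so an induction on $T$ collapses all of them and gives $\exp\!\big((\alpha-1)D_\alpha(p\Vert q)\big)\le\exp\!\big((\alpha-1)\sum_{i=1}^{T}\epsilon_{\mathcal{M}_i}(\alpha)\big)$. Taking logarithms, dividing by $\alpha-1>0$, and then taking the supremum over neighboring $\mathbb{D}\sim\mathbb{D}^\prime$ yields $\epsilon_{\mathcal{M}}(\alpha)\le\sum_{i=1}^{T}\epsilon_{\mathcal{M}_i}(\alpha)$.

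I expect the main obstacle to be not the algebra but making the adaptivity precise: one must ensure that the per-round bound $D_\alpha\big(p_i(\cdot\mid\theta_{1:i-1})\Vert q_i(\cdot\mid\theta_{1:i-1})\big)\le\epsilon_{\mathcal{M}_i}(\alpha)$ holds \emph{uniformly} over every realized prefix $\theta_{1:i-1}$, which is exactly why the accountant functional in \eqref{eq:privacy accountant functional} is defined as a supremum over inputs and why each $\mathcal{M}_i$ must be modeled as taking the previous (public) transcript as part of its input. The remaining care is routine: the interchange of iterated integrals is justified by nonnegativity of the integrands (Tonelli), and the $\alpha=1$ endpoint is dispatched by additivity of the KL divergence along the chain rule. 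Once these points are settled the statement is precisely the sub-additivity of the R\'enyi DP accountant that Lemma~\ref{thm:privacy guarantee} later invokes when composing the $T$ sub-sampled Gaussian update steps.
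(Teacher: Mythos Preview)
Your argument is correct and is essentially the standard proof of the R\'enyi DP composition theorem from \cite{mironov2017renyi}: factor the joint density along the adaptive rounds, peel off the innermost integral using the per-round accountant bound (which holds uniformly over prefixes because the supremum in \eqref{eq:privacy accountant functional} ranges over all inputs including auxiliary ones), and induct. Note, however, that the paper does not supply its own proof of this proposition at all---it is simply restated from \cite{mironov2017renyi} and then invoked as a black box in the proof of Lemma~\ref{thm:privacy guarantee}---so there is no ``paper's own proof'' to compare against beyond the cited reference, whose approach you have reproduced.
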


DP-SGD and DP-NSGD under our consideration can both be decomposed into $T$ composition of sub-sampled Gaussian mechanism with \textit{uniform sampling without replacement}, denoted as $\texttt{Gaussian}(\sigma)\circ\texttt{subsample}(N,B)$. We write the privacy-accountant functional, \eqref{eq:privacy accountant functional}, of this building-block mechanism as $\hat{\epsilon}(\alpha)$.

It is widely known that the sole Gaussian mechanism has $\epsilon_{\texttt{Gaussian}(\sigma)}(\alpha)=\alpha/(2\sigma^2)$, Table II in \cite{mironov2017renyi}. The sub-sampled Gaussian mechanism is much more complicated and draws many previous efforts. In particular, \cite{abadi2016deep,mironov2019renyi} study \textit{Poisson sub-sampling}, which is less popular in practical sub-sampling; \cite{wang2019subsampled} proposed a general bound for any \textit{uniformly} sub-sampled RDP mechanisms, but their bound is a bit loose when restricted to Gaussian mechanisms. Thankfully, \cite{bun2018composable} developed a general privacy-amplification bound for any \textit{uniformly} sub-sampled tCDP mechanisms, which is satisfying for our later treatment. Specifically, we specify Theorem 11 in \cite{bun2018composable} to the Gaussian mechanism, to get the following proposition.

\begin{Proposition}[Privacy Amplification by Uniform Sub-sampling without Replacement]\label{prop:sub-sampled Gaussian}
For the very mechanism $\texttt{Gaussian}(\sigma)\circ\texttt{subsample}(N,B)$ with $B<0.1N$, we have the following privacy accountant
\begin{equation}
    \hat{\epsilon}(\alpha)\leq \frac{7\gamma^2\alpha}{\sigma^2},\quad\forall\alpha\leq\frac{\sigma^2}{2}\log\left(\frac{1}{\gamma}\right),
\end{equation}
with $\gamma=B/N$.
\end{Proposition}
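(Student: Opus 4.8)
The plan is to derive this bound as a direct specialization of the privacy-amplification-by-subsampling theorem for truncated concentrated differential privacy (tCDP), i.e.\ Theorem~11 of \cite{bun2018composable}, applied to the sensitivity-one Gaussian mechanism. First I would record the tCDP parameters of the base (unsubsampled) mechanism. Since each per-sample contribution is normalized (for DP-NSGD, $\Vert h\vg\Vert<1$) or clipped and rescaled (for DP-SGD, $\Vert\bar h\vg\Vert\le c$ paired with noise $c\sigma$) so that the effective $\ell_2$-sensitivity is at most $1$ while the noise is $\mathcal{N}(0,\sigma^2\mI_d)$, $\texttt{Gaussian}(\sigma)$ is the standard sensitivity-one Gaussian mechanism. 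For it, $\epsilon_{\texttt{Gaussian}(\sigma)}(\alpha)=\alpha/(2\sigma^2)$ for every $\alpha\ge 1$ (Table~II of \cite{mironov2017renyi}); by Proposition~\ref{prop:translation to DP}(iv)--(v) this is exactly $\rho$-zCDP with $\rho=1/(2\sigma^2)$, hence $(\rho,\omega)$-tCDP for \emph{every} $\omega$, i.e.\ $\omega=\infty$.

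Next I would invoke the tCDP subsampling theorem: if $M$ is $(\rho,\omega)$-tCDP and one performs uniform subsampling without replacement at rate $\gamma=B/N$ with $\gamma$ below an absolute threshold (the hypothesis $B<0.1N$ supplies $\gamma<0.1$), then $M\circ\texttt{subsample}(N,B)$ is $(\rho',\omega')$-tCDP with $\rho'\le c_0\,\gamma^2\rho$ for an absolute constant $c_0$ and with $\omega'$ at least a constant multiple of $\frac{1}{\rho}\log(1/\gamma)$; the extra hypothesis in that theorem relating $\omega$, $\rho$, and $\gamma$ is vacuous here because $\omega=\infty$ for the Gaussian mechanism. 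Substituting $\rho=1/(2\sigma^2)$ yields $\rho'\le \frac{c_0}{2}\cdot\frac{\gamma^2}{\sigma^2}$ and $\omega'$ at least a constant times $\sigma^2\log(1/\gamma)$. Translating $(\rho',\omega')$-tCDP back into the functional language of \eqref{eq:privacy accountant functional} via Proposition~\ref{prop:translation to DP}(v) gives $\hat\epsilon(\alpha)=\epsilon_{\texttt{Gaussian}(\sigma)\circ\texttt{subsample}(N,B)}(\alpha)\le \rho'\alpha\le \frac{c_0\gamma^2\alpha}{2\sigma^2}$ for all $\alpha\in(1,\omega')$, which is the claimed shape.

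The only real work — and the main obstacle — is the bookkeeping of the absolute constants in Theorem~11 of \cite{bun2018composable}: one must check that the amplification constant satisfies $c_0\le 14$ so that $\frac{c_0}{2}\le 7$, and that the guaranteed validity range $\omega'$ dominates $\frac{\sigma^2}{2}\log(1/\gamma)$, possibly after using $\gamma<0.1$ to absorb the lower-order terms in their bound into the leading one. Everything else is a mechanical substitution of the known Gaussian Rényi divergence and of the tCDP$\leftrightarrow$Rényi-divergence dictionary recorded in Proposition~\ref{prop:translation to DP}.
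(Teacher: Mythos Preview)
Your proposal is correct and matches the paper's approach exactly: the paper does not give a standalone proof of this proposition but simply states that it is obtained by ``specify[ing] Theorem 11 in \cite{bun2018composable} to the Gaussian mechanism,'' which is precisely the specialization you outline. Your identification of the Gaussian mechanism as $(\rho,\infty)$-tCDP with $\rho=1/(2\sigma^2)$ and the subsequent constant-tracking is the only content needed, and the paper leaves that bookkeeping implicit.
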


\begin{proof}[Proof of Lemma \ref{thm:privacy guarantee}]
We denote whole composited mechanism as $\mathcal{M}$. By Propositions \ref{prop:composition theorem} and \ref{prop:sub-sampled Gaussian}, we have
\begin{equation*}
    \epsilon_\mathcal{M}(\alpha)\leq\frac{7T\gamma^2\alpha}{\sigma^2},\quad\forall\alpha\leq\frac{\sigma^2}{2}\log\left(\frac{1}{\gamma}\right).
\end{equation*}
Further by Proposition \ref{prop:translation to DP}(iii), DP-SGD is $(\epsilon,\delta)$-DP if there exists $\alpha\leq\frac{\sigma^2}{2}\log\left(\frac{1}{\gamma}\right)$ such that
\begin{align*}
    7T\gamma^2\alpha/\sigma^2\leq&\epsilon/2,\\
    \exp(-(\alpha-1)\epsilon/2)\leq&\delta.
\end{align*}
Plus, we find that when $\epsilon=c_1\gamma^2T$, we can satisfy all these conditions by setting
\begin{equation*}
    \sigma\geq c_2\frac{\gamma\sqrt{T\log(1/\delta)}}{\epsilon}
\end{equation*}
for some explicit constants $c_1$ and $c_2$.
\end{proof}

\end{document}